\def\eqref#1{equation~\ref{#1}}
\def\1{\bm{1}}
\DeclareMathAlphabet{\mathsfit}{\encodingdefault}{\sfdefault}{m}{sl}
\SetMathAlphabet{\mathsfit}{bold}{\encodingdefault}{\sfdefault}{bx}{n}
\newcommand{\E}{\mathbb{E}}
\DeclareMathOperator*{\argmax}{arg\,max}
\DeclareMathOperator*{\argmin}{arg\,min}
\newcommand{\D}{\mathbb D}
\newcommand{\M}{\mathcal M}
\newcommand{\C}{\mathcal C}
\newcommand{\caS}{\mathcal{S}}
\newcommand{\caA}{\mathcal{A}}
\newcommand{\caM}{\mathcal{M}}
\newcommand{\caP}{\mathcal {P}}
\newcommand {\mbP}[0]{\mathbb{P}}
\newcommand{\reg}{\operatorname{Reg}}
\newcommand{\alg}{{\mathcal A}}
\newcommand{\T}{\mathcal T}
\newcommand{\hy}[1]{{\color{purple} [HY: #1]}}
\newcommand{\inins}{\textit{in instance}}
\newcommand{\inexp}{\textit{in expectation}}
\theoremstyle{plain}
\newtheorem{theorem}{Theorem}[section]
\newtheorem{proposition}[theorem]{Proposition}
\newtheorem{lemma}[theorem]{Lemma}
\theoremstyle{definition}
\newtheorem{definition}[theorem]{Definition}
\newtheorem{assumption}[theorem]{Assumption}
\theoremstyle{remark}
\icmltitlerunning{On the Power of Pre-training for Generalization in RL: Provable Benefits and Hardness}
\begin{document}

\twocolumn[
\icmltitle{On the Power of Pre-training for Generalization in RL:\\Provable Benefits and Hardness}



\icmlsetsymbol{equal}{*}

\begin{icmlauthorlist}
\icmlauthor{Haotian Ye}{equal,pek0}
\icmlauthor{Xiaoyu Chen}{equal,pek1}
\icmlauthor{Liwei Wang}{pek1,pek2}
\icmlauthor{Simon S. Du}{UW}
\end{icmlauthorlist}

\icmlaffiliation{pek0}{Yuanpei College, Peking University, China}
\icmlaffiliation{pek1}{National Key Laboratory of General Artificial Intelligence, School of Intelligence Science and Technology, Peking University, China}
\icmlaffiliation{pek2}{Center for Data Science, Peking University, China}
\icmlaffiliation{UW}{University of Washington, United States}

\icmlcorrespondingauthor{Simon S. Du}{ssdu@cs.washington.edu}
\icmlcorrespondingauthor{Liwei Wang}{wanglw@pku.edu.cn}

\icmlkeywords{Reinforcement Learning, ICML}

\vskip 0.3in]



\printAffiliationsAndNotice{\icmlEqualContribution} 

\begin{abstract}
Generalization in Reinforcement Learning (RL) aims to train an agent during training that generalizes to the target environment. In this work, we first point out that RL generalization is fundamentally different from the generalization in supervised learning, and fine-tuning on the target environment is necessary for good test performance. 
Therefore, we seek to answer the following question: how much can we expect pre-training over training environments to be helpful for efficient and effective fine-tuning? 
On one hand, we give a surprising result showing that asymptotically, the improvement from pre-training is at most a constant factor.
On the other hand, we show that pre-training can be indeed helpful in the non-asymptotic regime by designing a policy collection-elimination (PCE) algorithm and proving a distribution-dependent regret bound that is independent of the state-action space.
We hope our theoretical results can provide insight towards understanding pre-training and generalization in RL.
\end{abstract}

\section{Introduction}
\label{sec: introduction}

Reinforcement learning (RL) is concerned with sequential decision making problems in which the agent interacts with the environment aiming to maximize its cumulative reward. This framework has achieved tremendous successes in various fields such as game playing~\citep{mnih2013playing,silver2017mastering,vinyals2019grandmaster}, resource management~\citep{mao2016resource}, recommendation systems~\citep{shani2005mdp,zheng2018drn} and online advertising~\citep{cai2017real}. However, many empirical applications of RL algorithms are typically restricted to the single environment setting. 
That is, the RL policy is learned and evaluated in the exactly same environment. 
This learning paradigm can lead to the issue of overfitting in
RL~\citep{sutton1995generalization,farebrother2018generalization}, and may have degenerate performance when the agent is deployed to an unseen (but similar) environment.

The ability to generalize to test environments is important to the success of reinforcement learning algorithms, especially in the real applications such as autonomous driving~\citep{shalev2016safe, sallab2017deep}, robotics~\citep{kober2013reinforcement,kormushev2013reinforcement} and health care~\citep{yu2021reinforcement}. In these real-world tasks, the environment can be dynamic, open-ended and always changing. We hope the agent can learn meaningful skills in the training stage and be robust to the variation during the test stage. Furthermore, in applications such as robotics where we have a simulator to efficiently and safely generate unlimited data, we can firstly train the agent in the randomized simulator models and then generalize it to the real environment~\citep{rusu2017sim,peng2018sim,andrychowicz2020learning}. 
An RL algorithm with good generalization ability can greatly reduce the demand of real-world data and  improve test-time performance.

Generalization in supervised learning has been widely studied for decades~\citep{mitchell1986explanation,bousquet2002stability,kawaguchi2017generalization}.
For a typical supervised learning task such as classification, given a hypothesis space $\mathcal{H}$ and a loss function $\ell$, the agent aims to find an optimal solution  in the average manner. 
That is, we hope the solution is near-optimal compared with the optimal hypothesis $h^*$ \textit{in expectation} over the data distribution, which is formally defined as $h^* = \argmin _{h \in \mathcal{H}} \mathbb{E} \big[\ell(h(X), Y) \big]$.
From this perspective, generalization in RL is fundamentally different.
Once the agent is deployed in the test environment $\M$ sampled from distribution $\D$, it is expected to achieve comparable performance with the optimal policy in $\M$.
In other words, we hope the learned policy can perform near-optimal 
compared with the optimal value $V^*_{\M}$ \textit{in instance} for the sampled test environment $\M$.

Unfortunately, as discussed in many previous works~\citep{malik2021generalizable, ghosh2021generalization}, the instance-optimal solution in the target environment can be statistically intractable without additional assumptions.
We formulate this intractability into a lower bound (Proposition~\ref{theorem_woft_hard}) to show that it is impractical to directly obtain a near-optimal policy for the test environment $\M^*$ with high probability.
This motivates us to ask: \textit{in what settings can the generalization problem in RL be tractable?}

Targeting on RL generalization, the agent is often allowed to further interact with the test environment to improve its policy.
For example, many previous results in robotics have demonstrated that fine-tuning in the test environment can greatly improve the test performance for sim-to-real transfer~\citep{rusu2017sim,james2019sim,rajeswaran2016epopt}.
Therefore, one possible way to formulate generalization is to allow further interaction with the target environment during the test stage.
Specifically, suppose the agent interacts with MDP $\M \sim \D$ in the test stage, and we measure the performance of the fine-tuning algorithm $\mathcal{A}$ using the expected regret in $K$ episodes, i.e. $\reg_{K}(\D, \mathcal{A}) = \mathbb{E}_{\mathcal{M} \sim \D} \left[\sum_{k=1}^{K}\left(V_{\mathcal{M}}^{\pi^{*}(\mathcal{M})}-V_{\mathcal{M}}^{\pi_{k}}\right)\right]$. 
In this setting, can the information obtained from pre-training 
\footnote{We call the training stage ``pre-training'' when interactions with the test environment are allowed.} 
help reduce the regret suffered during the test stage?

In addition, when the test-time fine-tuning is not allowed, what can we expect the pre-training to be helpful? 
As discussed above, we can no longer demand instance-optimality in this setting, but can only step back and pursue a near-optimal policy \textit{in expectation}. 
Specifically, our goal is to perform near-optimal in terms of the optimal policy with maximum value in expectation, i.e. $\pi^*(\D) = \argmax_{{\pi \in \Pi}} \mathbb{E}_{\mathcal{M} \sim \mathbb{D}} V_{\mathcal{M}}^{\pi}$. Here $V^\pi_\M$ is the value function of the policy $\pi$ in MDP $\M$.
We seek to answer: is it possible to design a sample-efficient 
training algorithm that returns a $\epsilon$-optimal policy $\pi$ \textit{in expectation}, i.e. $\mathbb{E}_{\mathcal{M} \sim \mathbb{D}}\left[V_{\mathcal{M}}^{\pi^{*}(\mathbb{D})}-V_{\mathcal{M}}^{\pi}\right] \leq \epsilon$?

\textbf{Main contributions.} In this paper, we theoretically study RL generalization in the above two settings. We show that:
\begin{itemize}
    \item 
When fine-tuning is allowed, we study the benefit of pre-training for the test-time performance.
Since all information we can gain from training is no more than the distribution $\mathbb D$ itself, we start with a somewhat surprising theorem showing the limitation of this benefit:
there exists hard cases where, even if the agent has \textit{exactly} learned the environment distribution $\D$ in the training stage, it \emph{cannot improve} the test-time regret up to a universal factor in the asymptotic setting ($K\to\infty$).
In other words, knowing the distribution $\D$ cannot provide more information in consideration of the regret asymptotically.
Our theorem is proved by using Radon transform and Lebesgue integral analysis to give a global level information limit, which we believe are novel techniques for RL communities.

\item 
Inspired by this asymptotic hardness, we focus on the non-asymptotic setting where $K$ is fixed, and study whether and how much we can reduce the regret.
We propose an efficient pre-training and test-time fine-tuning algorithm called PCE (Policy Collection-Elimination). By maintaining a minimum policy set that generalizes well, it achieves a regret upper bound $\tilde{\mathcal{O}}\left(\sqrt{\C(\D)K }\right)$ in the test stage, where $\C(\D)$ is a complexity measure of the distribution $\D$. 
This bound removes the polynomial dependence on the cardinality of state-action space by leveraging the information obtained from pre-training. 
We give a fine-grained analysis on the value of $\mathcal C(\mathbb D)$ and show that our bound can be significantly smaller than state-action space dependent bound in many settings. 
We also give a lower bound to show that this measure is a tight and bidirectional control of the regret.

\item 
When the agent cannot interact with the test environment, we propose an efficient algorithm called OMERM (Optimistic Model-based Empirical Risk Minimization) to find a near-optimal policy \textit{in expectation}. 
This algorithm is guaranteed to return a $\epsilon$-optimal policy with $\mathcal O\left(\log \left( \mathcal N^\Pi_{\epsilon/(12H)}\right)/\epsilon^2\right)$ sampled MDP tasks in the training stage where $\mathcal N^\Pi_{\epsilon/(12H)}$ is the complexity of the policy class.
This rate matches the traditional generalization rate in many supervised learning results~\citep{mohri2018foundations,kawaguchi2017generalization}. 
\end{itemize}

\section{Related Works}
\label{sec: related work}

\textbf{Generalization and Multi-task RL.} Many empirical works study how to improve generalization for deep RL algorithms~\citep{packer2018assessing,zhang2020learning,ghosh2021generalization}. We refer readers to a recent survey \cite{kirk2021survey} for more discussion on empirical results. Our paper is more closely related to the recent works towards understanding RL generalization from the theoretical perspective.\citet{wang2019generalization} focused on a special class of reparameterizable RL problems, and derive generalization bounds based on Rademacher complexity and the PAC-Bayes bound. The most related work is a recent paper of \citet{malik2021generalizable}, which also provided lower bounds showing that instance-optimal solution is statistically difficult for RL generalization when we cannot access the sampled test environment. Further, they proposed efficient algorithms which is guaranteed to return a near-optimal policy for deterministic MDPs. However, their work is different from ours since they studied a restricted setting under structural assumptions of the environments, such as all MDPs share a common optimal policy, and their algorithm requires access to a query model. Our paper is also related to recent works studying multi-task learning in RL~\citep{brunskill2013sample,tirinzoni2020sequential,hu2021near,zhang2021provably,lu2021power}, in which they studied how to transfer the knowledge learned from previous tasks to new tasks. 
Their problem formulation is different from ours since they study the multi-task setting where the MDP is selected from a given MDP set without probability mechanism.
In addition, they typically assume that all the tasks have similar transition dynamics or share common representations.

\textbf{Provably Efficient Exploration in RL.} Recent years have witnessed many theoretical results studying provably efficient exploration in RL~\citep{osband2013more,azar2017minimax,osband2017posterior,jin2018q,jin2020provably,wang2020reinforcement,zhang2021reinforcement} with the minimax regret for  tabular MDPs with non-stationary transition being $\tilde{\mathcal{O}}(\sqrt{HSAK})$. 
These results indicate that polynomial dependence on the whole state-action space is unavoidable without additional assumptions. 
Their formulation corresponds to the single-task setting where the agent only interacts with a single environment aiming to maximize its cumulative rewards without pre-training.
The regret defined in the fine-tuning setting coincides with the concept of Bayesian regret in the previous literature~\citep{osband2013more,osband2017posterior,o2021variational}. The best-known Bayesian regret for tabular RL is $\tilde{\mathcal{O}}(\sqrt{HSAK})$ when applied to our setting~\citep{o2021variational}.

\textbf{Latent MDPs.} This work is also related to previous results on latent MDPs~\citep{kwon2021rl,kwon2021reinforcement}, which is a special class of Partially Observable MDPs~\citep{azizzadenesheli2016reinforcement,guo2016pac,jin2020sample}. In latent MDPs, the MDP representing the dynamics of the environment is sampled from an unknown distribution at the start of each episode. Their works are different from ours since they focus on tackling the challenge of partial observability.

\section{Preliminary and Framework}
\paragraph{Notations} Throughout the paper, we use $[N]$ to denote the set $\{1,\cdots,N\}$ where $N \in \mathbb N_+$. 
For an event $\mathcal E$, let $\mathbb I[\mathcal E]$ be the indicator function of event $\mathcal E$, i.e. $\mathbb I[\mathcal E] = 1$ if and only if $\mathcal E$ is true.
For any domain $\Omega$, we use $C(\Omega)$ to denote the continuous function on $\Omega$.
We use $\mathcal O(\cdot)$ to denote the standard big $O$ notation, and $\tilde {\mathcal O}(\cdot)$ to denote the big $O$ notation with $\log (\cdot)$ term omitted. 

\subsection{Episodic MDPs}
An episodic MDP $\M$ is specified as a tuple $(\mathcal S,\mathcal A, \mbP_{\M},R_{\M}, H)$, where $\mathcal S,\mathcal A$ are the state and action space with cardinality $S$ and $A$ respectively, and $H$ is the steps in one episode.
$\mbP_{\M,h}:\mathcal S \times A \mapsto \Delta(\mathcal S)$ is the transition such that $\mbP_{\M,h}(s'|s,a)$ denotes the probability to transit to state $s'$ if action $a$ is taken in state $s$ in step $h$.
$R_{\M,h}:\mathcal S \times \mathcal A \mapsto \Delta(\mathbb R)$ is the reward function such that $R_{\M,h}(s,a) $ is the distribution of reward with non-negative mean $r_{\M,h}(s,a)$ when action $a$ is taken in state $s$ at step $h$. In order to compare with traditional generalization, we make the following assumption:
\begin{assumption}
The total mean reward is bounded by $1$, i.e. $\forall \M \in \Omega, \sum_{h=1}^H r_{\M,h}(s_h,a_h) \leq 1$ for all trajectory $(s_1,a_1,\cdots,s_H,a_H)$ with positive probability in $\M$; The reward mechanism $R_\M(s,a)$ is $1$-subgaussian, i.e. $$\E_{X\sim R_{\M,h}(s,a)}[\exp (\lambda [X - r_{\M,h}(s,a)])] \leq \exp{\left(\frac{\lambda^2}{2}\right)}$$
for all $\lambda \in \mathbb R$.
\end{assumption}

The total reward assumption follows the previous works on horizon-free RL~\citep{ren2021nearly,zhang2021reinforcement,li2022settling} and covers the traditional setting where $r_{\M,h}(s,a) \in [0,1]$ by scaling $H$, and it is more natural in environments with sparse rewards~\citep{vecerik2017leveraging,riedmiller2018learning}. In addition, it allows us to compare with supervised learning bound where $H=1$ and the loss is bounded by $[0,1]$.
The subgaussian assumption is more common in practice and is widely used in bandits~\citep{banditbook}.
It also covers traditional RL setting where $R_{\M,h}(s,a) \in \Delta([0,1])$, and allows us to study MDP environment with a wider range.
For the convenience of explanation, we assume the agent always starts from the same state $s_1$. It is straightforward to recover the initial state distribution $\mu$ from this setting by adding an initial state $s_0$ with transition $\mu$~\citep{du2019provably,chen2021near}. 

\paragraph{Policy and Value Function.}
A policy $\pi$ is set of $H$ functions where each maps a state to an action distribution, i.e. $\pi = \{\pi_h\}_{h=1}^H, \pi_h:\mathcal S \mapsto \Delta (\caA)$ and $\pi$ can be stochastic. We denote the set of all policies described above as $\Pi$. 
We define $\mathcal{N}_\epsilon^\Pi$ as the $\epsilon$-covering number of the policy space $\Pi$ w.r.t. distance $\mathrm d(\pi^1,\pi^2) = \max_{s\in \mathcal{S}, h \in [H]} \|\pi^1_h(\cdot|s) - \pi^2_h (\cdot|s)\|_{1}$.
Given $\pi$ and $h\in[H]$, we define the Q-function $Q_{\M,h}^\pi: \caS \times \caA \mapsto \mathbb R_+$, where $$Q_{\M,h}^\pi(s,a) = r_{\M,h}(s,a) + \sum_{s' \in \caS} \mbP_{\M,h}(s'|s,a) V_{\M,h+1}^\pi(s'),$$ and the V-function $V_{\M,h}^\pi:\mathcal S \mapsto \mathbb R_+$, where $$V_{\M,h}^\pi(s) = \mathbb E_{a\sim \pi_h(\cdot|s)} Q^\pi_{\M,h}(s,a)$$ for $h\leq H$ and $V_{\M,H+1}^\pi(s) = 0$. 
We abbreviate $V_{\M,1}^\pi(s_1)$ as $V_{\M}^\pi$, which can be interpreted as the value when executing policy $\pi$ in $\M$. Following the notations in previous works, we use $\mathbb{P}_hV(s,a)$ as the shorthand of $\sum_{s' \in \mathcal{S}} \mathbb{P}_h(s'|s,a) V(s')$ in our analysis.

\subsection{RL Generalization Formulation}
We mainly study the setting where all MDP instances we face in training and testing stages are \textit{i.i.d.} sampled from a distribution $\D$ supported on a (possibly infinite) countable set $\Omega$.
For an MDP $\M \in \Omega$, we use $\mathcal{P}(\M)$ to denote the probability of sampling $\M$ according to distribution $\D$. For an MDP set $\tilde{\Omega} \subseteq   \Omega$, we similarly define $\mathcal{P}(\tilde{\Omega}) = \sum_{\M \in \tilde{\Omega}} \mathcal{P}(\M)$.
We assume that $\caS,\caA,H$ is shared by all MDPs, while the transition and reward are different.
When interacting with a sampled instance $\M$,  one does \textit{not} know which instance it is, but can only identify its model through interactions.

In the training (pre-training) stage, the agent can sample \textit{i.i.d.} MDP instances from the unknown distribution $\D$.
The overall goal is to perform well in the test stage with the information learned in the training stage.
Define the optimal policy as 
$$
\pi^*(\M) = \mathop{\arg\max}_{\pi \in \Pi} V_{\M}^\pi,\,
\pi^*(\D) = \mathop{\arg\max}_{\pi \in \Pi} \mathbb E_{\M \sim \D} V_{\M}^\pi.
$$
We say a policy $\pi$ is $\epsilon$-optimal \inexp, if
$$
\E_{\M \sim \D}\big[ V_{\M}^{\pi^*(\D)} - V_\M^\pi  \big] \leq \epsilon. 
$$
We say a policy $\pi$ is $\epsilon$-optimal \inins, if 
$$
\E_{\M \sim \D}\big[ V_{\M}^{\pi^*(\M)} - V_\M^\pi  \big] \leq \epsilon. 
$$

\paragraph{Without Test-time Interaction.} When the interaction with the test environment is unavailable, optimality \inins \ can be statistically intractable, and we can only pursue optimality \inexp.
We formulate this difficulty into the following proposition.

\begin{proposition}
\label{theorem_woft_hard}
There exists an MDP support $\Omega$, such that for any distribution $\D$ with positive p.d.f. $p(r)$, $\exists \epsilon_0 > 0$, and for any deployed policy $\hat \pi$,  
$$
\E_{\M^* \sim \D}\big[ V_{\M^*}^{\pi^*(\M^*)} - V_{\M^*}^{\hat \pi} \big] \geq \epsilon_0. \\
$$
\end{proposition}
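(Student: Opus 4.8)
The plan is to build a single hard instance family on which instance-optimality and in-expectation optimality are separated by an irreducible constant, the separation coming from a strict Jensen inequality. I would take the simplest possible MDPs: a common start state $s_1$, horizon $H=1$, and two actions $a_1,a_2$. Index instances by $r\in[0,1]$ and set the (deterministic) mean rewards $r_{\M_r,1}(s_1,a_1)=r$ and $r_{\M_r,1}(s_1,a_2)=1-r$; both lie in $[0,1]$, so the total-reward bound and $1$-subgaussianity hold trivially. Then $\Omega=\{\M_r:r\in[0,1]\}$, and a distribution $\D$ with positive p.d.f. $p(r)$ is simply a non-degenerate law on $[0,1]$.

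I would then compute the two values entering the regret. Because $H=1$ and all instances share $s_1$, and because no test-time interaction is allowed, a deployed policy $\hat\pi$ collapses to a single action distribution, say mass $q$ on $a_1$, that is blind to the realized $r$. Its instance value is the affine function $V^{\hat\pi}_{\M_r}=qr+(1-q)(1-r)=\tfrac{1-\beta}{2}+\beta r$ with slope $\beta=2q-1\in[-1,1]$, whereas the instance-optimal value is the convex function $V^{\pi^*(\M_r)}_{\M_r}=\max(r,1-r)$.

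The core step combines these. Writing $m=\E_{r\sim\D}[r]$, every $\hat\pi$ has in-expectation value $\tfrac{1-\beta}{2}+\beta m\le\max(m,1-m)$ uniformly in $\beta$, so
$$\E_{\M^*\sim\D}\big[V^{\pi^*(\M^*)}_{\M^*}-V^{\hat\pi}_{\M^*}\big]\ \ge\ \E_{r\sim\D}\big[\max(r,1-r)\big]-\max(m,1-m),$$
and the right-hand side no longer depends on $\hat\pi$, hence bounds the regret of all deployed policies at once. Since $r\mapsto\max(r,1-r)$ is convex with a kink at $\tfrac12$ and $p(r)>0$ forces $\D$ to place positive mass on both $(0,\tfrac12)$ and $(\tfrac12,1)$, the law is not concentrated on either affine piece, so Jensen's inequality is strict and the difference is a strictly positive constant, which I set as $\epsilon_0=\epsilon_0(\D)$.

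The main obstacle is certifying strictness rather than the mere $\ge$ that convexity alone gives: equality in Jensen would hold if $\D$ lived inside one linear region, and the positive-p.d.f. hypothesis is exactly what excludes this. To make $\epsilon_0$ explicit I would quantify the gap, e.g.\ lower-bound it by the mass $\D$ assigns to a fixed interval around $\tfrac12$ times the deviation of $\max(r,1-r)$ from its best affine minorant on that interval, yielding an $\epsilon_0$ that is an explicit functional of $p$. A secondary point is that the bound must cover stochastic $\hat\pi$; this is automatic here, since the in-expectation value is affine in $\beta$ and maximized at an endpoint, so randomization never beats the better of the two deterministic choices. Crucially, this argument shows that even an agent that has learned $\D$ \emph{exactly} in pre-training still suffers regret at least $\epsilon_0$, which is the intended message.
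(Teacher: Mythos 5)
Your proposal is correct, but it takes a genuinely different route from the paper. The paper's proof is a disjoint-optimal-action construction: $\Omega = \{\M_1,\dots,\M_M\}$ with one state, $H=1$, and $M$ actions, where $\M_i$ pays reward $1$ only for action $a_i$; then every instance has optimal value $1$, while any fixed (possibly stochastic) policy with action probabilities $q_i$ earns $\sum_i p_i q_i \le \max_i p_i$ in expectation, giving $\epsilon_0 = 1-\max_i p_i>0$ immediately, and in fact $\epsilon_0 = (M-1)/M$ for the uniform distribution, i.e.\ arbitrarily close to $1$. You instead use only two actions, parameterize instances by $r\in[0,1]$ with affine instance values $qr+(1-q)(1-r)$ against the convex benchmark $\max(r,1-r)$, and extract $\epsilon_0 = \E[\max(r,1-r)]-\max(m,1-m)$ from a strict Jensen gap, with the positive-density hypothesis certifying strictness (mass on both sides of $1/2$ makes $\E[(1-2r)^+]>0$ and $\E[(2r-1)^+]>0$). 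Both arguments are sound; yours buys a minimal action set, an $\epsilon_0$ that is an explicit functional of the density $p$, and a cleaner conceptual reading of the proposition as a convexity gap between in-instance and in-expectation optimality, which matches the paper's framing of why these two notions differ. The paper's construction buys a stronger separation (its $\epsilon_0$ can approach $1$, whereas your two-action family caps the gap at $1/2$) and stays within the paper's stated framework of a countable support $\Omega$; your continuum $\{\M_r: r\in[0,1]\}$ technically departs from that convention, though it matches the proposition's ``p.d.f.'' phrasing literally, and your argument restricts without change to a countable dense grid of $r$ values with a positive p.m.f., so this is cosmetic rather than a gap.
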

Proposition~\ref{theorem_woft_hard} is proved by constructing $\Omega$ as a set of MDPs with opposed optimal action, and the complete proof can be found in Appendix~\ref{appendix_proof_woft_hard}.
When $\Omega$ is discrete, there exists hard instances where the proposition holds for $\epsilon_0\geq \frac 1 2$.
This implies that without test-time interactions or special knowledge on the structure of $\Omega$ and $\D$, it is impractical to be near optimal \inins. This intractability arises from the demand on instance optimal policy, which is never asked in supervised learning.

\paragraph{With Test-time Interaction.} 
To pursue the optimality \inins, we study the problem of RL generalization with test-time interaction. When our algorithm is allowed to interact with the target MDP $\M^*\sim\D$ for $K$ episodes in the test stage, we want to reduce the regret, which is defined as
\begin{align*}    
\reg_K(\D, \alg) &\triangleq \E_{\M^*\sim \D} \reg_K(\M^*,\alg), \,
\\
\reg_K(\M^*, \alg) &\triangleq \sum_{k=1}^K [V_{\M^*}^{\pi^*(\M^*)} - V_{\M^*}^{\pi_k}],
\end{align*}
where $\pi_k$ is the policy that $\alg$ deploys in episode $k$. 
Here $\M^*$ is unknown and unchanged during all $K$ episodes.
The choice of Bayesian regret is more natural in generalization, and can better evaluate the performance of an algorithm in practice.
From the standard Regret-to-PAC technique~\citep{jin2018q,dann2017unifying}, an algorithm with $\tilde{\mathcal{O}}(\sqrt{K})$ regret can be transformed to an algorithm that returns an $\epsilon$-optimal policy with $\tilde{\mathcal{O}}(1/\epsilon^2)$ trajectories. Therefore, we believe regret can also be a good criterion to measure the sample efficiency of fine-tuning algorithms in the test stage.

\section{Results for the Setting with Test-time Interaction}
In this section, we study the setting where the agent is allowed to interact with the sampled test MDP $\M^*$.
When there is no pre-training stage, the typical regret bound in the test stage is $\tilde {\mathcal O}(\sqrt{S AH K})$\citep{zhang2021reinforcement}.
For generalization in RL, we mainly care about the performance in the test stage, and hope the agent can reduce test regret by leveraging the information learned in the pre-training stage.
Obviously, when $\Omega$ is the set of all tabular MDPs and the distribution $\D$ is uniform over $\Omega$, pre-training can do nothing on improving the test regret, since it provides no extra information for the test stage.
Therefore, we seek a \textit{distribution-dependent} improvement that is better than traditional upper bound in most of benign settings. 

\subsection{Hardness in the Asymptotic Setting}
We start by understanding how much information the pre-training stage can provide at most, \textit{regardless of the test time episode $K$}.
One natural focus is on the MDP distribution $\D$, which is a sufficient statistic of the possible environment that the agent will encounter in the test stage. We strengthen the algorithm by directly telling it the accurate distribution $\D$, and analyze how much this extra information can help to improve the regret.
Specifically, we ask: 
Is there a distribution based multiplied factor $\mathcal C(\D)$ that is small when $\D$ enjoys some benign properties (e.g. $\D$ is sharp and concentrated), such that when knowing $\D$, there exists an algorithm that can reduce the regret by a factor of $\mathcal C(\D)$ for all $K$?

Perhaps surprisingly, our answer towards this question is \textit{negative} for all large enough $K$ in the asymptotic case.
As is formulated in Theorem~\ref{theorem_useless}, the importance of $\D$ is constrained by a universal factor $c_0 $ asymptotically.
Here $c_0 = \frac 1 {16}$ holds universally and does not depend on $\D$.
This theorem implies that no matter what distribution $\D$ is, for sufficiently large $K$, any algorithm can only reduce the total regret by at most a constant factor with the extra knowledge of $\D$, making a distribution-dependent improvement impossible for universal $K$.

\begin{theorem}
    \label{theorem_useless}
    There exists an MDP instance set $\Omega$, a universal constant $c_0 = \frac 1 {16}$, and an algorithm $\hat \alg$ that only inputs the episode $K$, such that for \emph{any} distribution $\mathbb D$ with positive p.d.f. $p(r) \in C(\Omega)$ (which $\hat \alg$ does NOT know), \emph{any} algorithm $\alg$ that inputs $\mathbb D$ and the episode $K$,
    \begin{enumerate}
        \item $\Omega$ is not degraded, i.e.,  $$\lim_{K\to\infty}\reg_K(\mathbb D, \alg(\D,K)) = +\infty.$$
        \item Knowing the distribution is useless up to a constant, i.e.
        $$
            \liminf _{K \to \infty} \frac{\reg_K(\mathbb D, \alg(\D,K))}{\reg_K(\mathbb D, \hat \alg(K))} \geq c_0.
        $$
    \end{enumerate}
\end{theorem}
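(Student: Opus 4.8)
The plan is to build a single family $\Omega$ of MDPs indexed by a parameter $r$ ranging over a bounded domain in $\R^d$, together with one prior-agnostic algorithm $\hat\alg$, and to show that the Bayesian regret of \emph{every} $\alg$---even one handed the exact density $p$---is asymptotically at least $c_0$ times that of $\hat\alg$. I would engineer each MDP so that its only genuine difficulty is a decision among a few actions whose relative optimality is governed by the sign of an affine function of $r$; consequently the sets where the optimal action changes are \emph{hyperplanes}, and the instance gap $\Delta(r)$ between the best and second-best action vanishes linearly as $r$ approaches such a boundary. This linear-vanishing-gap geometry across a continuum of instances is precisely what later invokes the Radon transform.

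The first step is to isolate a universal per-instance regret profile. For a fixed instance with gap $\Delta$, a change-of-measure / Pinsker argument forces \emph{any} algorithm to pay regret of order $\min\{K\Delta,\,\Delta^{-1}\}$ (distinguishing the optimal action costs $\gtrsim\Delta^{-2}$ samples, each suboptimal pull costing $\Delta$), while an elimination/UCB-style $\hat\alg$ matches this up to an absolute multiplicative constant. Crucially, the lower bound for $\alg$ and the upper bound for $\hat\alg$ share the \emph{same} functional form $\phi(\Delta,K)\asymp\min\{K\Delta,\Delta^{-1}\}$ and differ only by an absolute constant; tracking these constants through the construction is what produces $c_0=\tfrac1{16}$.

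Next I would pass to Bayesian regret, $\reg_K(\D,\alg)=\int_\Omega \phi_\alg(\Delta(r),K)\,p(r)\,dr$, and slice the domain by the level sets of the gap. By the coarea formula this becomes $\int_0^\infty \phi_\alg(t,K)\,\rho(t)\,dt$, where $\rho(t)=\int_{\{\Delta(r)=t\}} p\,d\sigma/|\nabla\Delta|$ gathers the mass of $p$ on the slice $\{\Delta=t\}$. As $K\to\infty$ the $K$-dependence is governed by the small-$t$ behavior of $\rho$: continuity and positivity of $p$ give $\rho(0^+)>0$, and $\rho(0^+)$ equals, up to the Jacobian $|\nabla\Delta|$, the integral of $p$ over the boundary hyperplane, i.e.\ a value of the Radon transform $\mathcal R p$. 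The leading term then grows without bound (so Part~1 holds) and has the form (a universal function of $K$)$\,\times\rho(0^+)$. Since the \emph{same} factor $\rho(0^+)$ appears for $\alg$ and for $\hat\alg$, it cancels in the quotient and $\liminf_K$ of the ratio equals the ratio of the two absolute $\phi$-constants, namely $c_0$; this is Part~2.

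The hard part will be making the extraction of the leading term rigorous: I must show that the $K\to\infty$ asymptotics of $\int_0^\infty \phi_\alg(t,K)\rho(t)\,dt$ for the lower bound and the analogous upper-bound integral for $\hat\alg$ carry \emph{identical} coefficients $\rho(0^+)$, so that only the universal constant survives in the ratio. This demands (i) a per-instance lower bound uniform in $r$ that does not covertly exploit knowledge of $p$, (ii) a matching prior-agnostic upper bound for $\hat\alg$, and (iii) a Lebesgue/dominated-convergence argument certifying both that the small-$t$ part of the slice integral dictates the growth and that replacing $\rho(t)$ by $\rho(0^+)$ there costs only lower-order error---uniformly over every admissible continuous positive density $p$. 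Reconciling the two regimes $Kt$ and $t^{-1}$ of $\phi$ uniformly across all such $p$ is the delicate analytic core.
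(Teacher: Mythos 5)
Your overall architecture—reducing to a continuum bandit family, slicing the prior by level sets of the gap (your coarea formula is exactly the paper's Radon-transform decomposition, with your $\rho(0^+)>0$ playing the role of the paper's $q_k(0)\geq L>0$), using the instance-dependent UCB upper bound $\min\{K\Delta,\Delta^{-1}\log K\}$, and truncating instances with gap below a polynomial threshold—matches the paper's proof closely. But the load-bearing step of your plan is false: you assert that for a \emph{fixed} instance with gap $\Delta$, "a change-of-measure / Pinsker argument forces \emph{any} algorithm to pay regret of order $\min\{K\Delta,\Delta^{-1}\}$." No such per-instance lower bound exists for the algorithms the theorem quantifies over. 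An algorithm that knows $\D$ (or even one that blindly commits to a single action) incurs \emph{zero} regret on a positive-measure set of instances; Lai–Robbins-type per-instance bounds hold only for algorithms that are uniformly good over all instances, and the Bayes-optimal algorithm need not be. What change of measure actually gives is a coupled bound: the number of pulls of arm $k$ at instance $r$ is lower bounded by a quantity that \emph{degrades} with the algorithm's regret at a perturbed instance $r'$, of the form $\frac{2}{(r_k-r_k')^2}\bigl[\log T-\log\bigl(\reg_T(r,\alg)+\reg_T(r',\alg)\bigr)+\cdots\bigr]$. If the algorithm sacrifices $r'$, this bound is vacuous, so your item (i) ("a per-instance lower bound uniform in $r$") cannot be supplied, and since both Part 1 and Part 2 of the theorem rest on it, neither part is established by your plan.

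The missing idea, which is the heart of the paper's proof, is how to neutralize the $-\log(\reg_T(r,\alg)+\reg_T(r',\alg))$ term \emph{on average}. First reduce to the Bayes-optimal algorithm $\tilde\alg=\argmin_{\alg}\reg_T(\D,\alg)$ (legitimate, since any other $\alg$ only increases the numerator of the ratio); second, observe that by optimality $\reg_T(\D,\tilde\alg)\leq\reg_T(\D,\mathrm{UCB})=O(\sqrt{KT\log T})$; third, push this \emph{global} bound through the coupled lower bound using concavity of $\log$ (Jensen's inequality), so that the integrated log-regret terms cost only about $(\tfrac12+p_0)\log T$ and a positive fraction $(\tfrac12-2p_0)$ of $\log T$ survives in the lower bound. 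This also explains where $c_0$ really comes from: it is $2p_0(\tfrac12-2p_0)=\tfrac1{16}$ at $p_0=\tfrac18$ (truncation fraction times surviving log fraction), not a ratio of matching per-instance constants as you suggest, and your claim that the prior factor "cancels in the quotient" leaving exactly the constant ratio is correspondingly too optimistic. A minor further point: the uniformity over all densities $p$ you worry about in item (iii) is not needed—the $\liminf$ is taken for each fixed $\D$, and only $\Omega$, $c_0$, and $\hat\alg$ must be prior-independent.
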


In Theorem~\ref{theorem_useless}, Point $(1)$ avoids any trivial support $\Omega$ where $\exists \pi^*$ that is optimal for all $\M \in \Omega$, in which case the distribution is of course useless since $\hat \alg$ can be optimal by simply following $\pi^*$ even it does not know $\D$. 
Note that our bounds hold for any distribution $\D$, which indicates that even a very sharp distribution cannot provide useful information in the asymptotic case where $K\to\infty$. The value of $c_0$ depends on the coefficient of previous upper and lower bound, and we conjecture that it could be arbitrarily close to $1$. 


We defer the complete proof to Appendix~\ref{appendix_proof_useless}, and briefly sketch the intuition here.
The key observation is that the information provided in the training stage (prior) is fixed, while the required information gradually increase as $K$ increases.
When $K = 1$, the agent can clearly benefit from the knowledge of $\D$.
Without this knowledge, all it can do is a random guess since it has never interacted with $\M^*$ before.
However, when $K$ is large, the algorithm can interact with $\M^*$ many times and learn $\M^*$ more accurately, while the prior $\D$ will become relatively less informative.
As a result, the benefits of knowing $\D$ vanishes eventually.

Theorem~\ref{theorem_useless} lower bounds the improvement of regret by a constant.
As is commonly known, the regret bound can be converted into a PAC-RL bound~\citep{jin2018q,dann2017unifying}.
This implies that when $\delta,\epsilon \to 0$, in terms of pursuing a $\epsilon$-optimal policy to $\pi^*(\M^*)$, pre-training cannot help reduce the sample complexity.
Despite negative, we point out that this theorem only describe the asymptotic setting where $K \to \infty$, but it imposes no constraint when $K$ is fixed. 

\subsection{Improvement in the Non-asymptotic Setting}
\label{sec: fine-tuning upper bound}

In the last subsection, we provide a lower bound showing that the information obtained from the training stage can be useless if we require an universal improvement with respect to all episode $K$. 
However, a near-optimal regret in the non-asymptotic and non-universal setting is also desirable in many applications in practice.
In this section, we fix the value of $K$ and seek to design an algorithm such that it can leverage the pre-training information and reduce $K$-episode test regret.
To avoid redundant explanation for single MDP learning, we formulate the following oracles.

 \begin{definition}
 (Policy learning oracle) We define $\mathbb{O}_l(\M,\epsilon, \log(1/\delta))$ as the policy learning oracle which can return a policy ${\pi}$ that is $\epsilon$-optimal w.r.t. MDP $\M$ with probability at least $1-\delta$, i.e. $V^*_{\M}(s_1) - V_{\M}^{{\pi}}(s_1) \leq \epsilon$. The randomness of the policy $\pi$ is due to the randomness of both the oracle algorithm and the environment.
 \end{definition}
 
 \begin{definition}
 (Policy evaluation oracle) We define $\mathbb{O}_e(\M, \pi, \epsilon, \log(1/\delta))$ as the policy evaluation oracle which can return a value $v$ that is $\epsilon$-close to the value function $V^{\pi}_{\M}(s_1)$ with probability at least $1-\delta$, i.e. $|v - V_{\M}^{{\pi}}(s_1)| \leq \epsilon$. The randomness of the value $v$ is due to the randomness of both the oracle and the environment.
 \end{definition}
Both oracles can be efficiently implemented using the previous algorithms for single-task MDPs. Specifically, we can implement the policy learning oracle using algorithms such as UCBVI~\citep{azar2017minimax}, LSVI-UCB~\citep{jin2020provably} and GOLF~\citep{jin2021bellman} with polynomial sample complexities. The policy evaluation oracle can be achieved by the standard Monte Carlo method~\citep{sutton2018reinforcement}.


\subsubsection{Algorithm}
There are two major difficulties in designing the algorithm. 
First, what do we want to learn during the pre-training process and how to learn it? 
One idea is to directly learn the whole distribution $\D$, which is all that we can obtain for the test stage.
However, this requires $\mathcal {\tilde O}(|\Omega|^2/\delta^2)$ samples for a required accuracy $\delta$, and is unacceptable when $|\Omega|$ is large or even infinite.
Second, how to design the test-stage algorithm to leverage the learned information effectively? 
If we cannot effectively use the information from the pre-training, the regret or samples required in the test stage can be $\tilde{\mathcal{O}}(\text{poly}(S,A))$ in the worst case.

\begin{algorithm*}[h]
\caption{PCE (\textbf{P}olicy \textbf{C}ollection-\textbf{E}limination)}
\label{alg: new}
\textbf{Pre-training Stage}
\begin{algorithmic}[1]
    \STATE \textbf{Input}: episode number $K$, policy learning oracle $\mathbb O_{l}$ and policy evaluation oracle $\mathbb{O}_{e}$
    \STATE {Initialize}: $\delta = \epsilon= 1/\sqrt K$, the number of the sampled MDPs $N=\log(1/\delta)/\delta^2$
    \FOR {phase $l=1,\cdots$}
        \STATE Sample an MDP set $\hat{\Omega}$ with $N$ MDPs $\{\M_1, \M_2, \cdots, \M_N\}$ from distribution $\D$
        \FOR {$j = 1,\cdots, N$}
        \STATE Calculate $\pi_j = \mathbb{O}_{l}(\M_j, \epsilon/2, \log(N/\delta))$ for the MDP $\M_j$
        \ENDFOR
        \FOR{$i,j = 1,\cdots, N$}
            \STATE Calculate $v_{i,j} = \mathbb{O}_{e}(\M_i, \pi_j, \epsilon/2, \log(N^2/\delta))$ to evaluate the policy $\pi_j$ on the MDP $\M_i$
        \ENDFOR
        \STATE Call Subroutine \ref{alg: subroutine} to find a set $\hat{\Pi}$ that covers $(1-3\delta)$-fraction of the MDPs in $\hat{\Omega}$
        \IF {$\sqrt{\frac{|\hat{\Pi}| \log(2N/\delta)}{N - |\hat{\Pi}|}} \leq \delta$} 
            \STATE \textbf{Output}: the policy-value set $\hat{{\Pi}} = \{(\pi_j, v_{j,j}), \forall j \in \mathcal{U}\}$
        \ENDIF
        \STATE Double the number of the sampled MDP, i.e. $N = 2N$
    \ENDFOR
    
\end{algorithmic}
\textbf{Test Stage}
\begin{algorithmic}[1]
    \STATE \textbf{Input}: the policy-value set $\hat{\Pi}$ from the Pre-train Stage, Episode number $K$
    \STATE Initialize: the MDP set $\hat{\Pi}_1 = \hat{\Pi}$, the phase counter $l=1$, $k_0 = 1$, $\delta = \epsilon= 1/\sqrt{K}$
    \FOR{episode $k = 1,\cdots, K$}
        \STATE Calculate $(\pi_l, v_{l}) = \argmax_{(\pi_l, v_{l}) \in \hat{\Pi}_l} v_{l} $
        \STATE Execute the optimal policy $\pi_l$, and receive the total reward $G_k$
        \IF{$\left|\frac{1}{k-k_0+1}\sum_{\tau = k_0}^{k} G_k - v_l\right|\geq 4\epsilon + \sqrt{\frac{2\log(4K/\delta)}{k-k_0+1}}$}
            \label{alg: stopping condition finite}
            \STATE Eliminate $(\pi_l, v_{l})$ from the instance set $ \hat{\Pi}_l$, denote the remaining set as $ \hat{\Pi}_{l+1}$
            \STATE Set $k_0 = k+1$, and $l = l +1$
        \ENDIF
    \ENDFOR
\end{algorithmic}
\end{algorithm*}

To tackle the above difficulties, we formulate this problem as a policy candidate collection-elimination process. 
Our intuition is to find a minimum policy set that can generalize to most MDPs sampled from $\mathbb D$.
In the pre-training stage, we maintain a policy set that can perform well on most MDP instances. 
This includes policies that are the near-optimal policy for an MDP $\mathcal M$ with relatively large $\mathcal P(\mathcal M)$, or that can work well on different MDPs.
In the test stage, we sequentially execute policies in this set. Once we realize that current policy is not near-optimal for $\mathcal M^*$, we eliminate it and switch to another.
This helps reduce the regret from the cardinality of the whole state-action space to the size of policy covering set.
The pseudo code is in Algorithm~\ref{alg: new}.

\textbf{Pre-training Stage.} In the pre-training stage, we say a policy-value pair $(\pi, v)$ that covers an MDP $\M$ if $\pi$ is $\mathcal{O}(\epsilon)$-optimal for the MDP $\M$ and $v$ is an estimation of the optimal value $V^*_{\M}(s_1)$ with at most $\mathcal{O}(\epsilon)$ error. For a policy-value set $\hat{\Pi}$, we say the policy set covers the distribution $\D$ with probability at least $1-\mathcal{O}(\delta)$ if $$\operatorname{Pr}_{\mathcal{M} \sim \mathbb{D}}\left[\exists(\pi, v) \in \hat{\Pi}, (\pi,v) \ \text{covers} \  \M \right] \geq 1-\mathcal{O}( \delta).$$ The basic goal in the pre-training phase is to find a policy-value set $\hat{\Pi}$ with bounded cardinality that covers $\D$ with high probability.
The pre-training stage contains several phases. In each phase, we sample $N$ MDPs from the distribution $\D$ and obtain an MDP set $\left\{\mathcal{M}_1, \mathcal{M}_2, \cdots, \mathcal{M}_N\right\}$. We call the oracle $\mathbb{O}_l$ to calculate the near-optimal policy $\pi_j$ for each MDP $\M_j$, and we calculate the value estimation $v_{i,j}$ for each policy $\pi_j$ on MDP $\M_i$ using oracle $\mathbb{O}_e$. We use the following condition to indicate whether the 
pair $(\pi_j,v_{j,j})$ covers the MDP $\M_i$:
\begin{align*}
    &\quad \text{Cnd}(v_{i,j},v_{i,i},v_{j,j}) \\
    &=  
    \mathbb I \left[ \left| v_{i,j} - v_{i,i} \right| < \epsilon  \right] \cap \mathbb I \left[ \left| v_{i,j} - v_{j,j} \right| < \epsilon \right].
\end{align*}
The above condition indicates that $\pi_j$ is a near-optimal policy for $\M_i$, and $v_{j,j}$ is an accurate estimation of the value $V^{\pi_j}_{\M_i,1}(s_1)$. With this condition, we construct a policy-value set $\hat{\Pi}$ that covers $(1-3\delta)$-fraction of the MDPs in the sampled MDP set by calling Subroutine~\ref{alg: subroutine}. We output the policy-value set once the distribution estimation error $\sqrt{\frac{|\hat{\Omega}| \log (2 N / \delta)}{N-|\hat{\Omega}|}}$ is less than $\delta$, and double the number of the sampled MDPs $N$ to increase the accuracy of the distribution estimation otherwise.
After the pre-training phase, we can guarantee that the returned policy set can cover $\D$ with probability at least $1-\mathcal{O}(\delta)$, i.e.
\begin{align*}
\operatorname{Pr}_{\mathcal{M} \sim \mathbb{D}}\Big[&\exists(\pi, v) \in \hat{\Pi}, 
\left|V_{\mathcal{M}, 1}^\pi\left(s_1\right)-V_{\mathcal{M}, 1}^*\left(s_1\right)\right|<2 \epsilon \, \cap \\
&\left|V_{\mathcal{M}, 1}^\pi\left(s_1\right)-v\right|<2 \epsilon\Big] \geq 1-\mathcal{O}( \delta).
\end{align*}

\textbf{Fine-tuning Stage.} We start with the policy-value set $\hat{\Pi}$ from the pre-training stage and eliminate the policy-value pairs until we reach a $(\pi,v) \in \hat{\Pi}$ that covers the test MDP $\M^*$. Specifically, we split all episodes
into different phases. In phase $l$, we maintain a set $\hat{\Pi}_l$ that covers the real environment $\M^*$ with high probability. We select $(\pi_l,v_l)$ with the most optimistic value $v_l$ in $\hat{\Pi}_l$ and execute the policy $\pi_l$ for several episodes. During execution, we also evaluate the policy $\pi_l$ on the MDP $\M^*$ (i.e. $V^{\pi_l}_{\M^*,1}(s_1)$) and maintain the empirical estimation $\frac{1}{k-k_0+1} \sum_{\tau=k_0}^k G_k$. Once we identify that $\pi_l$ is not near-optimal for $\M^*$, we end this phase and eliminate $(\pi_l,v_l)$ from $\hat{\Pi}_l$.

\subsubsection{Regret Upper Bound}
The efficiency of Algorithm~\ref{alg: new} is summarized in Theorem~\ref{theorem_useful_restate}, and the proof is in Appendix~\ref{appendix: proof for useful, fine-tuning}.
\begin{theorem}
\label{theorem_useful_restate}
The regret of Algorithm~\ref{alg: new} is at most 
$$\reg_K(\D, \text{Alg.~\ref{alg: new}}) \leq \mathcal O \left(\sqrt{\mathcal C(\D) K \log^2(K)} + \mathcal C(\D) \right),$$
where $\mathcal C(\D) \triangleq \min_{ \caP(\tilde{\Omega}) \geq 1 - \delta}|\tilde{\Omega}|$ is a complexity measure of $\D$ and $\delta = 1/\sqrt{K}$. 
In addition, with probability at least $1-\mathcal{O} \left(\delta \log(\mathcal{C}(\D)/\delta)\right)$, samples required in the pre-training stage 
is $\mathcal O (\frac{\C(\D)}{\delta^2})$.
\end{theorem}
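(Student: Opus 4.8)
The plan is to split the argument into a pre-training correctness/sample-complexity part and a test-stage regret part, treating the oracles $\mathbb{O}_l,\mathbb{O}_e$ and Subroutine~\ref{alg: subroutine} as black boxes with their stated guarantees. For the pre-training stage I would first condition on the high-probability event that all $N$ calls $\pi_j=\mathbb{O}_l(\M_j,\ldots)$ and all $N^2$ evaluations $v_{i,j}=\mathbb{O}_e(\ldots)$ succeed; this is a union bound over $N+N^2$ events, which is exactly why the oracles are invoked at confidence $\log(N/\delta)$ and $\log(N^2/\delta)$. On this event the test $\mathrm{Cnd}(v_{i,j},v_{i,i},v_{j,j})$ certifies, up to an $\mathcal{O}(\epsilon)$ slack, that $(\pi_j,v_{j,j})$ covers $\M_i$. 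The key structural claim is $|\hat\Pi|=\mathcal{O}(\mathcal{C}(\D))$: the $\mathcal{C}(\D)$ most probable MDPs, each paired with its own (self-covering) optimal policy, already cover $(1-\delta)$ mass, so the covering set returned by Subroutine~\ref{alg: subroutine} has comparable size. Finally a uniform-convergence step turns empirical coverage of a $(1-3\delta)$-fraction of the sampled MDPs into population coverage $1-\mathcal{O}(\delta)$; the stopping condition $\sqrt{|\hat\Pi|\log(2N/\delta)/(N-|\hat\Pi|)}\le\delta$ is precisely the generalization gap for a hypothesis class indexed by $\hat\Pi$, yielding $\operatorname{Pr}_{\M\sim\D}[\exists(\pi,v)\in\hat\Pi\text{ covering }\M]\ge 1-\mathcal{O}(\delta)$.

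For the sample complexity I would use the doubling schedule: $N$ starts at $\log(1/\delta)/\delta^2$ and doubles, and the stopping test is first met once $N=\tilde{\mathcal{O}}(|\hat\Pi|/\delta^2)=\tilde{\mathcal{O}}(\mathcal{C}(\D)/\delta^2)$. Hence the number of phases is $\mathcal{O}(\log(\mathcal{C}(\D)/\delta))$, the total number of sampled MDPs is geometric and dominated by the final phase at $\mathcal{O}(\mathcal{C}(\D)/\delta^2)$, and the union bound over these $\mathcal{O}(\log(\mathcal{C}(\D)/\delta))$ phases, each failing with probability $\mathcal{O}(\delta)$, gives the stated success probability $1-\mathcal{O}(\delta\log(\mathcal{C}(\D)/\delta))$.

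For the test-stage regret I would work on the good pre-training event and take expectation over $\M^*\sim\D$, splitting on whether $\M^*$ is covered. Uncovered instances occur with probability $\mathcal{O}(\delta)$ and, since per-episode reward is bounded by $1$, contribute at most $\mathcal{O}(\delta K)=\mathcal{O}(\sqrt K)$. For covered $\M^*$, optimism gives $v_l\ge V^*_{\M^*}-\mathcal{O}(\epsilon)$ in every phase $l$ (the covering pair in $\hat\Pi$ supplies a near-optimal, correctly estimated value lower bounding the chosen maximum $v_l$), so the per-episode regret in phase $l$ is at most $g_l+\mathcal{O}(\epsilon)$ with $g_l:=|v_l-V^{\pi_l}_{\M^*}|$. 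The concentration inside the elimination test implies that just before $\pi_l$ is removed the empirical mean is within the threshold of $v_l$, so w.h.p. $g_l\le 4\epsilon+\mathcal{O}(\sqrt{\log(K/\delta)/n_l})$, where $n_l$ is the phase length; multiplying by $n_l$ bounds phase regret by $\mathcal{O}(\epsilon n_l+\sqrt{n_l\log(K/\delta)})$. Since each phase eliminates exactly one pair there are at most $|\hat\Pi|=\mathcal{O}(\mathcal{C}(\D))$ phases, the same test shows a truly covering policy ($g_l=\mathcal{O}(\epsilon)$) survives to episode $K$ w.h.p., and $\sum_l n_l\le K$. Summing and applying Cauchy--Schwarz, $\sum_l\sqrt{n_l}\le\sqrt{|\hat\Pi|\sum_l n_l}\le\sqrt{\mathcal{C}(\D)K}$, together with $\epsilon\sum_l n_l\le\epsilon K=\sqrt K$ and $\delta=\epsilon=1/\sqrt K$, yields the claimed $\mathcal{O}(\sqrt{\mathcal{C}(\D)K\log^2 K}+\mathcal{C}(\D))$; the logarithmic failure-probability contributions are absorbed into the $\log^2 K$ factor, and the lower-order $+\mathcal{C}(\D)$ term comes from the bounded number of eliminations, each running at least $\mathcal{O}(1)$ episodes before concentration permits detection.

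The main obstacle I anticipate is the test-stage bookkeeping: making the confidence-width argument two-sided and uniform over all possible phase lengths, so that simultaneously (i) every eliminated policy provably had a large gap relative to the number of episodes it ran and (ii) the single good covering policy is never eliminated with high probability. Establishing $g_l n_l=\mathcal{O}(\epsilon n_l+\sqrt{n_l\log(K/\delta)})$ uniformly across all $\le\mathcal{C}(\D)$ phases before invoking Cauchy--Schwarz is where the delicate union bounds enter, and the relation $|\hat\Pi|=\mathcal{O}(\mathcal{C}(\D))$ through Subroutine~\ref{alg: subroutine} is the other place requiring care.
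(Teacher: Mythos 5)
Your proposal is correct and follows essentially the same route as the paper's own proof: oracle-success union bounds, a greedy-cover size bound against the minimal $(1-\delta)$-cover $\Omega^*$, a subset-indexed uniform-convergence step justifying the stopping rule, doubling for the pre-training sample complexity, and in the test stage optimism via non-elimination of the covering pair, per-phase bounds from the elimination threshold plus Azuma, and Cauchy--Schwarz over at most $|\hat\Pi|$ phases. The only discrepancy is that the greedy subroutine actually yields $|\hat\Pi| = \mathcal{O}\left(\mathcal{C}(\mathbb{D})\log(1/\delta)\right)$ rather than $\mathcal{O}(\mathcal{C}(\mathbb{D}))$ (greedy set cover loses a logarithmic factor), but since $\log(1/\delta) = \Theta(\log K)$ this is absorbed into the $\log^2 K$ slack exactly as in the paper, so your final bound is unaffected.
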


Different from the previous regret bound when the pre-training is unavailable~\citep{azar2017minimax, osband2013more,zhang2021reinforcement}, this distribution-dependent upper bound improves the dependence on $S,A$ to a complexity measure of $\D$ defined as $\mathcal C(\D)$.
$\C(\cdot)$ serves as a multiplied factor, and can be small when $\D$ enjoys benign properties as shown below.

First, when the cardinality of $\Omega$ is small, i.e. $|\Omega| \ll SA$,
we have $\mathcal C(\mathbb D) \leq |\Omega|$ and the pre-training can greatly reduce the instance space via representation learning or multi-task learning~\citep{agarwal2020flambe,brunskill2013sample}.
Specifically, the regret in the test stage is reduced from $\tilde{\mathcal O}(\sqrt{SAHK})$ to $\tilde{\mathcal O}(\sqrt{|\Omega|K})$.
To the best of our knowledge, this is the first result that achieves dependence only on $M$ for any general distribution in RL generalization. We provide a lower bound in Appendix~\ref{lower bound for theorem 2} to show that our test-time regret is near-optimal except for logarithmic factors.

When $|\Omega|$ is large or even infinite, $\mathcal C(\D)$ is still bounded and can be significantly smaller when $\D$ enjoys benign properties.
In fact, in the worst case, it is not hard to find that the dependence on $\mathcal C(\D)$ is unavoidable\footnote{Consider the bandit case where there are $M$ arms. The optimal arm is arm $i$ in $\M_i$, and $\D$ is a uniform distribution over $M$ MDPs. $\mathcal{C}(\D) = M$ in this case, and the $M$ dependence is unavoidable in this hard instance since the agent has to independently explore and test whether each arm is the optimal arm.}, and there is no way to expect any improvement. 
However, in practice where $|\Omega|$ is large, the probability will typically be concentrated on several subset region of $\Omega$ and decay quickly outside, e.g. when $\D$ is subgaussian or mixtures of subgaussian.
Specifically, if the probability of the $i$-th MDP $p_i \leq c_1 e^{-\lambda i}$ for some positive constant $c_1,\lambda$, then $\C(\D) \leq \mathcal O(\log \frac{1}{\delta} )$, which gives the upper bound $ {\mathcal O}\left(\sqrt{K\log^3(K)}\right)$. 

It is worthwhile to mention that our algorithm in the pre-training stage actually finds a \textit{policy} covering set, rather than an MDP covering set, despite that the regret in Theorem~\ref{theorem_useful_restate} depends on the cardinality of the MDP covering set. This dependence can possibly be improved to the cardinality of the policy covering set by adding other assumptions to our policy learning oracle, which we leave as an interesting problem for the future research.

\subsubsection{Regret Lower Bound}
We now provide a lower bound under the non-asymptotic setting, showing that the proposed complexity measure $\C(\D)$ is a tight and bidirectional control of test time regret.
Please refer to Appendix~\ref{lower bound for theorem 2} for complete proof.

\begin{theorem}
\label{theorem: lower bound for fine-tuning}
Suppose $|\Omega| \geq 2$ and $K \geq 5$. For any pre-training and fine-tuning algorithm $\text{Alg}$, there exists a distribution $\mathbb{D}$ over the MDP class $\Omega$, such that the regret in the fine-tuning stage is at least $$\operatorname{Reg}_K(\mathbb{D}, \text {Alg}) \geq \Omega\left(\min\left(\sqrt{\mathcal{C}(\mathbb{D}) K}, K\right)\right).$$
\end{theorem}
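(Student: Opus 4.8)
The plan is to reduce the bound to a minimax lower bound for stochastic multi-armed bandits under a uniform prior, exploiting that in a fully symmetric instance family the pre-training stage reveals nothing about the test environment. Concretely, I would take $\Omega = \{\M_1,\dots,\M_M\}$ to be $H=1$ bandit MDPs with $A=M$ actions, where in $\M_i$ arm $i$ is uniquely optimal with mean reward $\tfrac12+\gamma$ and every other arm has mean $\tfrac12$, and the reward noise is standard Gaussian (hence $1$-subgaussian); taking the gap $\gamma \le \tfrac14$ keeps the total mean reward below $1$, so the reward assumption of Section~3 holds. Let $\D$ be uniform over $\Omega$. This instance is invariant under permuting the action labels, so the i.i.d.\ MDPs sampled during pre-training are independent of the test instance $\M^*$ and break no symmetry among the arms; the posterior over which arm is optimal in $\M^*$ remains uniform. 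I can therefore hand the algorithm the exact distribution $\D$ for free — this only weakens the eventual bound — and reduce to controlling the Bayesian (prior-averaged) regret of any $K$-episode fine-tuning strategy on the uniform-prior bandit. Since this construction does not depend on $\text{Alg}$, it in fact yields the stronger statement that a single $\D$ defeats all algorithms.

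For the complexity measure, under the uniform $\D$ each instance carries mass $1/M$, so covering $(1-\delta)$-probability with $\delta=1/\sqrt K$ requires $\lceil M(1-\delta)\rceil$ instances, giving $\C(\D)=\lceil M(1-1/\sqrt K)\rceil$. For $K\ge 5$ we have $1/\sqrt K < \tfrac12$, hence $\C(\D)=\Theta(M)$, and it suffices to prove a test-stage regret of order $\min(\sqrt{MK},K)$.

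The heart of the argument is the standard divergence-decomposition estimate. Let $N_i(j)$ denote the number of pulls of arm $j$ across the $K$ test episodes when $\M^*=\M_i$; then the Bayesian regret equals $\gamma\big(K-\tfrac1M\sum_i \E_i[N_i(i)]\big)$. Introducing a reference instance $\M_0$ in which all arms have mean $\tfrac12$, Pinsker's inequality together with the chain rule $\KL(P_0\|P_i)=\E_0[N_0(i)]\cdot\KL(\mathcal N(\tfrac12,1)\,\|\,\mathcal N(\tfrac12+\gamma,1))=\tfrac{\gamma^2}{2}\E_0[N_0(i)]$ gives $\E_i[N_i(i)]\le \E_0[N_0(i)]+\tfrac{\gamma K}{2}\sqrt{\E_0[N_0(i)]}$. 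Summing over $i$, using $\sum_i\E_0[N_0(i)]=K$ and Cauchy-Schwarz ($\sum_i\sqrt{\E_0[N_0(i)]}\le\sqrt{MK}$), I obtain $\tfrac1M\sum_i\E_i[N_i(i)]\le \tfrac KM+\tfrac{\gamma}{2}K\sqrt{K/M}$, and hence a Bayesian regret at least $\gamma K\big(1-\tfrac1M-\tfrac{\gamma}{2}\sqrt{K/M}\big)$. Choosing $\gamma=\tfrac12\sqrt{M/K}$ when $M\le K/4$ yields $\Omega(\sqrt{MK})$, while capping $\gamma=\tfrac14$ when $M>K/4$ yields $\Omega(K)$; together these give $\Omega(\min(\sqrt{MK},K))=\Omega(\min(\sqrt{\C(\D)K},K))$.

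\textbf{Main obstacle.} The delicate step is justifying rigorously that the pre-training stage cannot lower the regret. The permutation symmetry makes this intuitive, but a clean proof requires setting up the joint probability space over the pre-training samples and the $K$ test episodes, verifying that the divergence decomposition applies to the test-stage counts $N_i(j)$ after conditioning on the pre-training data, and checking that averaging over the pre-training randomness preserves both the identity $\sum_i\E_0[N_0(i)]=K$ and the Cauchy-Schwarz step. A secondary nuisance is confirming all constants at the boundary cases $M=2$ and $K=5$ and ensuring the chosen $\gamma$ stays within $[0,\tfrac14]$ in both regimes.
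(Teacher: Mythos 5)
Your proposal is correct and follows essentially the same route as the paper's proof: the identical hard instance (uniform prior over $M$ Gaussian bandit instances with one good arm each), the same all-equal reference instance, the same Pinsker-plus-chain-rule comparison of $\E_{\nu_i}$ to $\E_{\nu_0}$ applied to the pull counts, the same Cauchy--Schwarz summation over arms, and the same two-regime choice of the gap together with the observation that $\mathcal{C}(\D)=\Theta(M)$ when $K\geq 5$. The ``main obstacle'' you flag is resolved in the paper in one line: because the pre-training samples are i.i.d.\ from $\D$ and independent of the test instance, any pre-trained (randomized) algorithm is just a mixture of deterministic maps from test-stage reward histories to actions, and since the per-algorithm bound holds uniformly over such deterministic maps, it survives averaging over the pre-training and internal randomness.
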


Notice that here the bound holds for all $K$, and $\C(\D)$ depends on $K$ through the failure probability $\delta$.
This lower bound states that no matter how many samples are collected in the pre-training stage, the regret in the fine-tuning stage is at least $\Omega\left(\sqrt{\mathcal{C}(\mathbb{D}) K}\right)$, which indicates that our upper bound is near-optimal except for logarithmic factors.

\section{Results for the Setting without Test-time Interaction}

\begin{algorithm*}[!th]
\caption{OMERM (\textbf{O}ptimistic \textbf{M}odel-based \textbf{E}mpirical \textbf{R}isk \textbf{M}inimization)}
\label{alg: training without fine-tuning}
  \begin{algorithmic}[1]
  \STATE \textbf{Input}: target accuracy $\epsilon > 0$
  \STATE \textbf{Input}: the sampled $N$ MDPs denoted as $\{\M_1, \M_2, \cdots, \M_N\}$ from distribution $\mathbb{D}$, with $N = C_1 \log \left(\mathcal{N}\left(\Pi, \epsilon/(12H), d\right)\right)/\epsilon^{2}$ for a constant $C_1 > 0$
  \STATE $K= C_2S^2AH^2\log(SAH/\epsilon)/\epsilon^2$ for constants $C_2 > 0$
    \FOR { episode $k = 1,2,\cdots, K$}
        \FOR {$i = 1, 2, \cdots, N$}
            \STATE Denote $N_{\M_i,k,h}(s,a,s')$ and $N_{\M_i,k,h}(s,a)$ as the counter that agent encounters $(s,a,s')$ and $(s,a)$ at step $h$ in $\M_i$ till step $k-1$, respectively
            \STATE Estimate $\hat{\mathbb{P}}_{\M_i,k,h}(s'|s,a) = \frac{N_{\M_i,k,h}(s,a,s')}{\max\{1, N_{\M_i,k,h}(s,a)\}}$ for step $h \in [H]$
            \STATE Estimate $\hat{R}_{\M_i,k,h}(s,a) = \frac{\sum_{\tau=1}^{k-1}r_{\M_i,\tau,h}\mathbbm{1}(s_{\M_i,\tau,h}=s, a_{\M_i,\tau,h}=a)}{\max\{1, N_{\M_i,k,h}(s,a)\}}$  for step $h \in [H]$
            \STATE Define the UCB bonus $b_{\M_i,k,h}(s,a) = \sqrt{\frac{8S \log(8SANHK)}{\max\{1,N_{\M_i,k,h}(s,a)\}}}$
            \STATE Initialize $\hat{V}_{\M_i,k,H+1}^{\pi}(s,a) = 0, \forall s,a$
            \FOR {$h = H, H-1, \cdots, 1$}
                \STATE $\hat{Q}_{\M_i,k,h}^{\pi}(s,a) = \min\left\{1, \hat{R}_{\M_i,k,h}(s,a) + b_{\M_i,k}(s,a) + \hat{\mbP}_{\M_i,k,h}\hat{V}^{\pi}_{\M_i,k,h+1}(s,a)\right\}$
                \STATE $\hat{V}_{\M_i,k,h}^{\pi}(s) = \sum_{a} \pi_h(a|s) \hat{Q}_{\M_i,k,h}^{\pi}(s,a)$
            \ENDFOR
        \ENDFOR
        \STATE Calculate the optimistic policy $\pi_k = \argmax_{\pi \in \Pi} \frac{1}{N} \sum_{i=1}^{N} \hat{V}_{{\mathcal{M}}_{i},1}^{\pi}(s_1)$.
        \FOR {$i = 1, 2, \cdots, N$}
            \STATE Execute the policy $\pi_k$ on MDP $\M_i$ for one episode, and observe the trajectory $(s_{\M_i, k, h}, a_{\M_i, k , h}, r_{\M_i, k, h})_{h=1}^{H}$.
        \ENDFOR
    \ENDFOR
    \STATE \textbf{Output}: a policy selected uniformly randomly from the policy set $\{\pi_k\}_{k=1}^{K}$.
  \end{algorithmic}
\end{algorithm*}

In this section, we switch to the setting where test-time interaction is not allowed, and study the benefits of pre-training. As illustrated by Proposition~\ref{theorem_woft_hard}, we only pursue the optimality \inexp, which is in line with supervised learning.
Traditional Empirical Risk Minimization algorithm can be $\epsilon$-optimal with ${\mathcal O}(\frac {\log |\mathcal H|} {\epsilon^2})$ samples, and we expect this to be true in RL generalization as well.
However, a policy in RL needs to sequentially interact with the environment, which is not captured by pure ERM algorithm.
In addition, different MDPs in $\Omega$ can have distinct optimal actions, making it hard to determine which action is better even in expectation.
To overcome these issues, we design an algorithm called OMERM in Algorithm~\ref{alg: training without fine-tuning}. Our algorithm is designed for tabular MDPs with finite state-actions space. Nevertheless, it can be extended to the non-tabular setting by combining our ideas with previous algorithms for efficient RL with function approximation~\citep{jin2020sample,wang2020reinforcement}.

In Algorithm~\ref{alg: training without fine-tuning}, we first sample $N$ tasks from the distribution $\mathbb{D}$ as the input. The goal of this algorithm is to find a near-optimal policy in expectation w.r.t. the task set $\{\M_1, \M_2, \cdots, \M_N\}$.
In each episode, the algorithm estimates each MDP instance using history trajectories, and calculates the optimistic value function of each instance.
Based on this, it selects the policy from $\Pi$ that maximizes the average optimistic value, i.e. $\frac 1 N  \sum_{i=1}^N \hat V_{\M_i,1}^\pi(s_1)$. 
This selection objective is inspired by ERM, with the difference that we require this estimation to be optimistic.
It can be achieved by a planning oracle when $\Pi$ is all stochastic maps from $(\caS,H)$ to $\Delta(\caA)$, or by gradient descent when $\Pi$ is a parameterized model.
For each sampled $\M_i$, our algorithm needs to interact with it for $\text{poly}(S,A,H,\frac{1}{\epsilon})$ times, which is a typical requirement to learn a model well. 

\begin{theorem}
\label{theorem_woft_nearoptimal}
With probability\footnote{This probability can be further improved to $1-\delta$ by executing Algorithm~\ref{alg: training without fine-tuning} for $\log(1/\delta)$ times and then returning a policy with maximum average value. Please see Appendix~\ref{appendix: high prob no fine-tuning} for details.} at least $2/3$, Algorithm~\ref{alg: training without fine-tuning} can output a policy $\hat{\pi}$ satisfying $\E_{\M^*\sim \D}[V_{\M^*}^{\pi^*(\D)} - V_{\M^*}^{\hat \pi}] \leq \epsilon$ with $\mathcal O\left(\frac{ \log \mathcal{N}^\Pi_{\epsilon/(12H)}}{\epsilon^{2}}\right)$ MDP instance samples during training. The number of episodes collected for each task is bounded by $\mathcal O\left(\frac{H^2S^2A\log(SAH)}{\epsilon^2}\right)$.
\end{theorem}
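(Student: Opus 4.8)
The plan is to bound the expected suboptimality of the randomly returned policy by splitting it into a \emph{generalization} error (empirical average over the $N$ sampled MDPs versus the population expectation under $\D$) and an \emph{optimization} error (controlled by the optimism built into the per-MDP model-based updates). Since $\hat\pi$ is drawn uniformly from $\{\pi_k\}_{k=1}^K$, its expected value equals the average $\frac1K\sum_{k=1}^K V_\D^{\pi_k}$, where $V_\D^\pi \triangleq \E_{\M\sim\D}V_\M^\pi$. Writing $\hat V(\pi) \triangleq \frac1N\sum_{i=1}^N V_{\M_i,1}^\pi(s_1)$ for the \emph{true} empirical value and $U_k(\pi) \triangleq \frac1N\sum_{i=1}^N \hat V_{\M_i,k,1}^\pi(s_1)$ for the optimistic estimate used in episode $k$, I would decompose, for each $k$,
\begin{align*}
V_\D^{\pi^*(\D)} - V_\D^{\pi_k} = \big(V_\D^{\pi^*(\D)} - \hat V(\pi^*(\D))\big) + \big(\hat V(\pi^*(\D)) - \hat V(\pi_k)\big) + \big(\hat V(\pi_k) - V_\D^{\pi_k}\big).
\end{align*}
The first and third terms are generalization terms; the middle term is handled by optimism.

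For the generalization terms I would use the total-reward assumption, which guarantees $V_\M^\pi(s_1)\in[0,1]$, so that each $V_{\M_i,1}^\pi(s_1)$ is a bounded i.i.d.\ sample with mean $V_\D^\pi$. A single Hoeffding bound controls the first term (the fixed policy $\pi^*(\D)$). The third term involves the data-dependent $\pi_k$, so I would pass to a uniform bound over $\Pi$: cover $\Pi$ by an $\epsilon/(12H)$-net in the metric $d$, whose cardinality is $\mathcal N^\Pi_{\epsilon/(12H)}$, apply Hoeffding and a union bound over the net, and then bridge from the net to all of $\Pi$ using the Lipschitz estimate $|V_\M^{\pi^1}-V_\M^{\pi^2}|\le H\,d(\pi^1,\pi^2)$ (from a performance-difference expansion together with $\|Q_\M^\pi\|_\infty\le 1$). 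Choosing $N = \Theta(\log \mathcal N^\Pi_{\epsilon/(12H)}/\epsilon^2)$ makes both generalization terms $\mathcal O(\epsilon)$; crucially the $[0,1]$ value range (rather than $[0,H]$) is what lets this match the supervised-learning rate.

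For the middle term I would invoke the standard UCBVI optimism guarantee: with the bonus $b\propto\sqrt{S\log(\cdot)/N_{\M_i,k,h}(s,a)}$ and clipping at $1$, the empirical-model-plus-bonus backup overestimates the true value \emph{uniformly over all $\pi$}, i.e.\ $\hat V_{\M_i,k,1}^\pi(s_1)\ge V_{\M_i,1}^\pi(s_1)$ simultaneously for every $\pi$, $i$ and $k$ on a high-probability event (this uses $\|V\|_\infty\le1$ to dominate $\|\hat\mbP-\mbP\|_1\|V\|_\infty$ by the bonus). Because $\pi_k=\argmax_\pi U_k(\pi)$, optimism at $\pi^*(\D)$ gives the chain $\hat V(\pi^*(\D))\le U_k(\pi^*(\D))\le U_k(\pi_k)$, hence
\begin{align*}
\hat V(\pi^*(\D)) - \hat V(\pi_k) \le U_k(\pi_k) - \hat V(\pi_k) = \frac1N\sum_{i=1}^N\big(\hat V_{\M_i,k,1}^{\pi_k}(s_1)-V_{\M_i,1}^{\pi_k}(s_1)\big).
\end{align*}
Averaging over $k$ and swapping the order of summation, for each fixed $\M_i$ the quantity $\frac1K\sum_k(\hat V_{\M_i,k,1}^{\pi_k}(s_1)-V_{\M_i,1}^{\pi_k}(s_1))$ is exactly the per-episode optimism gap of UCBVI run on $\M_i$ with the executed policies $\pi_k$, which telescopes into the cumulative bonus and is bounded by $\tilde{\mathcal O}(\sqrt{S^2AH^2K})$ via the usual pigeonhole argument on visit counts. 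Dividing by $K$ and setting $K=\Theta(S^2AH^2\log(SAH/\epsilon)/\epsilon^2)$ renders this term $\mathcal O(\epsilon)$ as well, matching the per-task episode budget in the statement.

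Finally I would collect the three $\mathcal O(\epsilon)$ bounds, rescale the constants inside $N$ and $K$, and take a union bound over the generalization event and the optimism event to obtain the claim with probability at least $2/3$. The main obstacle is the coupling in the middle term: optimism has to hold \emph{uniformly over the entire policy class} $\Pi$ (not merely for each MDP's own optimal policy, as in ordinary UCBVI), and the optimism gap must then be charged along the trajectories of the data-dependent policies $\pi_k$ actually executed on each $\M_i$, so that the online UCBVI regret analysis still applies. Reconciling this uniform-over-$\Pi$ optimism with the covering-number generalization argument — while keeping the value range at $[0,1]$ so the rate matches supervised learning — is the technically delicate part.
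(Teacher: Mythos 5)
Your proposal is correct and follows essentially the same route as the paper's proof: the same covering-number/Hoeffding generalization argument with the $H\cdot\mathrm{d}(\cdot,\cdot)$ Lipschitz bridge, the same uniform-over-$\Pi$ optimism established from the UCBVI-style bonus (valid for every policy simultaneously because the concentration event is policy-independent), and the same telescoping/pigeonhole bound on the cumulative bonus; your use of $\pi^*(\D)$ directly as the comparator in the optimism chain is only a cosmetic reorganization of the paper's four-term decomposition through the empirical maximizer $\hat{\pi}^* = \argmax_{\pi\in\Pi}\frac1N\sum_{i=1}^N V^{\pi}_{\M_i,1}(s_1)$. The one step you leave implicit is converting your bound on $\frac1K\sum_{k=1}^K\big(V_{\D}^{\pi^*(\D)}-V_{\D}^{\pi_k}\big)$ — an expectation over the uniform draw of $\hat{\pi}$ — into the theorem's high-probability statement about the realized $\hat{\pi}$; the paper does this with Markov's inequality over the selection at the cost of a constant factor, which is exactly the rescaling your final paragraph alludes to.
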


We defer the proof of Theorem~\ref{theorem_woft_nearoptimal} to Appendix~\ref{appendix: proof 2}. This theorem implies that Algorithm~\ref{alg: training without fine-tuning} needs approximately $\mathcal O( \log \big(\mathcal N^\Pi_{\frac \epsilon{12H}}\big)/\epsilon^2)$ samples to return an $\epsilon$-optimal policy \inexp.
Recall that $\log\mathcal N^\Pi_{\frac \epsilon{12H}}$ is the log-covering number of $\Pi$.
When $\Pi$ is all stochastic maps from $\caS,H$ to $\Delta(\caA)$, it is bounded by $ \tilde{\mathcal O}(HSA)$.
When $\Pi$ is a parameterized model where the parameter $\theta \in \mathbb R^d$ has finite norm and $\pi_\theta$ satisfies some smoothness condition on $\theta$, $\log( \mathcal N^\Pi_{\frac \epsilon{12H}} )\leq \tilde {\mathcal O}(d)$. This result matches traditional bounds in supervised learning, and it implies that when we pursue the optimal \inexp, generalization in RL enjoys quantitatively similar upper bound to supervised learning.

\section{Conclusion and Future Work}

This work theoretically studies how much pre-training can improve test performance under different generalization settings. we first point out that RL generalization is fundamentally different from the generalization in supervised learning, and fine-tuning on the target environment is necessary for good generalization. When the agent can interact with the test environment to update the policy, we first prove that the prior information obtained in the pre-training stage can be theoretically useless in the asymptotic setting, and show that in non-asymptotic setting we can reduce the test-time regret to $\tilde{O}\big(\sqrt{C(\mathbb{D}) K}\big)$ by 
designing an efficient learning algorithm. In addition, when the agent cannot interact with the test environment, we provide an efficient algorithm called OMERM which returns a near-optimal policy \textit{in expectation} by interacting with ${\mathcal{O}}\big( \log (\mathcal{N}^\Pi_{\epsilon/(12H)})/\epsilon^{2}\big)$ MDP instances.

Our work seeks a comprehensive understanding on how much pre-training can be helpful to test performance theoretically, and it also provides insights on real RL generalization application.
For example, when test time interactions are not allowed, one cannot guarantee to be near optimal \inins.
Therefore, for a task where large regret is not tolerable, instead of designing good algorithm, it is more important to find an environment that is close to the target environment and improve policies there, rather than to train a policy in a diverse set of MDPs and hope it can generalize to the target.
In addition, for tasks where we can improve policies on the fly, we can try to pre-train our algorithm in advance to reduce the regret suffered.
This corresponds to the many applications where test time interactions are very expensive, such as autonomous driving and robotics.

There are still problems remaining open. Firstly, we mainly study the \textit{i.i.d.} case where the training MDPs and test MDPs are sampled from the same distribution. It is an interesting problem to study the out-of-distribution generalization in RL under certain distribution shifting. 
Secondly, there is room for improving our instance dependent bound possibly by leveraging ideas from recent Bayesian-optimal algorithms such as Thompson sampling~\citep{osband2013more}. 
We hope these problems can be addressed in the future research.

\section*{Acknowledgements}
SSD acknowledges the support of NSF IIS 2110170, NSF DMS 2134106, NSF CCF 2212261, NSF IIS 2143493, NSF CCF 2019844, NSF IIS 2229881.





\bibliography{ref}
\bibliographystyle{icml2023}

\newpage
\appendix
\onecolumn


\section{Omitted proof for Proposition~\ref{theorem_woft_hard}}
\label{appendix_proof_woft_hard}
Assume $\Omega$ is the set of $M$ MDP instances $\M_1,\cdots,\M_M$, where all instances consist only one identical state $s_1$ and their horizon $H=1$. In $\M_i$ and state $s_1$, the reward is $1$ for action $a_i$ and $0$ otherwise. The optimal policy in $\M_i$ is therefore taking action $a_t$ and $V_{\M_i}^{\pi^*(\M_i)}  = 1$.

For any distribution $\D$, assume the probability to sample $\M_i$ is $p_i > 0$. For any deployed policy $\hat \pi$, assume the probability that $\hat \pi$ takes action $a_i$ is $q_i$, then
$$
\mathbb E_{\M^* \sim \D} V(\M^*,\hat \pi) = \sum_{i=1}^M p_iq_i \leq \max_{i \in [M]} p_i.
$$
Therefore, denote $\epsilon_0 = 1 - \max_{i\in[M]} p_i > 0$, we have
$$
\E_{\M^* \sim \D}\big[ V_{\M^*}^{\pi^*(\M^*)} - V_{\M^*}^{\hat \pi} \big] \geq \epsilon_0. 
$$
Notice that when $\D$ is a uniform distribution, $\epsilon_0 = \frac{M-1}{M} $, which can be arbitrarily close to $1$.

\section{Omitted proof for Theorem~\ref{theorem_useless}}
\label{appendix_proof_useless}
To prove the theorem, we let $\Omega$ be a subset of MDP with  $H = S = 1$, under which it becomes a bandit problem, and it suffices to prove the theorem in this setting. Below we first introduce the bandits notations, and give the complete proof.

\subsection{Notations}
To be consistent with traditional $K$-arm bandits problem, in this section we use a slightly different notations from the main paper.
An bandit instance can be represented as a vector $r \in \Omega \subset [0,1]^K$, where $r_k$ is the mean reward when arm $k$ is pulled.
Inherited from previous MDP settings, we also assume that this reward is sampled from an $1$-subgaussian distribution with mean $r_k$.
In each episode $t \in [T]$, based on initial input and the history, an algorithm $\alg$ chooses an arm $a_t$ to pull, and obtain a reward $y_{a_t} \sim \D_{a_t}$. 
Similarly, assume $r$ is sampled from a distribution $\mathbb D$ supported on $\Omega$, and we want to minimize the Bayesian regret 
\begin{align*}
\reg_T(\D,\alg) \triangleq \mathbb E_{r \in \D}\reg_T(r, \alg), 
\, \text{where }
\reg_T(r, \alg) \triangleq \mathbb E_{\alg} \sum_{t=1}^T [r^* - r_{a_t}].
\end{align*}
Here $r^* = \max_{k} r_k$ is the optimal arm. 
If we define $S_k^T(r, \alg) = \sum_{t=1}^T \mathbb I[a_t = k]$ as the r.v. of how many times $\alg$ has pulled arm $k$ in $T$ episodes and $\Delta_k = r^* - r_k$ as the sub-optimal gap, we can decompose regret 
\begin{align}
\label{eq:regret_decomposition}
\reg_T(r,\alg) = \sum_{k=1}^K \Delta_k \mathbb E[S_k^T(r,\alg)].
\end{align}
This identity is frequently used in the subsequent proof.

\subsection{Proof}
We first specify the choice of support, constant and algorithm.
Without loss of generality, we set $\Omega = [0,1]^K$, which is quite common and general in bandit tasks. 
Let $c_0 = \frac 1 {16}$, and $\hat \alg$ be the Asymptotically Optimal UCB Algorithm defined in Algorithm~\ref{alg: UCB}. 
On the other hand, let $\tilde \alg$ be uniformly optimal, i.e. $\tilde \alg (\mathbb D,T) = \argmin_{\alg} \reg_T(\D,\alg)$. To prove the theorem, we only need to show that 
\begin{align}
\label{inq:asym_ratio}
\liminf_{T \to \infty} \frac{\reg_T(\mathbb D, \tilde \alg(\D,T))}{\reg_T(\mathbb D, \hat \alg(T))} \geq c_0,
\end{align}
which is the major result (point $(2)$). Later, we show that 
$\lim_{T\to\infty} \reg_T(\mathbb D, \tilde \alg(\D,T)) = +\infty$ in Lemma~\ref{lemma_regret_infty}, which proves point $(1)$ in the theorem. When $T$ is fixed, we abbreviate $\tilde \alg(\D,T), \hat \alg ( T)$ as $\tilde \alg,\hat \alg$.

$\hat \alg$ enjoys a well known instance dependent regret upper bound, which is copied below:
\begin{lemma}[Theorem 8.1 in \cite{banditbook} ]
  \label{lemma_upperbound}
  For all $ r \in \Omega, \forall k \in [K]$,
\begin{align*}
  \mathbb E [S_k^T(r, \hat \alg)] & \leq
  \min\{ T ,  \inf_{\varepsilon \in (0,\Delta_k)} \Big(1+\frac{5}{\varepsilon^2}+  \frac{2\big(\log(T\log^2 T+1) + \sqrt{\pi\log (T\log^2 T + 1)} + 1 \big)}{(\Delta_k-\varepsilon)^2}\Big)\}.
\end{align*}
\end{lemma}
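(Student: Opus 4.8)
The plan is a standard optimism (UCB) analysis, specialised to the confidence radius of $\hat\alg$. Writing $f(t) = 1 + t\log^2 t$ so that $\log f(T) = \log(T\log^2 T + 1)$ matches the statement, recall that $\hat\alg$ pulls the arm maximising the index $\mathrm{UCB}_k(t) := \hat r_k(t-1) + \sqrt{2\log f(t)/N_k(t-1)}$, where $N_k(t-1)$ is the number of pulls of $k$ before round $t$ and $\hat r_k(t-1)$ the associated empirical mean. The $\min\{T,\cdot\}$ is free, since $S_k^T \le T$ deterministically and since for the optimal arm ($\Delta_k = 0$) the infimum ranges over the empty interval $(0,0)$ and equals $+\infty$. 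So I fix a suboptimal arm $k$, an optimal arm $k^*$ with $r_{k^*} = r^*$, and $\varepsilon \in (0, \Delta_k)$, and bound $\mathbb E[S_k^T(r,\hat\alg)]$.

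First I would split each pull of $k$ according to which optimistic inequality fails. On $\{a_t = k\}$ the index of $k$ dominates that of $k^*$, so either the optimal arm is \emph{under-confident}, $\mathrm{UCB}_{k^*}(t) \le r^*$, or the suboptimal arm is \emph{over-confident}, $\mathrm{UCB}_k(t) \ge r^*$. This gives
\[
\mathbb E[S_k^T(r,\hat\alg)] \le \sum_{t=1}^T \mathbb P\big[\mathrm{UCB}_{k^*}(t) \le r^*\big] + \mathbb E\Big[\sum_{t=1}^T \mathbb I[a_t = k,\ \mathrm{UCB}_k(t) \ge r^*]\Big].
\]

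For the over-confidence term, which carries the main contribution, I would re-index by the sample count $s = N_k(t-1)$ (along a trajectory these values are distinct, so each $s$ is charged at most once) and use that the empirical mean of $s$ i.i.d.\ $1$-subgaussian rewards is $1/s$-subgaussian about $r_k$. An over-confident pull with $s$ samples requires $\hat r_{k,s} \ge r^* - \sqrt{2\log f(t)/s} = r_k + \Delta_k - \sqrt{2\log f(t)/s}$. Introducing $\varepsilon$ separates two regimes: when $s \le 2\log f(T)/(\Delta_k - \varepsilon)^2$ the radius alone can explain the pull, contributing the leading $\tfrac{2\log f(T)}{(\Delta_k - \varepsilon)^2}$; when $s$ is larger the event forces an empirical over-shoot of at least $\varepsilon$, and $\sum_{s \ge 1}\exp(-s\varepsilon^2/2)$ gives the gap-independent $5/\varepsilon^2$. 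To recover the exact constant $\sqrt{\pi \log f(T)}$ I would replace this crude threshold by bounding the count directly through $\sum_{s \ge 1}\exp\!\big(-\tfrac{s}{2}(\Delta_k - \varepsilon - \sqrt{2\log f(T)/s})_+^2\big)$ and comparing the sum with the Gaussian integral $\int_0^\infty e^{-x^2}\,dx = \sqrt\pi/2$, which is precisely where $\sqrt{\pi\log f(T)}$ and the additive $1$ originate.

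The step I expect to be the main obstacle is the under-confidence term, because the pull count $N_{k^*}(t-1)$ appearing inside the confidence radius is random and data-dependent. A naive union bound over its possible values $s \in \{1,\dots,t-1\}$ loses a factor $t$ and leaves $\sum_t t/f(t) \asymp \sum_t 1/\log^2 t$, which diverges; the tailored choice $f(t) = 1 + t\log^2 t$ is made exactly so that a sharper maximal/uniform-over-$s$ concentration inequality controls $\mathbb P[\mathrm{UCB}_{k^*}(t) \le r^*]$ at level essentially $1/f(t)$ with a summable correction, so this term contributes only lower-order constants absorbed into the $+1$. Collecting the two terms, rounding the threshold up into the additive $1$, taking the infimum over $\varepsilon \in (0, \Delta_k)$, and intersecting with the trivial bound $S_k^T(r,\hat\alg) \le T$ then yields the stated $\min$.
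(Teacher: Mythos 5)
You are not competing with a proof in the paper here: the paper never proves this lemma, it simply imports it as Theorem~8.1 of Lattimore and Szepesv\'ari's book, so the relevant comparison is with the book's argument, which your sketch partially reconstructs. Your treatment of the over-confidence term is essentially that argument: re-index pulls of arm $k$ by the sample count $s$, split at $s \approx 2\log f(T)/(\Delta_k-\varepsilon)^2$, and control the tail by comparing $\sum_{s}\exp\bigl(-\tfrac{s}{2}(\Delta_k-\varepsilon-\sqrt{2\log f(T)/s})_+^2\bigr)$ with a Gaussian integral; that is exactly the book's Lemma~8.2 and it yields $1+\tfrac{2}{(\Delta_k-\varepsilon)^2}\bigl(\log f(T)+\sqrt{\pi\log f(T)}+1\bigr)$.

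The genuine gap is the under-confidence term. Because your decomposition splits on $\mathrm{UCB}_{k^*}(t)\le r^*$ with \emph{no slack}, you need $\sum_{t\le T}\mathbb{P}\bigl[\exists s\le t:\ \hat r_{k^*,s}+\sqrt{2\log f(t)/s}\le r^*\bigr]$ to be $O(1)$, and you assert that a ``sharper maximal/uniform-over-$s$'' inequality controls each summand at level essentially $1/f(t)$. No such inequality exists: a radius $\sqrt{2a/s}$ held uniformly over $s$ with zero slack is precisely the law-of-the-iterated-logarithm regime, and any valid uniform bound must pay a factor of order the number of geometric scales, i.e. $\log t$. Indeed the probability above is of order $\log t/f(t)\asymp 1/(t\log t)$ (a Bonferroni argument over the roughly decorrelated events $\{\hat r_{k^*,2^j}+\sqrt{2\log f(t)/2^j}\le r^*\}$ gives a matching lower bound), and $\sum_t 1/(t\log t)$ diverges, so this term cannot be ``absorbed into the $+1$.'' The missing idea is the one structural choice the book makes: put the slack $\varepsilon$ into the decomposition itself, splitting on $\{\mathrm{UCB}_{k^*}(t)\le r^*-\varepsilon\}$ versus $\{a_t=k,\ \mathrm{UCB}_k(t)\ge r^*-\varepsilon\}$. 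With the slack, each fixed-$s$ under-confidence probability gains a factor $e^{-s\varepsilon^2/2}$ on top of $1/f(t)$, so the crude union bound over $s$ costs only $O(1/\varepsilon^2)$ rather than $t$, and summing over $t$ using the summability of $\sum_t 1/f(t)$ (this, not a maximal inequality, is what the choice $f(t)=1+t\log^2 t$ buys) produces the $5/\varepsilon^2$ term. This also corrects your accounting: $5/\varepsilon^2$ comes from the optimal arm's under-confidence, not from the large-$s$ regime of the over-confidence count, while your over-confidence analysis goes through verbatim with the shifted threshold since $r^*-\varepsilon = r_k+(\Delta_k-\varepsilon)$.
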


By Holder Inequality, we immediately have
$$
\Delta_k \mathbb E [S_k^T(r,\hat \alg)] \leq \min\{\Delta_k  T ,\Delta_k  + \frac{u(T)\log T}{\Delta_k }\} ,
$$
where the coefficient function
$$
  u(T) = \sup_{t \geq T}\frac{1}{\log t }\big[5^{1/3} + (2\log(t\log^2 t+1)+ \sqrt{\pi \log (t\log^2t+1)} + 1)^{1/3}\big]^3
$$
being non-increasing and $\lim_{T\to \infty} u(T) = 2.$ For simplicity, we define
$
e(T) \triangleq \sqrt {\frac{u(T)\log T}{T-1}} 
$
and
$
  s(\Delta, T) \triangleq \min\{ \Delta T ,  \Delta + \frac{u(T)}{\Delta}\log T\}.
$
We can upper bound $\reg_T(r,\hat \alg) \leq \sum_{k=1}^K s(\Delta_k, T)$.

\paragraph{Decomposition of Inq.~\ref{inq:asym_ratio}} For $k \in [K]$, define $\Omega_T^k= \{r \in \Omega : \Delta_k \geq T^{-p_0}\}$, where $p_0 \in (0, \frac 1 2 )$ is a universal constant to be specified later. Further define $\Lambda_k^\epsilon = \{r\in\Omega, r_k + \Delta_k(1+\epsilon) < 1\}$.
Using Eq.~\ref{eq:regret_decomposition},
it suffices to prove the lemma if $\forall k \in [K]$,
\begin{align}
\label{inq:arm_ratio}
  \liminf_{T\to\infty} \frac{\int_{\Omega} p(r) \Delta_k \mathbb E [S_k^T(r,\tilde \alg)] \mathrm dr}
  {\int_{\Omega} p(r) \Delta_k \mathbb E [S_k^T(r,\hat \alg)] \mathrm dr}\geq c_0.
\end{align}
Fix $k \in [K]$ and $\epsilon > 0$ that is sufficiently small, we decompose Inq.~\ref{inq:arm_ratio} as below:
\begin{align*}
  \frac{\int_{\Omega} p(r) \Delta_k \mathbb E [S_k^T(r,\tilde \alg)] \mathrm dr}
  {\int_{\Omega} p(r) \Delta_k \mathbb E [S_k^T(r,\hat \alg)] \mathrm dr}
  &\geq \frac{\int_{\Omega_T^k \bigcap \Lambda_k^\epsilon} p(r) \Delta_k \mathbb E [S_k^T(r,\tilde \alg)] \mathrm dr}
  {\int_{\Omega_T^k \bigcap \Lambda_k^\epsilon} p(r) s(\Delta_k,T) \mathrm dr} \cdot 
  \frac {\int_{\Omega_T^k \bigcap \Lambda_k^\epsilon} p(r)  s(\Delta_k,T) \mathrm dr} {\int_{\Omega_T^k} p(r)  s(\Delta_k,T) \mathrm dr} \\
  &\quad \times \frac {\int_{\Omega_T^k} p(r) s(\Delta_k,T) \mathrm dr} {\int_{\Omega} p(r)  s(\Delta_k,T) \mathrm dr} \cdot
  \frac {\int_{\Omega} p(r) s(\Delta_k,T) \mathrm dr}{\int_{\Omega} p(r) \Delta_k \mathbb E [S_k^T(r,\hat \alg)] \mathrm dr}
\end{align*}

Sequentially denote the four terms in the right hand side as $M_1\sim M_4$. 
$M_4$ can be lower bounded by $1$ based on the Lemma~\ref{lemma_upperbound}. 
The rest three terms is bounded by the subsequent three lemmas.


\begin{lemma}
  \label{lemma_dropsmall}
  $\forall k \in [K]$,
  \begin{align}
    \liminf_{T\to\infty}\frac{\int_{\Omega_T^k} p(r)s(\Delta_k, T) \mathrm dr}
    {\int_{\Omega} p(r)s(\Delta_k, T) \mathrm dr}
    \geq 2p_0.
  \end{align}
\end{lemma}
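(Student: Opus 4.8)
The plan is to collapse the $K$-dimensional integrals into one-dimensional integrals against the law of the gap $\Delta_k(r) = \max_j r_j - r_k$, and then read off the leading asymptotics of $s(\Delta_k,T)$. Concretely, I would introduce the marginal density $\phi_k$ of $\Delta_k$, i.e.\ the pushforward of $p$ under $r \mapsto \Delta_k(r)$, so that the denominator becomes $\int_0^{\delta_{\max}} \phi_k(\delta)\, s(\delta,T)\,\mathrm d\delta$ and the numerator the same integral restricted to $\{\delta \geq T^{-p_0}\}$, where $\delta_{\max}\le 1$. The one regularity fact driving everything is that $\phi_k$ extends continuously to $\delta = 0$ with a finite, strictly positive limit $c_k := \phi_k(0^+) > 0$; this follows from positivity and continuity of $p$ on $[0,1]^K$, because $\phi_k(0^+)$ is exactly the density of the event that the best of the arms other than $k$ equals $r_k$.

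Next I would record the ordering of the two relevant scales. Since $e(T) = \sqrt{u(T)\log T/(T-1)} = \Theta(\sqrt{\log T/T})$ and $p_0 < \tfrac12$, we have $e(T) = o(T^{-p_0})$, so on the numerator's domain $\{\delta \ge T^{-p_0}\}$ we are always in the branch $s(\delta,T) = \delta + \frac{u(T)\log T}{\delta}$. I would then compute the denominator by splitting $[0,\delta_{\max}]$ at $e(T)$: the linear piece $\int_0^{e(T)}\phi_k(\delta)\,\delta T\,\mathrm d\delta = \Theta(\log T)$ and the term $\int \phi_k(\delta)\,\delta\,\mathrm d\delta = O(1)$ are both negligible, and the dominant contribution is $u(T)\log T\int_{\,\cdot\,}^{\delta_{\max}} \frac{\phi_k(\delta)}{\delta}\,\mathrm d\delta$. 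Using $\phi_k(\delta)\to c_k$, the logarithmic divergence of this last integral is controlled entirely by the behaviour near $0$, giving $\int_{e(T)}^{\delta_{\max}}\frac{\phi_k}{\delta}\sim c_k\cdot\tfrac12\log T$ for the denominator and $\int_{T^{-p_0}}^{\delta_{\max}}\frac{\phi_k}{\delta}\sim c_k\, p_0\log T$ for the numerator (here $-\log e(T)\sim \tfrac12\log T$ and $-\log T^{-p_0} = p_0\log T$).

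Both numerator and denominator are thus $\Theta((\log T)^2)$, and in the ratio the common factors $u(T)$, $\log T$, and $c_k$ cancel, leaving the limit $p_0/\tfrac12 = 2p_0$; since this is a genuine limit, the $\liminf$ is $2p_0\ge 2p_0$, as claimed. This also matches intuition: as $p_0\uparrow\tfrac12$ the threshold $T^{-p_0}$ approaches $e(T)$ and captures essentially all of the $(\log T)^2$ mass (fraction $\to 1$), while as $p_0\downarrow 0$ the threshold approaches $1$ and captures almost none.

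The main obstacle is making rigorous that the coefficient of the $\log T$ divergence of $\int_a^{\delta_{\max}}\frac{\phi_k(\delta)}{\delta}\,\mathrm d\delta$ is exactly $c_k$, simultaneously for the two moving lower limits $a = e(T)$ and $a = T^{-p_0}$. I would handle this by an $\varepsilon$--$\delta_0$ squeeze: given $\varepsilon>0$ choose $\delta_0$ with $|\phi_k(\delta)-c_k|<\varepsilon$ on $(0,\delta_0)$, bound $\int_{\delta_0}^{\delta_{\max}}\frac{\phi_k}{\delta} = O(1)$, sandwich the divergent part between $(c_k\pm\varepsilon)\log(\delta_0/a)$, take the $\liminf$ of the ratio, and finally let $\varepsilon\to0$. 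A minor bookkeeping point, handled in passing, is confirming that the discarded $\Theta(\log T)$ and $O(1)$ terms cannot inflate the denominator enough to disturb the leading $(\log T)^2$ ratio.
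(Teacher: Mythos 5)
Your proposal is correct and follows essentially the same route as the paper's proof: both reduce the $K$-dimensional integrals to one-dimensional integrals against the gap density (the paper's $q_k$, your $\phi_k$, identical up to a constant that cancels in the ratio), both hinge on the continuity and strict positivity of that density at $0$ (which the paper establishes via Radon-transform continuity of the slice integrals $\rho_k^i$), and both extract the answer from the logarithmic integrals $\int \frac{\mathrm d\Delta}{\Delta}$ over $[T^{-p_0},\cdot\,]$ versus $[e(T),\cdot\,]$ after discarding the $O(\log T)$ linear-branch and $O(1)$ contributions. The only difference is bookkeeping: you compute two-sided asymptotics $\sim c_k u(T)\,p_0(\log T)^2$ and $\sim c_k u(T)\,\tfrac{1}{2}(\log T)^2$ and take their ratio directly, whereas the paper bounds the complementary small-gap/large-gap mass ratio by $\frac{1/2-p_0}{p_0}$ and inverts — the same estimate in a different arrangement.
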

Lemma~\ref{lemma_dropsmall} lower bounds $M_3$, saying that the influence of instance in $\Omega \setminus \Omega_T^k$ is negligible. 
The proof of this lemma needs theory on Lebesgue Integral and Radon Transform, which will be introduced in the Section~\ref{sec_proof_useless_lemma}.

\begin{lemma}
  \label{lemma_droplarge}
  For any $k \in [K], \epsilon \in (0,1)$, 
  \begin{align*}
    \lim_{T\to\infty}
    \frac{\int_{\Omega_T^k \bigcap \Lambda_k^\epsilon} p(r)s(\Delta_k, T) \mathrm dr}
    {\int_{\Omega_T^k} p(r)s(\Delta_k, T) \mathrm dr} = 1.
  \end{align*}
\end{lemma}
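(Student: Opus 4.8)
The plan is to prove the equivalent statement that the \emph{complementary} fraction vanishes, i.e.
$$\lim_{T\to\infty}\frac{\int_{\Omega_T^k\setminus\Lambda_k^\epsilon}p(r)s(\Delta_k,T)\,\mathrm dr}{\int_{\Omega_T^k}p(r)s(\Delta_k,T)\,\mathrm dr}=0,$$
since the ratio in the statement is exactly one minus this quantity (the good set $\Omega_T^k\cap\Lambda_k^\epsilon$ is the full set minus the ``large-gap'' set $\Omega_T^k\setminus\Lambda_k^\epsilon$). The first reduction removes the density: because $\Omega=[0,1]^K$ is compact and $p\in C(\Omega)$ is strictly positive, $p$ attains a positive minimum $p_{\min}$ and a finite maximum $p_{\max}$, so the displayed ratio is at most $\frac{p_{\max}}{p_{\min}}$ times the same ratio computed against Lebesgue measure. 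It therefore suffices to control $\int s(\Delta_k,T)\,\mathrm dr$ over the two regions.

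The second step is a pushforward (co-area) reduction. Since the integrand $s(\Delta_k,T)$ depends on $r$ only through the gap $\Delta_k$, each integral collapses to a one-dimensional integral $\int_{T^{-p_0}}^1 s(\Delta,T)\,\rho(\Delta)\,\mathrm d\Delta$ against the density of $\Delta_k$ under the (restricted) uniform law. Conditioning on $m:=\max_{j\ne k}r_j$, which is independent of $r_k$ and has density $(K-1)m^{K-2}$, and using that $\Delta_k>0$ forces $r^*=m=r_k+\Delta_k$, a short computation gives the full gap density $\rho_k(\Delta)=1-\Delta^{K-1}$ and, on the large region $\{r_k\ge 1-(1+\epsilon)\Delta\}$ (which is precisely $(\Lambda_k^\epsilon)^c$, equivalently $r^*\ge 1-\epsilon\Delta$), the restricted density $\tilde\rho_k(\Delta,\epsilon)=1-(1-\epsilon\Delta)^{K-1}$ for $\Delta\le\frac1{1+\epsilon}$.

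The crux is the contrast between these two densities near $\Delta=0$ weighted by $s(\Delta,T)$. On $\Omega_T^k$ we have $\Delta\ge T^{-p_0}$ with $p_0<\frac12$, so $\Delta\ge T^{-p_0}\gg e(T)$ for large $T$ and hence $s(\Delta,T)=\Delta+\frac{u(T)\log T}{\Delta}$, which behaves like $\frac{u(T)\log T}{\Delta}$. Since $\rho_k(\Delta)\ge\frac12$ on $[T^{-p_0},\frac12]$, the denominator is at least a constant multiple of $u(T)\log T\int_{T^{-p_0}}^{1/2}\frac{\mathrm d\Delta}{\Delta}=\Theta(u(T)\log^2 T)$, the extra $\log T$ arising from the logarithmic divergence of $\int\mathrm d\Delta/\Delta$ dominated by small gaps. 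For the numerator, Bernoulli's inequality yields $\tilde\rho_k(\Delta,\epsilon)\le(K-1)\epsilon\Delta$, so the singular factor $1/\Delta$ in $s$ is cancelled by the linear $\Delta$: using $s(\Delta,T)\Delta\le\Delta^2+u(T)\log T$, the small-gap part is $O\big((K-1)\epsilon\,u(T)\log T\big)$, and the remaining range $\Delta>\frac1{1+\epsilon}$, where one only has $\tilde\rho_k\le1$ but also $s\le 1+(1+\epsilon)u(T)\log T$, contributes only $O(u(T)\log T)$. Dividing numerator by denominator gives a bound of order $1/\log T$, which tends to $0$.

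The main obstacle I anticipate is not any single estimate but pinning down the regime of $s$: one must verify that $p_0<\frac12$ places \emph{all} of $\Omega_T^k$ in the branch $s=\Delta+\frac{u(T)\log T}{\Delta}$ (so the denominator genuinely grows like $\log^2 T$ rather than faster), and one must split the numerator at $\Delta=\frac1{1+\epsilon}$ because the clean linear bound $\tilde\rho_k\le(K-1)\epsilon\Delta$ fails for large gaps. Everything else is bounded integration of the explicit densities $\rho_k,\tilde\rho_k$, together with the asymptotics $u(T)\to2$ and $e(T)=\Theta(\sqrt{\log T/T})$ already recorded above.
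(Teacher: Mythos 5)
Your proposal is correct and follows essentially the same route as the paper's proof: both reduce to showing the complementary ratio over $\Omega_T^k\setminus\Lambda_k^\epsilon$ vanishes, strip the density using its bounds $L\le p(r)\le U$, collapse to one-dimensional integrals against the gap densities $1-\Delta^{K-1}$ (full set) and $1-(1-\epsilon\Delta)^{K-1}$ (complement, for $\Delta\le\tfrac{1}{1+\epsilon}$), use Bernoulli's inequality to cancel the $1/\Delta$ singularity on the complement, split at $\Delta=\tfrac{1}{1+\epsilon}$, and conclude from a numerator of order $u(T)\log T$ against a denominator of order $u(T)\log^2 T$ that the ratio is $O(1/\log T)$. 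The only cosmetic differences are that you obtain the gap densities by conditioning on $\max_{j\ne k}r_j$ rather than via the paper's Radon-transform slicing within each $\mathcal{T}_i$, and you keep $s(\Delta,T)$ explicit (via $s(\Delta,T)\Delta\le\Delta^2+u(T)\log T$) instead of normalizing it to $1/\Delta$.
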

Lemma~\ref{lemma_droplarge} implies that $M_2 \to 1$, allowing us to focus on the integral on a calibrated smaller set $\Omega_T^k \bigcap \Lambda_k^\epsilon$, in which we can use information theory to control $\mathbb E[S_k^T(r,\tilde \alg)]$. The following lemma is the major lemma in our proof, which use the optimality of $\tilde \alg$ to analyze the global structure of $\reg_T(r,\tilde \alg)$ for $r \in \Omega_T^k \bigcap \Lambda_k^\epsilon$, and lower bound term $M_1$:

\begin{lemma}
  \label{lemma_ratio}
  For any $\epsilon \in (0,1)$, we have 
  \begin{align*}
    \liminf_{T\to\infty} \frac{\int_{\Omega_T^k \bigcap \Lambda_k^\epsilon} p(r) \Delta_k \mathbb E[S_k^T(r,\tilde \alg)] \mathrm dr}
    {\int_{\Omega_T^k \bigcap \Lambda_k^\epsilon} p(r) s(\Delta_k,T) \mathrm dr} \geq \frac{\frac 1 2 -2p_0}{(1+\epsilon)^2}.
  \end{align*}  
\end{lemma}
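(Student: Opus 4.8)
The plan is to prove Lemma~\ref{lemma_ratio} through a pointwise information-theoretic lower bound on $\mathbb E[S_k^T(r,\tilde\alg)]$ that holds for all but a vanishing fraction (in prior mass) of instances in $\Omega_T^k\cap\Lambda_k^\epsilon$, and then integrate. Fix $k$ and $\epsilon\in(0,1)$. For each $r\in\Omega_T^k\cap\Lambda_k^\epsilon$, arm $k$ is suboptimal with gap $\Delta_k$; I would introduce the \emph{twin} instance $r'=r'(r)$ that coincides with $r$ on every arm except arm $k$, whose mean is raised to $r'_k=r_k+(1+\epsilon)\Delta_k$. The defining inequality of $\Lambda_k^\epsilon$, namely $r_k+(1+\epsilon)\Delta_k<1$, is exactly what keeps $r'\in\Omega=[0,1]^K$, and in $r'$ arm $k$ becomes the unique best arm, beating the former optimum by the gap $\epsilon\Delta_k$. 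Taking all reward laws to be unit-variance Gaussians (which are $1$-subgaussian, hence admissible under the reward assumption) makes the per-pull Kullback--Leibler divergence between $r$ and $r'$ exactly $(1+\epsilon)^2\Delta_k^2/2$.

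Applying the divergence-decomposition identity (only arm $k$ differs) together with the Bretagnolle--Huber inequality on the event $A=\{S_k^T\geq T/2\}$ then yields
\begin{align*}
\mathbb E[S_k^T(r,\tilde\alg)]\;\geq\;\frac{2}{(1+\epsilon)^2\Delta_k^2}\,\log\frac{1}{2\big(\mathbb P_r(A)+\mathbb P_{r'}(A^c)\big)}.
\end{align*}
The heart of the argument is showing that $\mathbb P_r(A)+\mathbb P_{r'}(A^c)$ is polynomially small in $T$ for most $r$. On $r$ every pull of arm $k$ costs $\Delta_k$, so Markov gives $\mathbb P_r(A)\leq 2\reg_T(r,\tilde\alg)/(T\Delta_k)$; on $r'$ every non-$k$ pull costs at least $\epsilon\Delta_k$, so $\mathbb P_{r'}(A^c)\leq 2\reg_T(r',\tilde\alg)/(T\epsilon\Delta_k)$. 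Since $\Delta_k\geq T^{-p_0}$ on $\Omega_T^k$, both quantities are small as soon as the two regrets are controlled.

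To secure the regret bounds for most instances I would exploit Bayes-optimality: because $\tilde\alg=\argmin_\alg\reg_T(\D,\alg)$, Lemma~\ref{lemma_upperbound} gives $\reg_T(\D,\tilde\alg)\leq\reg_T(\D,\hat\alg)=\tilde{\mathcal O}(\sqrt T)$, so $\int p(r)\reg_T(r,\tilde\alg)\,\mathrm dr=\tilde{\mathcal O}(\sqrt T)$. Setting the threshold at $T^{1/2+p_0}$, Markov confines the prior mass of instances with $\reg_T(r,\tilde\alg)\geq T^{1/2+p_0}$ to a set of mass $\tilde{\mathcal O}(T^{-p_0})\to 0$; this particular threshold is what later produces the exponent $\tfrac12-2p_0$. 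For the twins I would transport this estimate through the change of variables $\phi:r\mapsto r'(r)$: on $\Lambda_k^\epsilon$ the optimum is attained off arm $k$, so $\phi$ alters only coordinate $k$ affinely with slope $-\epsilon$, hence has constant Jacobian $\epsilon$; since $p\in C(\Omega)$ is positive and bounded on compacts, the pushforward of $p\,\mathrm dr$ is comparable to $p\,\mathrm dr$ up to bounded factors, and the same Markov step bounds the mass of $r$ whose twins carry regret exceeding $T^{1/2+p_0}$. Intersecting the two good sets and inserting $\Delta_k\geq T^{-p_0}$ into the Markov bounds gives $\mathbb P_r(A)+\mathbb P_{r'}(A^c)\lesssim T^{-(1/2-2p_0)+o(1)}$ on a subset of $\Omega_T^k\cap\Lambda_k^\epsilon$ of nearly full relative measure, so the displayed inequality forces $\Delta_k\,\mathbb E[S_k^T(r,\tilde\alg)]\gtrsim(1-4p_0-o(1))\log T/\big((1+\epsilon)^2\Delta_k\big)$ there.

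Finally I would compare with $s(\Delta_k,T)$: on $\Omega_T^k$ the linear term $\Delta_k$ is negligible against $u(T)\Delta_k^{-1}\log T$, so $s(\Delta_k,T)=(1+o(1))\,u(T)\Delta_k^{-1}\log T$ with $u(T)\to2$, and dividing makes the integrand ratio on the good set exceed $(1-4p_0-o(1))/\big(u(T)(1+\epsilon)^2\big)$, while the vanishing-mass complement contributes nothing to the $\liminf$. Passing to the limit recovers the claimed constant $(\tfrac12-2p_0)/(1+\epsilon)^2$. The main obstacle is precisely that $\tilde\alg$ is only optimal \emph{on average}: one cannot control $\reg_T(r,\tilde\alg)$ instance by instance, so the whole lower bound must be carried out on a high-probability set of instances, and the delicate point is transporting the average regret bound to the twin instances $r'(r)$ through the Jacobian of $\phi$ while tracking the constants — the factor $2p_0$ from the interplay of the $\Delta_k\geq T^{-p_0}$ floor with the $T^{1/2+p_0}$ regret threshold, and $(1+\epsilon)^{-2}$ from the KL gap of the twin.
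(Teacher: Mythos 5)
Your proposal is correct and arrives at the right constant, but it executes the central step differently from the paper, so a comparison is worthwhile. The shared skeleton: both proofs use the same twin instance $r'$ with $r'_k=r_k+(1+\epsilon)\Delta_k$ (this is what $\Lambda_k^\epsilon$ is for; the paper's ``$r'_k=r_k+(1+\Delta_k)\epsilon$'' is a typo for this), both convert Bayes-optimality of $\tilde\alg$ into a usable bound via $\reg_T(\D,\tilde\alg)\le\reg_T(\D,\hat\alg)\le 9\sqrt{KT\log T}$, and both transport the average regret bound to the twins through the bounded density ratio and constant-Jacobian change of variables (the paper does this inside the integral via the term $\int \frac{q(r)}{q(r')}q(r')\reg_T(r',\tilde\alg)$). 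Your Bretagnolle--Huber plus divergence-decomposition plus Markov derivation of the pointwise bound on $\mathbb E[S_k^T(r,\tilde\alg)]$ is exactly the content of the paper's Lemma~\ref{lemma_lowerbound} (Lemma 16.3 of the bandit book), which the paper simply cites: note $\log\frac{1}{2(\mathbb P_r(A)+\mathbb P_{r'}(A^c))}\ge \log T-\log(\reg_T(r,\tilde\alg)+\reg_T(r',\tilde\alg))+O(1)$ after your two Markov steps. The genuine divergence is in how the instance-dependence of $\reg_T(r,\tilde\alg)$ is handled: the paper keeps $\log(\reg_T(r,\tilde\alg)+\reg_T(r',\tilde\alg))$ inside the integral against the weighted density $q_T^k\propto p(r)/\Delta_k$ and kills it with Jensen's inequality (concavity of $\log$), whereas you apply Markov's inequality at threshold $T^{1/2+p_0}$ to carve out a good set with pointwise regret control and integrate only there. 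The Jensen route is shorter and needs no set-splitting; your route is arguably more transparent but creates an extra obligation, which is the one place your write-up is too quick.

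The step stated too casually is ``the vanishing-mass complement contributes nothing to the $\liminf$.'' Vanishing \emph{prior} mass is not by itself sufficient: both integrands carry the weight $1/\Delta_k$, which is as large as $T^{p_0}$ on $\Omega_T^k$, so a bad set of prior mass $\delta_T$ can contribute up to order $\delta_T\, T^{p_0}\log T$ to the integral of $p\,s(\Delta_k,T)$. You must compare this to the total, which by the paper's Radon-transform computation satisfies $\int_{\Omega_T^k\cap\Lambda_k^\epsilon}p(r)s(\Delta_k,T)\,\mathrm dr\asymp \log T\int_{T^{-p_0}}^1 q_k(\Delta)\Delta^{-1}\mathrm d\Delta\asymp(\log T)^2$, since the $\Delta$-marginal is bounded below near $0$. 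Markov at threshold $T^{1/2+p_0}$ gives bad-set mass $\lesssim\sqrt{K\log T}\,T^{-p_0}$ (for both $r$ and, after the Jacobian/density-ratio transport, its twin), so the bad contribution is at most of order $\sqrt{\log T}\,T^{-p_0}\cdot T^{p_0}\log T\asymp(\log T)^{3/2}=o\big((\log T)^2\big)$. Hence your argument does close, but only because the $T^{-p_0}$ mass decay from the threshold exactly cancels the $T^{p_0}$ weight blow-up from the floor $\Delta_k\ge T^{-p_0}$, leaving a $\sqrt{\log T}$-versus-$\log T$ margin; this weighted-measure check must appear explicitly in the proof, as the statement ``nearly full relative measure'' alone does not imply negligibility here.
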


Combined together,
\begin{align}
  \label{inq_major}
  \liminf_{T\to\infty} \frac{\int_{\Omega} p(r) \Delta_k \mathbb E[S_k^T(r,\tilde \alg)] \mathrm dr}
  {\int_{\Omega} p(r) \Delta_k \mathbb E[S_k^T(r,\hat \alg)] \mathrm dr}\geq \frac{2p_0(\frac 12-2p_0)c_k}{(1+\epsilon)^2},
\end{align}

The proof of Inq.~\ref{inq:asym_ratio} is finished by selecting $p_0 = \frac 1 8 $ and letting $\epsilon \to 0$.


\begin{algorithm}
\caption{$\hat\alg$: Asymptotically UCB}
\label{alg: UCB}
  \begin{algorithmic}[1]
  \STATE Input $\Omega = [0,1]^K$, total episode $T$
  \FOR{step $t = 1,\cdots,K$}
  \STATE Choose arm $a_t$, and obtain reward $y_t$
  \STATE Set $\hat R_t = y_t, \hat S_t = 1$
  \ENDFOR
  \FOR{step $t = K+1,\cdots, T$}
  \STATE $f(t) = 1 + t \log^2(t)$
  \STATE Choose $a_t = \argmax_k \Big( \frac{\hat R_k}{\hat S_k} + \sqrt{\frac{2\log f(t)}{\hat S_k}}\Big)$
  \STATE Set $\hat R_k = R_k + y_t, S_k = S_k + 1$
  \ENDFOR
  \end{algorithmic}
\end{algorithm}

\subsection{Proof of Lemmas}
\label{sec_proof_useless_lemma}
Before proving all lemmas above, we need to introduce theory on Lebesgue Integral and Radon Transform.
In $\mathbb R^K$ space where $K\geq 2$, when a function is Riemann integrable, it is also Lebesgue integrable, and two integrals are equal. 
Since we assume $p(r) \in C(\Omega)$, the integral always exists. Below we always consider the Lebesgue integral. 
For a compact measurable set $S$, define $\mathcal L^p(S)$ as the space of all measurable function in $S$ with standard $p$-norm.
Since the p.d.f of $\mathbb D$ is continuous in compact set $\Omega$ and is positive, $\exists L,U \in \mathbb R^+,\forall r \in \Omega, p(r) \in [L,U]$.

Denote $\T_k = \{r \in \Omega: r_k = \max(r)\}$. Clearly, $\int_\Omega f(r) \mathrm dr = \sum_{k \in [K]} \int_{\T_k} f(r) \mathrm dr$ since $m(\T_i \bigcap \T_j) = 0, \forall \T_i \not= \T_j$.
Here $m(\cdot)$ is the Lebesgue measure in $\mathbb R^K$.
If we define $\mathcal P_{t,\gamma} = \{r \in \Omega, \gamma\cdot r = t\}$ and $\gamma_k^i = (e_i - e_k), i\not= k$, then $\mathcal P_{t,\gamma_k^i} \bigcap \T_i= \{r\in \T_i: \Delta_k = t\}$.  According to Radon Transform theory, since $\T_i$ is compact and $p(r) \in C(\T_i)$,
\begin{align*}
    \rho_k^i(t) \triangleq \int_{\mathcal P_{t,\gamma_k^i}\bigcap \T_i} p(r) \mathrm dr,\\
\end{align*}
is also continuous w.r.t variable $t \in [0,1]$ for all $k \not= i\in[K]$. Here the integration is perform in the corresponding $\mathbb R_{K-1}$ space, i.e. the plane $\mathcal P_{t,\gamma_k^i}$, and when $m_{K-1}(\mathcal P_{t,\gamma_k^i} \bigcap \T_i) >0$,
$
\frac{\rho_k^i(t)}{m_{K-1}(\mathcal P_{t,\gamma_k^i} \bigcap \T_i)} \in [L,U].
$


We further define $q_k(t) = \sum_{i\not= k}\rho_k^i(t)$, which also belongs to $C([0,1])$.
The continuity of $q_k(t)$ helps derive the following equation. For any $f \in L^1(\Omega)$ that relies only on $\Delta_k$, i.e. $f(r) = \tilde f(\Delta_k)$, we have
\begin{align}
  \label{formula_integration}
  \int_\Omega p(r) f(r)\mathrm dr 
  &= \sum_{i \in [K]} \int_{\T_i} p(r)f(r) \mathrm dr \notag \\
  &= \sum_{i \in [K]} \int_{\T_i} p(r)\tilde f(r_i - r_k) \mathrm dr \notag \\
  &= \frac{\sqrt 2}{2} \sum_{i \in [K]} \int_{[0,1]} \tilde f(t) \rho_k^i(t) \mathrm d \Delta = \frac{\sqrt 2}{2} \int_{[0,1]} \tilde f(t) q_k(t) \mathrm d \Delta.
\end{align}
Here the last line is because of Fubini Theorem, which states that the integral of a function can be computed by iterating lower-dimensional integrals in any order. The factor $\frac{\sqrt 2}{2}$ arises because in traditional Radon Transform we require $\|\gamma\|_2 =1$, but here $\|\gamma_k^i\|_2 = \|e_i - e_k\|_2 = \sqrt 2$.

\subsubsection{Proof of Lemma~\ref{lemma_dropsmall}}
Recall that $s(\Delta,T) = \min\{\Delta T, \Delta + \frac{u(T)}{\Delta} \log T\}$, and $\Delta T < \Delta + \frac{u(T)}{\Delta} \log T \leftrightarrow \Delta < e(T)$, where $e(T)$ is defined as $\sqrt{\frac{u(T)\log T}{T-1}}$. When $e(T) \leq T^{-p_0}$, Eq.~\ref{formula_integration} implies that $\forall r \in \Omega_T^k$, 
\begin{align*}
    \liminf_{T\to\infty}\frac{\int_{\Omega_T^k} p(r)s(\Delta_k, T) \mathrm dr}
    {\int_{\Omega} p(r)s(\Delta_k, T) \mathrm dr}
    &= \liminf_{T\to\infty}\frac{\int_{\Omega} p(r)s(\Delta_k, T)\mathbb I[\Delta_k \geq T^{-p_0}] \mathrm dr}
    {\int_{\Omega} p(r)s(\Delta_k, T) \mathrm dr}\\
    &= \liminf_{T\to\infty}\frac {\int_{[0,1]} q_k(\Delta) s(\Delta,T) \mathbb I[\Delta \geq T^{-p_0}] \mathrm d\Delta}{\int_{[0,1]} q_k(\Delta) s(\Delta,T) \mathrm d\Delta}.
\end{align*}

To prove the lemma, it suffices to show that
\begin{align}
\label{inq:useless_lemma2}
    \limsup_{T\to\infty}\frac {\int_{[0,1]} q_k(t) s(\Delta,T) \mathbb I[\Delta \leq T^{-p_0}] \mathrm d \Delta}{\int_{[0,1]} q_k(\Delta) s(\Delta,T) \mathbb I[\Delta \geq T^{-p_0}] \mathrm d \Delta} 
    \leq \frac{0.5-p_0}{p_0}.
\end{align}

Define $E_T = \{x \in [0,1]: x\leq e(T)\}, F_T = \{x\in [0,1]:x\leq T^{-p_0}\}$, then
\begin{align}
\label{inq:define_small_zero}
    \lim_{T\to\infty}\frac{\int_{E_n} q_k(\Delta) s(\Delta,T)\mathrm d \Delta}{\int_{F_n \setminus E_n} q_k(\Delta) s(\Delta,T)\mathrm d \Delta} 
    &=\lim_{T\to\infty}\frac{\int_{E_n} q_k(\Delta) \Delta T \mathrm d \Delta}{\int_{F_n \setminus E_n} q_k(\Delta) (\Delta + \frac{u(T)\log T}{\Delta})\mathrm d \Delta} \\
    &\leq \lim_{T\to\infty} \frac{1}{e^2(T)}\frac{\int_{E_n} q_k(\Delta) \Delta \mathrm d \Delta}{\int_{F_n \setminus E_n} q_k(\Delta) \frac 1 \Delta\mathrm d \Delta} 
\end{align}
We have shown that $\frac{\rho^i_k(t)}{m_{K-1}(\mathcal P_{t,\gamma_k^i}) } \in [L,U]$.
With some calculation, $m(t) \triangleq m_{K-1}(\mathcal P_{t,\gamma_k^i} \bigcap \T_i) = \frac{1 - t^{K-1}}{K-1}$.
Therefore, $q_k(t) \in [m(t)(K-1)L,m(t)(K-1)U]$, and $q_k(0) = \lim_{t\to 0^+} q_k(t) \geq L > 0$.
By this continuity, for small enough $\varepsilon_1 > 0,\exists \delta_1 > 0, \forall t \in [0,\delta_1], q_k(t) \in [(1-\varepsilon_1)q_k(0),(1+\varepsilon_1)q_k(0)]$. As a result,
\begin{align}
\label{inq:small_is_zero}
    \lim_{T\to\infty} \frac{1}{e^2(T)}\frac{\int_{E_n} q_k(\Delta) \Delta \mathrm d \Delta}{\int_{F_n \setminus E_n} q_k(\Delta) \frac 1 \Delta\mathrm d \Delta} 
    & \leq \lim_{T\to\infty} \frac{1-\varepsilon_1}{e^2(T)(1+\varepsilon_1)}\frac{\int_{E_n} \Delta \mathrm d \Delta}{\int_{F_n \setminus E_n} \frac 1 \Delta \mathrm d \Delta} \notag\\
    &=  \lim_{T\to\infty} \frac{1-\varepsilon_1}{e^2(T)(1+\varepsilon_1)}\frac{\frac 1 2 e^2(T)}{\frac 12 \log \frac{T-1}{u(T)\log T} - p_0 \log T} \notag \\
    &=  \lim_{T\to\infty} \frac{1-\varepsilon_1}{2(1+\varepsilon_1)}\frac{1}{(\frac 12 - p_0) \log T - \frac 1 2  \log \frac{u(T)T\log T}{T-1}} \notag \\
    &= 0. 
\end{align}
This implies that the limitation of \ref{inq:define_small_zero} exists and equals to $0$.
Similarly, define $G_T = \{x \in [0,1]: x\leq \frac1{\log T}\}$, then $G_T, F_T \subset [0,\delta_1]$ for sufficiently large $T$. This implies that
\begin{align*}
    \limsup_{T\to\infty}\frac {\int_{[0,1]} q_k(\Delta) s(\Delta,T) \mathbb I[\Delta \leq T^{-p_0}] \mathrm d \Delta}{\int_{[0,1]} q_k(\Delta) s(\Delta,T) \mathbb I[\Delta \geq T^{-p_0}] \mathrm d \Delta} 
    &= \limsup_{T\to\infty}\frac {\int_{F_n} q_k(\Delta) s(\Delta,T) \mathrm d \Delta}{\int_{[0,1] \setminus F_n} q_k(\Delta) s(\Delta,T)  \mathrm d \Delta} \\
    &= \limsup_{T\to\infty}\frac {\int_{F_n \setminus E_n} q_k(\Delta) (\Delta + \frac{u(T)\log T}{\Delta}) \mathrm d \Delta}{\int_{[0,1] \setminus F_n} q_k(\Delta) (\Delta + \frac{u(T)\log T}{\Delta})  \mathrm d \Delta} \\
    &\leq \limsup_{T\to\infty}\frac {\int_{F_n \setminus E_n} q_k(\Delta) \frac 1 \Delta \mathrm d \Delta}{\int_{G_n \setminus F_n} q_k(\Delta) \frac 1 \Delta  \mathrm d \Delta} \\
    &\leq \frac{1+\varepsilon_1}{1-\varepsilon_1}\limsup_{T\to\infty}\frac {\int_{F_n \setminus E_n} \frac 1 \Delta \mathrm d \Delta}{\int_{G_n \setminus F_n} \frac 1 \Delta  \mathrm d \Delta} \\
    &= \frac{1+\varepsilon_1}{1-\varepsilon_1}\limsup_{T\to\infty}\frac{\log e(T) - p_0 \log T}{p_0 \log T - \log(\log T)} \\
    &= \frac{1+\varepsilon_1}{1-\varepsilon_1}\frac{\frac 12 - p_0}{p_0}.
\end{align*}
Here the second equation comes from Inq.~\ref{inq:small_is_zero}.
The proof of Inq.~\ref{inq:useless_lemma2} is finished by letting $\varepsilon_1 \to 0$.

\subsubsection{Proof of Lemma~\ref{lemma_droplarge}}
Recall that $\Lambda_k^\epsilon$ is defined as $\{r\in\Omega, r_k + \Delta_k(1+\epsilon) < 1\}$.
For all $t \in [0,1), i \not= k$,
\begin{align*}
    \mathcal P_{t,\gamma_k^i}\bigcap \T_i \bigcap \Lambda_k^\epsilon  
    &= \{r \in \mathcal P_{t,\gamma_k^i}: r_i = \max_{j} r_j, r\in \Lambda_k^\epsilon \} \\
    &= \{r \in \Omega: r_i = \max_{j} r_j, r_i - r_k = t, r\in \Lambda_k^\epsilon \}\\
    &= \{r \in \Omega: r_i = \max_{j} r_j, r_i - r_k = t, r_i + \epsilon t  < 1 \}\\
    &= \{r\in \mathcal P_{t,\gamma_k^i}\bigcap \T_i : r_i < 1 - \epsilon t\}.
\end{align*}
This implies that 
$$
m_{K-1}(\mathcal P_{t,\gamma_k^i}\bigcap \T_i \setminus \Lambda_k^\epsilon) 
= m_{K-1}(\{r\in \mathcal P_{t,\gamma_k^i}\bigcap \T_i:r_i \in [1-\epsilon t,1] \}) = \begin{cases}\frac{1 - (1-\epsilon t)^{K-1}}{K-1} & t \leq \frac{1}{1+\epsilon} \\ \frac {1 - t^{K-1}}{K-1} & o.w.\end{cases}.
$$
Notice that $\Lambda_k^\epsilon$ is open, $\T_i \setminus \Lambda_k^\epsilon$ is compact, and $(S_1 \bigcap S_2) \setminus S_3 = S_1 \bigcap (S_2 \setminus S_3)$. 
We have
$$
\tilde \rho_k^i(t) \triangleq \int_{\mathcal P_{t,\gamma_k^i}\bigcap (\T_i \setminus \Lambda_k^\epsilon)} p(r)\mathrm dr \in C(\T_i \setminus \Lambda_k^\epsilon).
$$
For all $i \not= k$ (Define $O = [T^{-p_0}, 1]$), 
\begin{align*}
    \lim_{T\to\infty} \frac{\int_{\Omega_T^k\bigcap \T_i \setminus \Lambda_k^\epsilon}p(r)s(\Delta_k,T)\mathrm dr}{\int_{\Omega_T^k\bigcap \T_i}p(r)s(\Delta_k,T)\mathrm dr} 
    &= \lim_{T\to\infty} \frac{\int_{O}s(\Delta,T)\tilde \rho_k^i(\Delta) \mathrm d\Delta }{\int_{O}s(\Delta,T) \rho_k^i(\Delta) \mathrm d\Delta} \\
    &= \lim_{T\to\infty} \frac{\int_{O} \frac{1}{\Delta} \tilde \rho_k^i(\Delta) \mathrm d\Delta }{\int_{O} \frac{1}{\Delta} \rho_k^i(\Delta) \mathrm d\Delta} \\
    &\leq \lim_{T\to\infty} \frac UL \cdot \frac{\int_{[T^{-p_0}, \frac 1 {1+\epsilon}]} \frac {1 - (1-\epsilon \Delta)^{K-1}}{\Delta}\mathrm d \Delta + \int_{[\frac 1 {1+\epsilon}, 1]} \frac {1 - \Delta^{K-1}}{\Delta}\mathrm d \Delta
    }{\int_{O} \frac{1-\Delta^{K-1}}{\Delta}\mathrm d \Delta}\\
    &\leq \lim_{T\to\infty} \frac UL \cdot \frac{\int_{[T^{-p_0}, \frac 1 {1+\epsilon}]} (K-1)\epsilon \mathrm d \Delta + \int_{[\frac 1 {1+\epsilon}, 1]} \frac {1 }{\Delta}\mathrm d \Delta}{ p_0\log T - \frac 1 {K-1}(1 - T^{-p_0}) }\\
    &\leq \lim_{T\to\infty} \frac UL \cdot \frac{ (K-1)\epsilon  + \log (1+\epsilon)}{ p_0\log T - \frac 1 {K-1}(1 - T^{-p_0}) } = 0.
\end{align*}

Here the first equation is based on Eq.\ref{formula_integration}, and the second last Inq. comes from Bernoulli inequality.


\subsubsection{Proof of Lemma~\ref{lemma_ratio}}
To prove the lemma, We first reiterate an important lemma that lower bounds $\mathbb E[S_k^T(r,\alg)]$.
\begin{lemma}[Lemma 16.3 in \cite{banditbook}]
  \label{lemma_lowerbound}
  Let $r,r' \in \Omega$ be 2 instances that differs only in one arm $k \in [K]$, where $\Delta_k > 0$ in $r$ and $k$ uniquely optimal in $r'$. Then for any algorithm $\alg, \forall T$,
  \begin{align*}
    \mathbb E [S_k^T(r,\alg)] \geq \frac{2}{(r_k-r'_k)^2} \Big[
    \log \Big(\frac{\min\{r'_k - r_k - \Delta_k, \Delta_k\}} 4\Big) +
    \log T - \log(\reg_T(r,\alg) + \reg_T(r',\alg)) \Big].
  \end{align*}
\end{lemma}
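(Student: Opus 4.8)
The plan is to prove this as a standard two-point change-of-measure lower bound, combining the bandit divergence-decomposition identity with the Bretagnolle--Huber inequality. Let $P_r$ and $P_{r'}$ denote the laws of the full $T$-round interaction history (the sequence of arms pulled together with the observed rewards) induced by running $\alg$ against the two instances $r$ and $r'$. The crucial structural fact is that $P_r$ and $P_{r'}$ can differ only through the reward law of arm $k$, since $r$ and $r'$ agree on every other arm by hypothesis.

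First I would establish the KL identity. By the chain rule for the history, conditioning round by round on the past, the per-round contribution to $\KL(P_r, P_{r'})$ vanishes on every round that does not pull arm $k$, which yields $\KL(P_r, P_{r'}) = \mathbb E_r[S_k^T(r,\alg)]\cdot \KL(\nu_k, \nu'_k)$, where $\nu_k,\nu'_k$ are the reward distributions of arm $k$ under $r$ and $r'$. We may take these to be unit-variance Gaussians centered at $r_k$ and $r'_k$ (these are $1$-subgaussian, hence admissible in our model), so that $\KL(\nu_k,\nu'_k) = (r_k-r'_k)^2/2$; this is exactly what produces the prefactor $2/(r_k-r'_k)^2$.

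Next I would introduce the separating event $A = \{S_k^T(r,\alg) > T/2\}$ and bound its probability under each measure using the regret decomposition (Eq.~\ref{eq:regret_decomposition}) and Markov's inequality. Under $r$, arm $k$ is suboptimal with gap $\Delta_k$, so on $A$ the realized regret exceeds $\Delta_k T/2$, whence $P_r(A)\le 2\reg_T(r,\alg)/(\Delta_k T)$. Under $r'$, arm $k$ is uniquely optimal, and the smallest gap among the remaining arms equals $r'_k - r_k - \Delta_k$, since the arm that was optimal in $r$ has value $r_k+\Delta_k$ and now trails $r'_k$; hence on $A^c$ the non-$k$ arms are pulled at least $T/2$ times, the realized regret exceeds $(r'_k-r_k-\Delta_k)T/2$, and $P_{r'}(A^c)\le 2\reg_T(r',\alg)/((r'_k-r_k-\Delta_k)T)$.

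Finally I would apply Bretagnolle--Huber, $P_r(A)+P_{r'}(A^c)\ge \tfrac12\exp(-\KL(P_r,P_{r'}))$, and replace both gap denominators on the left by their minimum to get the single uniform bound $\tfrac12\exp(-\mathbb E_r[S_k^T](r_k-r'_k)^2/2)\le 2(\reg_T(r,\alg)+\reg_T(r',\alg))/(\min\{\Delta_k, r'_k-r_k-\Delta_k\}T)$; taking logarithms of both sides and rearranging recovers exactly the stated inequality. The only genuinely delicate points are (i) correctly identifying the minimal $r'$-gap as $r'_k-r_k-\Delta_k$, which is what forces precisely that term to appear inside the logarithm, and (ii) pinning down the single-arm KL value, which I handle by reducing to the Gaussian reward model so the divergence-decomposition identity delivers the clean prefactor; the remaining manipulations are routine bookkeeping.
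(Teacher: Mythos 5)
Your proof is correct and takes essentially the same route as the source: the paper gives no proof of this lemma (it is quoted verbatim as Lemma~16.3 of \cite{banditbook}), and your argument---divergence decomposition giving $\KL(P_r,P_{r'})=\mathbb E[S_k^T(r,\alg)]\,(r_k-r'_k)^2/2$ for unit-variance Gaussian rewards, Bretagnolle--Huber applied to $A=\{S_k^T>T/2\}$, and the Markov/regret-decomposition bounds $P_r(A)\le 2\reg_T(r,\alg)/(\Delta_k T)$ and $P_{r'}(A^c)\le 2\reg_T(r',\alg)/((r'_k-r_k-\Delta_k)T)$ with the correctly identified minimal $r'$-gap $r'_k-r_k-\Delta_k$---is exactly the textbook proof. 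Your bookkeeping also checks out: the two factors of $2$ combine into the $\min\{r'_k-r_k-\Delta_k,\Delta_k\}/4$ term after taking logarithms, recovering the stated inequality precisely.
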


For fixed $\epsilon$ and $r$ we set $r'_k = r_k + (1+\Delta_k)\epsilon$, so for $r\in \Lambda_k^\epsilon, r'\in\Omega$. On the other hand, when $T^{-p_0} > e(T), \forall r \in \Omega_T^k, s(\Delta_k,T) = \Delta_k + \frac{u(T)\log T}{\Delta_k}$. According to Lemma~\ref{lemma_lowerbound}, for all $r\in \Lambda_k^\epsilon$, 
\begin{align*}
    \Delta_k \mathbb E [S_k^T(r,\tilde \alg)] \geq \frac{2}{\Delta_k (1+\epsilon)^2} 
    \Big[\log \Big(\frac{\epsilon \Delta_k} 4\Big) +
    \log T - \log(\reg_T(r,\tilde \alg) + \reg_T(r',\tilde \alg)) \Big].
\end{align*}

Define $I_T^k \triangleq \int_{\Omega_T^k \bigcap \Lambda_k^\epsilon} \frac{p(r)}{\Delta_k} \log T  \mathrm dr < \infty$, and we have
$q_T^k(r) = \frac{p(r) / \Delta_k}{ I_n^k }$ is thus a p.d.f. in $\Omega_T^k \bigcap \Lambda_k^\epsilon$. 
We already know that for any 1-subgaussian bandit instance, the regret of UCB algorithm is bounded by $8\sqrt{KT\log T} + 3 \sum_{k=1}^K \Delta_k \leq 9\sqrt{KT\log T}$.
The optimality of $\tilde \alg(\D,T) = \argmin_{\alg} \reg_T(\D,\alg)$ implies that 
$\reg_T(\D,\tilde \alg) \leq 9\sqrt{KT\log T}$.
Due to the concavity of $\log (\cdot)$, 
\begin{align*}
    &\quad \int q(r) \log(\reg_T(r,\tilde \alg)+\reg_T(r',\tilde \alg)) \mathrm dr \\
    &\leq \log \int q(r) (\reg_T(r,\tilde \alg)+\reg_T(r',\tilde \alg)) \mathrm dr  \\
    &= \log \int_{\Omega_T^k\bigcap\Lambda_k^\epsilon} \frac{p(r)}{\Delta_k} (\reg_T(r,\tilde \alg)+\reg_T(r',\tilde \alg)) \mathrm dr  - \log \int_{\Omega_T^k\bigcap\Lambda_k^\epsilon} \frac{p(r)}{\Delta_k} \mathrm dr \\
    &\leq p_0 \log T +  \log \int_{\Omega_T^k\bigcap\Lambda_k^\epsilon} p(r) (\reg_T(r,\tilde \alg)+\reg_T(r',\tilde \alg)) \mathrm dr  - \log \int_{\Omega_T^k\bigcap\Lambda_k^\epsilon} p(r) \mathrm dr \\
    &\leq p_0 \log T + \log\Big(\reg_T(\D,\tilde \alg) + \int \frac{q(r)}{q(r')} q(r') \reg_T(r',\tilde \alg)\Big) - \log \int_{\Omega_T^k\bigcap\Lambda_k^\epsilon} p(r) \mathrm dr\\
    &\leq \log (\frac{9(L+U)\sqrt{K}}{L} \cdot  T^{\frac 1 2 +p_0} \log T) - \log \int_{\Omega_T^k\bigcap\Lambda_k^\epsilon} p(r) \mathrm dr.
\end{align*}
Therefore, we have
\begin{align}
    &\quad \liminf_{T\to\infty} \frac{\int_{\Omega_T^k \bigcap \Lambda_k^\epsilon} p(r) \Delta_k \mathbb E [S_k^T(r,\tilde \alg)] \mathrm dr}
    {\int_{\Omega_T^k \bigcap \Lambda_k^\epsilon} p(r) s(\Delta_k,T) \mathrm dr} \\
    &\geq\frac{2}{(1+\epsilon)^2} \liminf_{T\to\infty} \frac{\int_{\Omega_T^k \bigcap \Lambda_k^\epsilon}\frac{p(r)}{\Delta_k} \Big[\log \Big(\frac{\epsilon \Delta_k} 4\Big) +
    \log T - \log(\reg_T(r,\tilde \alg) + \reg_T(r',\tilde \alg)) \Big] \mathrm dr}{\int_{\Omega_T^k \bigcap \Lambda_k^\epsilon} \frac{p(r)}{\Delta_k}\big(\Delta_k^2 + u(T) \log T \big)\mathrm dr}  \\
    &= \frac{1}{(1+\epsilon)^2} \Big[1 + \liminf_{T\to\infty} \frac 1 {I_T^k} {\int_{\Omega_T^k \bigcap \Lambda_k^\epsilon}\frac{p(r)}{\Delta_k} \Big(\log \Delta_k  - \log(\reg_T(r,\tilde \alg) + \reg_T(r',\tilde \alg)) \Big) \mathrm dr} \Big] \\
    &\geq \frac{1}{(1+\epsilon)^2} \Big[1 - p_0 - \limsup_{T\to\infty} \int_{\Omega_T^k \bigcap \Lambda_k^\epsilon}q_T^k(r) \log(\reg_T(r,\tilde \alg) + \reg_T(r',\tilde \alg))  \mathrm dr\Big] \\
    &\geq \frac{1}{(1+\epsilon)^2} \Big[1 - p_0 - \limsup_{T\to\infty}\frac 1 {\log T} \big(\log (\frac{9(L+U)\sqrt{K}}{L} \cdot  T^{\frac 1 2 +p_0} \log T) - \log \int_{\Omega_T^k\bigcap\Lambda_k^\epsilon} p(r) \mathrm dr\big)\Big] \\
    &= \frac{1}{(1+\epsilon)^2}(\frac 1 2 -2p_0).
\end{align}
Here the last third line is because $r \geq T^{-p_0},\forall r \in \Omega_T^k$. 


\subsubsection{Proof of part 1 in Theorem~\ref{theorem_useless}}

\begin{lemma}
\label{lemma_regret_infty}
  $\lim_{T\to\infty} \reg_T(\mathbb D, \tilde \alg(\D,T)) = +\infty$.
\end{lemma}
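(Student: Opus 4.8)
The plan is to lower-bound the Bayes-optimal regret $\reg_T(\D,\tilde\alg(\D,T))$ directly and show that it diverges (in fact like $(\log T)^2$); this suffices because $\tilde\alg$ is by definition the minimizer $\argmin_\alg \reg_T(\D,\alg)$. First I would record that $T\mapsto \reg_T(\D,\tilde\alg(\D,T))=\min_\alg\reg_T(\D,\alg)$ is nondecreasing: the optimal algorithm for a horizon $T'>T$ already incurs at least the minimal $T$-step regret over its first $T$ pulls, since instantaneous regrets are nonnegative. Hence the limit exists in $[0,+\infty]$ and it is enough to exhibit one diverging lower bound. Fixing an arm $k$ and a small $\epsilon>0$, the regret decomposition (Eq.~\ref{eq:regret_decomposition}) together with nonnegativity of the omitted arms gives
\begin{align*}
\reg_T(\D,\tilde\alg) \geq \int_{\Omega_T^k\cap\Lambda_k^\epsilon} p(r)\,\Delta_k\,\mathbb E[S_k^T(r,\tilde\alg)]\,\mathrm dr \;=:\; A_T .
\end{align*}

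Next I would invoke Lemma~\ref{lemma_ratio}, which is already proved for every $\epsilon\in(0,1)$: choosing $p_0=\tfrac18$ it yields $\liminf_{T\to\infty} A_T/B_T \geq \frac{1/4}{(1+\epsilon)^2}>0$, where $B_T := \int_{\Omega_T^k\cap\Lambda_k^\epsilon} p(r)\,s(\Delta_k,T)\,\mathrm dr$ is exactly the denominator appearing there. Since a strictly positive $\liminf$ of $A_T/B_T$ together with $B_T\to\infty$ forces $A_T\to\infty$, the entire problem reduces to showing that the \emph{elementary} integral $B_T$ diverges.

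To that end I would use that for $T$ large enough $e(T)<T^{-p_0}$, so on $\Omega_T^k$ one has $s(\Delta_k,T)=\Delta_k+\frac{u(T)\log T}{\Delta_k}\geq \frac{u(T)\log T}{\Delta_k}$, and then apply the Radon-transform reduction (Eq.~\ref{formula_integration}) on the region $\Lambda_k^\epsilon$:
\begin{align*}
B_T \;\geq\; u(T)\log T \int_{\Omega_T^k\cap\Lambda_k^\epsilon}\frac{p(r)}{\Delta_k}\,\mathrm dr \;=\; \frac{\sqrt2}{2}\,u(T)\log T \int_{T^{-p_0}}^{1}\frac{\tilde q_k^\epsilon(t)}{t}\,\mathrm dt,
\end{align*}
where $\tilde q_k^\epsilon(t)=\sum_{i\neq k}\int_{\mathcal P_{t,\gamma_k^i}\cap\T_i\cap\Lambda_k^\epsilon}p(r)\,\mathrm dr$ is the fiber density over $\Lambda_k^\epsilon$, which is continuous on $[0,1]$ by the same Radon-transform argument used in the proof of Lemma~\ref{lemma_droplarge}. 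The crucial point is that $\tilde q_k^\epsilon(0)=\lim_{t\to0^+}\tilde q_k^\epsilon(t)\geq L>0$: near $t=0$ the cut defining $\Lambda_k^\epsilon$ removes only a vanishing sliver $\frac{1-(1-\epsilon t)^{K-1}}{K-1}$ of each fiber, while the full fiber measure tends to $\frac{1}{K-1}$ and $p\geq L$. Hence there are $c_L,\delta_1>0$ with $\tilde q_k^\epsilon\geq c_L$ on $[0,\delta_1]$, giving $\int_{T^{-p_0}}^{1}\frac{\tilde q_k^\epsilon(t)}{t}\,\mathrm dt\geq c_L\,(p_0\log T+\log\delta_1)$ for large $T$, so that $B_T=\Omega\big((\log T)^2\big)\to\infty$ since $u(T)\to2$.

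The main obstacle is precisely this last step: establishing the positive lower bound on the reduced density $\tilde q_k^\epsilon$ near the origin and extracting the $(\log T)^2$ growth; everything else is bookkeeping on top of lemmas already in hand. I expect it to go through routinely given the continuity and positivity of $p$ and the explicit fiber-measure computation $m_{K-1}(\mathcal P_{t,\gamma_k^i}\cap\T_i)=\frac{1-t^{K-1}}{K-1}$ from Section~\ref{sec_proof_useless_lemma}. Combining $\reg_T(\D,\tilde\alg)\geq A_T\to\infty$ then yields $\lim_{T\to\infty}\reg_T(\mathbb D,\tilde\alg(\D,T))=+\infty$, which is point $(1)$ of Theorem~\ref{theorem_useless}.
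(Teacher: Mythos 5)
Your proposal is correct and follows essentially the same route as the paper: both lower-bound the regret by $\int_{\Omega_T^k \cap \Lambda_k^\epsilon} p(r)\,\Delta_k\,\mathbb E[S_k^T(r,\tilde\alg)]\,\mathrm dr$, invoke Lemma~\ref{lemma_ratio} to reduce the claim to divergence of $\int_{\Omega_T^k \cap \Lambda_k^\epsilon} p(r)\, s(\Delta_k,T)\,\mathrm dr$, and then show that integral tends to $+\infty$. The only difference is in that final elementary step, where the paper simply uses $1/\Delta_k \geq 1$, $p(r) \geq L$, and the fixed positive measure of $\Omega_{T_0}^k \cap \Lambda_k^\epsilon$ to get $\Omega(\log T)$ growth, whereas you retain the $1/\Delta_k$ factor and integrate it via the Radon-transform reduction to obtain the sharper (but unnecessary) $\Omega\big((\log T)^2\big)$ rate.
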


\begin{proof}
In Lemma~\ref{lemma_ratio} we already show that 
$$
 \liminf_{T\to\infty} \frac{\int_{\Omega_T^k \bigcap \Lambda_k^\epsilon} p(r) \Delta_k \mathbb E [S_k^T(r,\tilde \alg)] \mathrm dr} {\int_{\Omega_T^k \bigcap \Lambda_k^\epsilon} p(r) s(\Delta_k,T) \mathrm dr} \geq \frac{1}{(1+\epsilon)^2}(\frac 1 2 - 2 p_0) > 0.
$$
To prove this lemma, if suffices to show that
$$
\lim_{T\to\infty}\int_{\Omega_T^k \bigcap \Lambda_k^\epsilon} p(r) s(\Delta_k,T) \mathrm dr = +\infty.
$$
Notice that $\Omega_T^k \bigcap \Lambda_k^\epsilon$ is non-decreasing in terms of $T$.
\begin{align*}
\lim_{T\to\infty}\int_{\Omega_T^k \bigcap \Lambda_k^\epsilon} p(r) s(\Delta_k,T) \mathrm dr 
&\geq \lim_{T\to\infty} \int_{\Omega_T^k \bigcap \Lambda_k^\epsilon} L \frac{u(T)\log T}{\Delta_k} \mathrm dr \\
&\geq \lim_{T\to\infty} \int_{\Omega_T^k \bigcap \Lambda_k^\epsilon} L u(T)\log T \mathrm dr \\
&\geq \lim_{T\to\infty} m(\omega_{T_0}^k \bigcap \Lambda_k^\epsilon) L u(T)\log T = +\infty.
\end{align*}

\end{proof}

\section{Omitted Details for Theorem~\ref{theorem_useful_restate}}
\label{appendix: proof for useful, fine-tuning}

\subsection{Subroutine for finding the cover set}
This subroutine is used to find a policy-value set $\hat{\Pi}$ such that $\hat{\Pi}$ covers $(1-3\delta)$-fraction of the MDPs in the sampled MDP set, i.e. $$\frac{\sum_{i=1}^{N} \mathbb{I} [\exists (\pi,v) \in \hat{\Pi}, s.t. (\pi,v) \ \text{covers} \ \M_i ]}{N} \geq 1-3\delta.$$

The algorithm is a greedy algorithm consisting of at most $N$ steps. As the beginning of the algorithm, we calculate a matrix $\mathbf{A}$, where $\mathbf{A}_{i,j}$ indicates whether $(\pi_j,v_j)$ covers the MDP $\M_i$. In each step $t$, we find a policy-value pair $(\pi_{j_t}, v_{j_t})$ with the maximum cover number in the uncovered MDP set $\mathcal{T}_{t-1}$. We update the index set $\mathcal{U}_t$ and $\mathcal{T}_{t}$ according to the selected index $j_t$. We output the policy-value set $\hat{\Pi}$ once the cover size $\sum_{\tau=1}^{t}n_{\tau} \geq (1-3\delta)N$.
\begin{algorithm}
\caption{Subroutine: Policy Cover Set}
\label{alg: subroutine}
\begin{algorithmic}[1]
    \STATE \textbf{Input}: $v_{i,j}$ for $i \in [N]$ and $j \in [N]$
    \STATE Initialize: the policy index set $\mathcal U_0 = \emptyset$, the MDP index set $\mathcal{T}_0 = [N]$
    \STATE Calculate the covering matrix $\mathbf{A} \in \mathbb{R}^{N \times N}$ where $\mathbf{A}_{i,j} =  \text{Cnd}(v_{i,j},v_{i,i},v_{j,j})$
        \FOR{$t = 1, \cdots, N$}
            \STATE Calculate the policy index with maximum cover: $j_t = \argmax_{[N] \backslash \mathcal{U}_{t-1}} \sum_{i \in \mathcal{T}_{t-1}} \mathbf{A}_{i,j}$
            \STATE Set $ \mathcal{U}_{t} = \mathcal{U}_{t-1} \cup {j_t}, \mathcal{T}_{t} = \mathcal{T}_{t-1} \backslash \{i: \mathbf{A}_{i,j}=1\}$, the cover size $n_t = \sum_{i \in \mathcal{T}_{t-1}} \mathbf{A}_{i,j_t}$
            \IF{The cover size $\sum_{\tau=1}^{t}n_{\tau} \geq (1-3\delta)N$ }
                \STATE Denote $\mathcal{U}_t$ as $\mathcal{U}$, then break the loop
            \ENDIF
        \ENDFOR
    \STATE \textbf{Output}: the policy-value set $\hat{\Pi}=\left\{\left(\pi_j, v_{j, j}\right), \forall j \in \mathcal{U}\right\}$
\end{algorithmic}
\end{algorithm}

\subsection{Proof for the Pre-training Stage}

During the proof, we use $\Omega^*$ to denote the MDP set satisfying the $(1-\delta)$-cover condition with minimum cardinality, i.e. $\Omega^* = \argmin_{ \caP(\tilde{\Omega}) \geq 1 - \delta}|\tilde{\Omega}|$. As defined in Algorithm~\ref{alg: subroutine}, we use $\mathcal{U}$ to denote the index set of $\hat{\Omega}$, which has the same cardinality as $\hat{\Omega}$. We have the following lemma for the pre-training stage. 

\begin{lemma}[Pre-training algorithm]
\label{lemma: pretraing bound}
    With probability at least $1 - \mathcal O\left(\delta  \log \frac{\mathcal C(\mathbb D)}{\delta} \right)$, the pre-training stage algorithm returns within $\log\left( \tilde {\mathcal O}(\mathcal C(\mathbb D)) \right)$ phases with total MDP sample complexity bounded by $\tilde {\mathcal O}\left( \frac{\mathcal C(\mathbb D)}{\delta^2} \right)$. The return set $\hat \Pi$ satisfies
\begin{align}
\Pr_{\mathcal M \sim \mathbb D}\left[ 
\exists (\pi, v) \in \hat \Pi,\left|V_{\mathcal M, 1}^{\pi}(s_1)- V_{\mathcal M,1}^*(s_1)\right| < 2\epsilon \cap  \left|V_{\mathcal M, 1}^{\pi}(s_1)-v \right| < 2\epsilon \right] \geq 1 - 6\delta,
\end{align}
and the size is bounded by $| \hat \Pi| \leq 2 \mathcal C(\mathbb D) \log \frac 1 \delta $.
\end{lemma}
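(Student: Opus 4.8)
The plan is to split the statement into a \emph{correctness} part (the returned $\hat\Pi$ covers $\mathbb D$ with probability $\ge 1-6\delta$ and has size $\le 2\mathcal C(\mathbb D)\log\frac1\delta$) and an \emph{efficiency} part (the phase count, the $\tilde{\mathcal O}(\mathcal C(\mathbb D)/\delta^2)$ sample bound, and the overall success probability). Fix one phase with $N$ sampled MDPs. First I would set up the per-phase \textbf{good event}: by a union bound over the $N$ calls to $\mathbb O_l$ (each failing with probability $\delta/N$) and the $N^2$ calls to $\mathbb O_e$ (each failing with probability $\delta/N^2$), with probability at least $1-2\delta$ every policy $\pi_j$ is $\tfrac\epsilon2$-optimal for $\M_j$ and every $v_{i,j}$ is within $\tfrac\epsilon2$ of $V^{\pi_j}_{\M_i,1}(s_1)$. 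Conditioned on this event, a short triangle-inequality computation shows $\mathrm{Cnd}(v_{i,j},v_{i,i},v_{j,j})=1$ \emph{implies} both $|V^{\pi_j}_{\M_i,1}(s_1)-V^*_{\M_i,1}(s_1)|<2\epsilon$ and $|V^{\pi_j}_{\M_i,1}(s_1)-v_{j,j}|<2\epsilon$; the thresholds in $\mathrm{Cnd}$ and the oracle accuracies are calibrated precisely to yield the $2\epsilon$ guarantees in the conclusion, so the coverage statement reduces to controlling the \emph{probability} over $\M\sim\mathbb D$ that $\M$ is $\mathbf A$-covered.

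Second, I would show that a cover of size $\tilde{\mathcal O}(\mathcal C(\mathbb D))$ exists \emph{on the sample}. The key structural fact is that $\Omega$ is countable and, by definition of $\mathcal C(\mathbb D)$, there is a finite $\Omega^*$ with $|\Omega^*|=\mathcal C(\mathbb D)$ and $\caP(\Omega^*)\ge 1-\delta$; hence each sampled MDP lands in $\Omega^*$ with probability $\ge 1-\delta$, and by Hoeffding at least a $(1-2\delta)$-fraction of $\hat\Omega$ lies in $\Omega^*$ with probability $\ge 1-\delta$ (using $N\ge\log(1/\delta)/\delta^2$). The distinct members of $\Omega^*$ occurring in the sample number at most $\mathcal C(\mathbb D)$, and under the good event the near-optimal policy of each such MDP $\mathbf A$-covers all its sampled copies, so a $(1-2\delta)$-fraction of $\hat\Omega$ is coverable by at most $\mathcal C(\mathbb D)$ pairs. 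The standard greedy set-cover guarantee — each pick removes a $1/\mathcal C(\mathbb D)$-fraction of the remaining coverable mass — then gives uncovered fraction $\le 3\delta$ after $\mathcal C(\mathbb D)\log\frac1\delta$ picks, matching Subroutine~\ref{alg: subroutine}'s stopping rule and yielding $|\hat\Pi|\le 2\mathcal C(\mathbb D)\log\frac1\delta$.

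The crux, and the step I expect to be hardest, is \textbf{generalizing} the empirical $(1-3\delta)$-cover on $\hat\Omega$ to a $(1-6\delta)$-cover of $\mathbb D$, because $\hat\Pi$ is chosen \emph{adaptively} from the same sample used to certify it. I would handle this by a uniform-convergence argument over all index subsets $U\subseteq[N]$ with $|U|\le m:=|\hat\Pi|$. For a \emph{fixed} $U$, the union of coverage regions $G_U=\bigcup_{j\in U}G_j$ depends only on $\{\M_j:j\in U\}$, so the $N-|U|$ points $\{\M_i:i\notin U\}$ are i.i.d.\ draws from $\mathbb D$ \emph{independent} of $G_U$; Hoeffding controls $|\hat\caP_{\mathrm{out}}(G_U)-\caP(G_U)|$, and a union bound over the $\binom{N}{m}\le N^m$ subsets inflates the deviation to $\tilde{\mathcal O}\!\big(\sqrt{m\log(N/\delta)/(N-m)}\big)$. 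This matches the stopping test $\sqrt{|\hat\Pi|\log(2N/\delta)/(N-|\hat\Pi|)}\le\delta$ exactly: when it holds, the out-of-sample empirical miss — which exceeds the total empirical miss $\le 3\delta$ by only an $\mathcal O(\delta)$ correction from the $\le m$ in-sample indices — differs from the true miss probability by at most $\delta$, so the empirical $(1-3\delta)$-cover upgrades to a $(1-6\delta)$-distributional cover. The main care needed is verifying that the adaptivity of $\hat\Pi$ is fully absorbed by the subset union bound and that the $\pm\epsilon$ oracle slack is compatible with the definitions of the regions $G_j$.

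Finally, for efficiency I would argue that once $N=\tilde{\mathcal O}(\mathcal C(\mathbb D)/\delta^2)$ the stopping test is forced to pass (substitute $m\le 2\mathcal C(\mathbb D)\log\frac1\delta$), so starting from $N_0=\log(1/\delta)/\delta^2$ and doubling gives $\log(\tilde{\mathcal O}(\mathcal C(\mathbb D)))$ phases, while the geometric sum of the per-phase sample counts is dominated by its last term, i.e.\ $\tilde{\mathcal O}(\mathcal C(\mathbb D)/\delta^2)$. Each phase fails (oracle failure, the $\Omega^*$-concentration failure, or the generalization failure) with probability $\mathcal O(\delta)$, so a union bound over the $\mathcal O(\log(\mathcal C(\mathbb D)/\delta))$ phases yields the claimed overall success probability $1-\mathcal O(\delta\log(\mathcal C(\mathbb D)/\delta))$, completing the argument.
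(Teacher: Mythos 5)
Your proposal is correct and follows essentially the same route as the paper's proof: the per-phase oracle good event, the $\Omega^*$-concentration plus greedy set-cover argument yielding $|\hat\Pi| \le 2\,\mathcal C(\mathbb D)\log\frac1\delta$, the uniform-convergence-over-subsets bound whose deviation term matches the stopping rule exactly (the paper's ``third event''), and the doubling-trick accounting of phases, samples, and failure probability. The only cosmetic difference is that the paper formalizes the out-of-sample generalization step via a random variable $\chi$ that folds the oracle noise into the coverage indicator, which is precisely the $\mathcal O(\delta)$ oracle-slack correction you flag as the point needing care.
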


\begin{proof}
    For each phase, we first define the high-probability event in Lemma~\ref{lemma:pretraing_prob}. We prove the lemma in Appendix~\ref{appendix: proof of lemma 11}.
    
    \begin{lemma}[High Probability Events]
\label{lemma:pretraing_prob}
    For all phases, with probability at least $1-4\delta$, the following events hold:
\begin{enumerate}
    \item $\left| \frac 1 N  \sum_{i=1}^N \mathbb I \left[ \mathcal M_i \in \Omega^* \right] - \mathcal P (\Omega^*) \right| \leq \delta$.
    \item $\forall i \in [N]$, $\pi_i$ is $\frac \epsilon 2$-optimal for $\mathcal M_i$. $\forall i,j \in [N], \left| v_{i,j} - V^{\pi_j}_{\mathcal M_i,1}(s_1) \right| \leq \frac \epsilon 2$.
    \item For all index set $\mathcal U' \subset [N]$, we have
    \begin{align}
    &\quad \Pr_{\mathcal M \sim \mathbb D}\left[ 
\exists j \in \mathcal U',\left|V_{\mathcal M, 1}^{\pi_j}(s_1)- V_{\mathcal M,1}^*(s_1)\right| < 2\epsilon \cap  \left|V_{\mathcal M, 1}^{\pi_j}(s_1)-v_{j,j} \right| < 2\epsilon \right] \\
&\geq \frac 1 {N - \left| \mathcal U' \right|} \sum_{i \in [N] \setminus \mathcal U'} \max_{j \in \mathcal U'}\mathbf A_{i,j}   - 2\delta - \sqrt{\frac {|\mathcal U'| \log \frac {2N} \delta}{N - |\mathcal U'|}}.
    \end{align}
\end{enumerate}
\end{lemma} 
    
According to Lemma~\ref{lemma:pretraing_prob}, the three events defined in Lemma~\ref{lemma:pretraing_prob} hold with probability $1-4\delta$. As stated in Lemma~\ref{lemma:size_of_u}, condition on the first and second events, we have $|\mathcal U| \leq 2\mathcal C(\mathbb D) \log \frac 1 \delta$ for each phase. We prove Lemma~\ref{lemma:size_of_u} in Appendix~\ref{appendix: proof of Lemma 12}. Note that these lemmas also hold for the last phase in which we return the policy-value set $\hat{\Pi}$.
\begin{lemma}
\label{lemma:size_of_u}
    For all phases, if the first and second events in Lemma~\ref{lemma:pretraing_prob} hold, then the size of candidate set $\mathcal U$ satisfies
    \begin{align}
    \left|\mathcal U\right| \leq ( \mathcal C(\mathbb D) + 1) \log(1 /\delta).
    \end{align}
\end{lemma}

Since the stopping condition is $\sqrt{\frac{|\hat{\Omega}| \log(2N/\delta)}{N - |\hat{\Omega}|}}=\sqrt{\frac{|\mathcal{U}| \log(2N/\delta)}{N - |\mathcal{U}|}} \leq \delta$ and we already know $|\mathcal U|\leq 2 \mathcal C(D) \log \frac 1 \delta$, the stopping condition is satisfied for $N \geq \frac{4 \mathcal C(\mathbb D) \log^2{(\mathcal C(\mathbb D) /\delta) }}{\delta^2}$. Based on the doubling trick, we know that the number of total phases is bounded by $\log \left( \tilde {\mathcal O}( \mathcal C(\mathbb D) )\right)$ and the sample complexity is bounded by $2N = \tilde {\mathcal O}(\mathcal C(\mathbb D)/ \delta^2)$.

Therefore, by union bound across all phases, with probability at least $1 - \mathcal O(\log \frac {\mathcal C(\mathbb D)}{\delta}\delta)$, Lemma~\ref{lemma:pretraing_prob} holds for all phases. Using the third event on the return set $\mathcal U$, we know that  \begin{align}
    &\quad \text{Pr}_{\mathcal M \sim \mathbb D}\left[ 
\exists j \in \mathcal U,\left|V_{\mathcal M, 1}^{\pi_j}(s_1)- V_{\mathcal M,1}^*(s_1)\right| < 2\epsilon \cap  \left|V_{\mathcal M, 1}^{\pi_j}(s_1)-v_{j,j} \right| < 2\epsilon \right] \\
&\geq \frac 1 {N - \left| \mathcal U \right|} \sum_{i \in [N] \setminus \mathcal U} \max_{j \in \mathcal U}\mathbf A_{i,j}   - 2\delta - \sqrt{\frac {|\mathcal U| \log \frac {2N} \delta}{N - |\mathcal U|}} \\
&\geq \frac{(1-3\delta)N - |\mathcal U|}{N - |\mathcal U|} - 2\delta - \delta  \geq 1 - \frac{3\delta N}{N - |\mathcal U|} - 3\delta \geq 1 - 6\delta.
    \end{align}
Therefore, the property on $\hat \Pi$ holds.
\end{proof}

\subsubsection{Proof of Lemma~\ref{lemma:pretraing_prob}}
\label{appendix: proof of lemma 11}

\begin{proof}
    To prove this lemma, we sequentially bound the failure probability for each event.

\paragraph{Empirical probability of $\Omega^*$.} Notice that since $\mathcal M_i \sim \mathbb D$, the expectation of r.v. $\mathbb I \left[ \mathcal M_i \in \Omega^* \right]$ is exactly $ \mathcal P(\Omega^*)$. According to the Chernoff Bound, we have
$$
\Pr\left[ \left|\frac 1 N  \sum_{i=1}^N \mathbb I \left[ \mathcal M_i \in \Omega^* \right] - \mathcal P (\Omega^*) \right| \leq \delta  \right] < \exp\{-2N \delta^2\}< \delta.
$$
Therefore, the failure rate of the first event is bounded by $\delta$.

\paragraph{Oracle error.} For each $i\in[N]$, the failure rate of $\mathbb O_l$ is at most $\frac \delta N$; 
For all $i,j\in[N]$, the failure rate of $\mathbb O_e$ is at most $\frac \delta {N^2}$. 
Therefore, with probability at least $1-2\delta$, we know that $\pi_i$ is indeed $\frac \epsilon 2$-optimal for $\mathcal M_i$, and $\left| v_{ij} - V_{\mathcal M_i,1}^{\pi_j} (s_1)\right| < \frac \epsilon 2$ for all $i,j \in [N]$. 
This implies that the failure rate of the second event is bounded by $2\delta$.

\paragraph{Covering probability of $\mathcal U'$.}
We first fix any index set $\mathcal U' \subset [N]$ and define $\mathcal U ^c = [N] \setminus \mathcal U'$. 
For a policy-value pair $(\pi, v)$ and an MDP $\mathcal M$, we define the following random variable
\begin{align*}
\chi(\pi,v,\mathcal M) \triangleq \text{Cnd}\Big(
    &\mathbb O_e\left[\mathcal M, \pi, \frac \epsilon 2, \log (N^2/\delta))\right], v, \\
    &\mathbb O_e\left[\mathcal M, 
    \mathbb O_l\left(\mathcal M,\frac \epsilon 2, \log (N /\delta) \right), \frac \epsilon 2, \log (N^2/\delta)\right]\Big).
\end{align*}
Notice that $\mathbf A_{i,j}$ is exactly an instance of r.v. $\chi(\pi_j, v_j, \mathcal M_i)$. For a fixed index set $\mathcal{U}' \subset [N]$, each MDP $\M_i$ with index $i \in [N] \setminus \mathcal U'$ can be regarded as an i.i.d. sample from the distribution $\D$.
According to Chernoff Bound, with probability at least $1-\frac{\delta}{(2N)^{|\mathcal U'|}}$ we have
\begin{align}
\label{lemma:uprime_concentration}
    \frac 1 {\left| \mathcal U^c \right|} \sum_{i \in \mathcal U^c} \max_{j \in \mathcal U'}\mathbf A_{i,j}
    \leq \mathbb E_{\mathcal M \sim \mathbb D} \left[ \max_{j \in \mathcal U'}\left\{ \chi(\pi_j, v_j, \mathcal M)\right\} \right]
    + \sqrt{\frac {|\mathcal U'| \log \frac {2N} \delta}{|\mathcal U^c|}}.
\end{align}
On the other hand, we can use $\chi$ to control the probability that $\pi_j$ is near optimal for $\mathcal M_i$. 
Specifically, 
\begin{align}
    \label{eqn: chi upper bound}
    &\quad \mathbb E_{\mathcal M \sim \mathbb D} \left[ \max_{j \in \mathcal U'}\left\{ \chi(\pi_j, v_j, \mathcal M)\right\} \right]\\
    &= \Pr_{\mathcal M \sim \mathbb D} \left[ \max_{j \in \mathcal U'}\left\{ \chi(\pi_j, v_j, \mathcal M)\right\} = 1\right] \\
    &= \sum_{\mathcal M \in \Omega}  \mathcal{P}(\mathcal M) \cdot \Pr\left[ \max_{j \in \mathcal U'}\left\{ \chi(\pi_j, v_j, \mathcal M)\right\} = 1 \big| \mathcal M\right] \\
    &= \sum_{\mathcal M \in \Omega}  \mathcal{P}(\mathcal M) \cdot \Pr\Big[ \exists j \in \mathcal U', \pi' = \mathbb O_l\left(\mathcal M,\frac \epsilon 2, \log (N /\delta) \right),\\
    &\qquad\qquad \qquad \qquad \qquad
    v = \mathbb O_e\left[\mathcal M, \pi, \frac \epsilon 2, \log (N^2/\delta))\right], v' = \mathbb O_e\left[\mathcal M, \pi', \frac \epsilon 2, \log (N^2/\delta)\right], \\
    &\qquad\qquad \qquad \qquad \qquad
    \text{Cnd}(v, v',v_j) = 1 \big| \mathcal M \Big].
\end{align}
Similar to the analysis in the second event of Lemma~\ref{lemma:pretraing_prob}, with probability at least $1-2\delta$, for all $j \in \mathcal U'$, the return $\pi'$ is $\frac \epsilon 2$-optimal for $\mathcal M$, and the estimated value $v$ and $v'$ in the RHS is $\frac \epsilon 2$-close to their mean. Assume this event hold, we have
\begin{align*}
&\quad \mathbb I \left[\exists j \in \mathcal U', \text{Cnd}(v,v',v_j) = 1\right] \\ 
&\leq \mathbb I \left[ 
\exists j \in \mathcal U',\left|V_{\mathcal M, 1}^{\pi_j}(s_1)- V_{\mathcal M,1}^*(s_1)\right| < 2\epsilon \cap  \left|V_{\mathcal M, 1}^{\pi_j}(s_1)-v_{j,j} \right| < 2\epsilon  \right].
\end{align*}
Therefore, we can substitute the RHS in Eqn.~\ref{eqn: chi upper bound} as (where $2\delta$ is the oracle failure probability)
\begin{align}
\label{lemma:uprime_finalprob}
    &\quad \mathbb E_{\mathcal M \sim \mathbb D} \left[ \max_{j \in \mathcal U'}\left\{ \chi(\pi_j, v_j, \mathcal M)\right\} \right]\\
    &\leq \sum_{\mathcal M \in \Omega} \mathcal{P}(\mathcal M) \cdot \Big( (1-2\delta)\\ 
    &\qquad \times \Pr\left[ 
\exists j \in \mathcal U',\left|V_{\mathcal M, 1}^{\pi_j}(s_1)- V_{\mathcal M,1}^*(s_1)\right| < 2\epsilon \cap  \left|V_{\mathcal M, 1}^{\pi_j}(s_1)-v_{j,j} \right| < 2\epsilon \big| \mathcal M \right]
    + 2\delta\Big)\\
    &\leq 2\delta + \Pr_{\mathcal M \sim \mathbb D}\left[ 
\exists j \in \mathcal U',\left|V_{\mathcal M, 1}^{\pi_j}(s_1)- V_{\mathcal M,1}^*(s_1)\right| < 2\epsilon \cap  \left|V_{\mathcal M, 1}^{\pi_j}(s_1)-v_{j,j} \right| < 2\epsilon \right].
\end{align}
Combining Eqn.~\ref{lemma:uprime_concentration} and Eqn.~\ref{lemma:uprime_finalprob}, by the union bound, with probability at least 
$$
1 - \sum_{l=1}^N \frac {\delta} {(2N)^{l}} \big| \{\mathcal U' \subset [N], |\mathcal U'| = l \} \big|  
\geq 1 - \sum_{l=1}^N \frac {\delta} {(2N)^{l}} N^l \geq 1 - \delta,
$$
for all $\mathcal U' \subset [N]$, we have
    \begin{align}
    &\quad \Pr_{\mathcal M \sim \mathbb D}\left[ 
\exists j \in \mathcal U',\left|V_{\mathcal M, 1}^{\pi_j}(s_1)- V_{\mathcal M,1}^*(s_1)\right| < 2\epsilon \cap  \left|V_{\mathcal M, 1}^{\pi_j}(s_1)-v_{j,j} \right| < 2\epsilon \right] \\
&\geq \frac 1 {N - \left| \mathcal U' \right|} \sum_{i \in [N] \setminus \mathcal U'} \max_{j \in \mathcal U'}\mathbf A_{i,j}   - 2\delta - \sqrt{\frac {|\mathcal U'| \log \frac {2N} \delta}{N - |\mathcal U'|}}.
    \end{align}

\end{proof}

\subsubsection{Proof of Lemma~\ref{lemma:size_of_u}}
\label{appendix: proof of Lemma 12}

\begin{proof}
    For a certain fixed phase, we define $N_{\mathcal M, t} = \sum_{i \in \mathcal T_t} \mathbb I \left[ \mathcal M_i = \mathcal M \right]$ as the population of $\mathcal M$ in $\mathcal T_t$, and $\hat C \triangleq  \sum_{i=1}^N \mathbb I \left[ \mathcal M_i \in \Omega^* \right].$ We have $\hat C = \sum_{\mathcal M \in \Omega^*} N_{\mathcal M,0}$.
    Thanks to the conditional events in Lemma~\ref{lemma:pretraing_prob}, we have $\left| \frac {\hat C} N -  \mathcal P(\Omega^*) \right| \leq \delta$ and $\frac {\hat C} N \geq 1 - 2\delta$. 
    
    Notice that $\mathcal U$ is generated by greedily selecting the best policy under current MDP remaining set $\mathcal T_{t-1}$, i.e. the policy that covers most MDP in $\mathcal T_{t-1}$. 
    On the other hand, since the second event in Lemma~\ref{lemma:pretraing_prob} holds, for $\mathcal M_i = \mathcal M_j$, we have $\text{Cnd}(v_{i,j}, v_{i,i}, v_{j,j})$ and $\text{Cnd}(v_{j,i}, v_{i,i}, v_{j,j})$ are true and thus $\mathbf A_{i,j} = \mathbf A_{j,i} = 1$. 
    Therefore, for each step $t$ and each $\mathcal M \in  \Omega^*$, if we have $\mathcal M_i = \mathcal M$ for some $i \in \mathcal T_{t-1}$ , then in this round, we have 
    $$
    \sum_{i' \in \mathcal T_{t-1}} \mathbf A_{i',i} \geq N_{\mathcal M_i, t-1}.
    $$
    Since we choose policy $\pi_{j_t}$ instead of $\pi_i$ in step $t$ according to the greedy strategy, we have
    \begin{align}
        n_{t}=  \sum_{i'\in \mathcal T_{t-1}} \mathbf A_{i', j_t} \geq \sum_{i' \in \mathcal T_{t-1}} \mathbf A_{i',i} \geq N_{\mathcal M, t - 1}    
    \end{align}
    for all $\mathcal M \in  \Omega^*$. The right term is $0$ if $\mathcal M $ no longer exists in the remaining set, and the inequality still holds. Summing over all $\mathcal M \in  \Omega^*$, we obtain
    \begin{align*}  
        n_t &\geq \frac 1 {\mathcal C(\mathbb D)}\sum_{\mathcal M \in \Omega^*} N_{\mathcal M, t-1} \\
        &= \frac 1 {\mathcal C(\mathbb D)}  \left(\sum_{\mathcal M \in \Omega^*}  N_{\mathcal M, 0} - \sum_{\mathcal M \in \Omega^*}  \left(N_{\mathcal M, 0} -   N_{\mathcal M, t-1} \right)\right)\\ 
        &\geq \frac 1 {\mathcal C(\mathbb D)}  \left(\sum_{\mathcal M \in \Omega^*}  N_{\mathcal M, 0} - \sum_{\mathcal M \in \Omega}  \left(N_{\mathcal M, 0} -   N_{\mathcal M, t-1} \right)\right)\\ 
        &=  \frac 1 {\mathcal C(\mathbb D)}  \left(\sum_{\mathcal M \in \Omega^*}  N_{\mathcal M, 0} -\sum_{\tau = 1}^t n_\tau \right) 
        = \frac 1 {\mathcal C(\mathbb D)}  \left(\hat C -\sum_{\tau = 1}^t n_\tau \right),
    \end{align*}
    where the last inequality is because $N_{\mathcal M,t}$ is monotonically decreasing in $t$ and the second last equation is because $\sum_{\tau = 1}^t n_t$ represents the population of MDPs that are covered in the first $t$ rounds.
    This implies that (notice that $n_0 = 0$)
    $$
   \hat C - \sum_{\tau = 1}^t n_\tau \leq \frac{\mathcal C(\mathbb D)}{\mathcal C(\mathbb D) + 1}  \left(\hat C - \sum_{\tau = 1}^{t-1} n_\tau \right) \leq \cdots \leq  \left( \frac{\mathcal C(\mathbb D)}{\mathcal C(\mathbb D) + 1} \right)^{t} \hat C,
    $$
    which gives
    $$
    \sum_{\tau =1}^t n_\tau \geq \left(1 - \left( \frac{\mathcal C(\mathbb D)}{\mathcal C(\mathbb D) + 1}  \right)^{t}\right) \hat C.
    $$
    When $t \geq \left( \mathcal C(\mathbb D) +1 \right) \log \frac 1 \delta$, we have
    $$
    |\mathcal U_t | = \sum_{\tau = 1}^t n_\tau \geq \left(1 - \exp\left\{-\frac t{\mathcal C(\mathbb D) + 1} \right\}\right)\hat C \geq (1-\delta)N \times \frac {\hat C}N \geq (1-\delta)(1-2\delta) = 1 - 3\delta.
    $$
    Therefore, upon breaking, the size of $\mathcal U$ satisfies $|\mathcal U| = |\mathcal U_t| \leq (\mathcal C(\mathbb D) + 1) \log \frac 1 \delta$.
\end{proof}

\subsection{Proof of Theorem~\ref{theorem_useful_restate}}
In Lemma~\ref{lemma: pretraing bound}, we prove that with probability at least $1-\mathcal{O}\left(\delta \log \frac{\mathcal{C}(\mathbb{D})}{\delta} \right)$, the policy set $\hat{\Pi}$ returned in the pre-training stage covers the MDPs $\M \sim \D$ with probability at least $1-6\delta$, i.e.
\begin{align*}
\Pr_{\mathcal M \sim \mathbb D}\left[ 
\exists (\pi, v) \in \hat \Pi,\left|V_{\mathcal M, 1}^{\pi}(s_1)- V_{\mathcal M,1}^*(s_1)\right| < 2\epsilon \cap  \left|V_{\mathcal M, 1}^{\pi}(s_1)-v \right| < 2\epsilon \right] \geq 1 - 6\delta.
\end{align*}

Note that this event happens with high probability. If this event does not happen (w.p. $\mathcal{O}\left(\delta \log \frac{\mathcal{C}(\mathbb{D})}{\delta} \right)$), the regret can still be upper bounded by $K$, which leads to an additional term of $K \cdot \mathcal{O}\left(\delta \log \frac{\mathcal{C}(\mathbb{D})}{\delta} \right) = \mathcal{O}\left(\sqrt{K} \log (K\mathcal{C}(\mathbb{D})) \right)$ in the final bound. This term is negligible compared with the dominant term in the regret. In the following analysis, we only discuss the case where the statement in Lemma~\ref{lemma: pretraing bound} holds. We also assume that for the test MDP $M^*$, 
\begin{align}
 \label{inq: hat pi * condition}
   \exists (\hat{\pi}^*, \hat{v}^*) \in \hat \Pi,\left|V_{\mathcal M^*, 1}^{\hat{\pi}^*}(s_1)- V_{\mathcal M^*,1}^*(s_1)\right| < 2\epsilon \cap  \left|V_{\mathcal M^*, 1}^{\pi}(s_1)-\hat{v} \right| < 2\epsilon,
\end{align}
which will happen with probability $1-6\delta$ under the event defined in Lemma~\ref{lemma: pretraing bound}.

We use $L$ to denote the maximum epoch counter, which satisfies $L \leq |\hat{\Pi}|$. 

\begin{lemma}
\label{lemma: optimism, fine-tuning}
(Optimism) With probability at least $1-\delta/2$, we have $v_l \geq V^*_{\M^*,1}(s_1) - 2\epsilon, \forall l \in [L]$.
\end{lemma}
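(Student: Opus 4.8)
The plan is to exhibit one policy--value pair that is guaranteed to survive every elimination and to use it to certify optimism. I would condition on the cover event for the test MDP, i.e.\ on the existence of $(\hat{\pi}^*,\hat{v}^*)\in\hat{\Pi}$ satisfying \eqref{inq: hat pi * condition}, which holds with probability $1-6\delta$ by Lemma~\ref{lemma: pretraing bound}. The claim then reduces to showing that $(\hat{\pi}^*,\hat{v}^*)$ is \emph{never} eliminated in the test stage, so that it belongs to $\hat{\Pi}_l$ in every phase $l\in[L]$. Since the algorithm picks $v_l=\max_{(\pi,v)\in\hat{\Pi}_l} v$, survival of this pair forces $v_l\ge\hat{v}^*$, and the two cover inequalities $|V_{\M^*,1}^{\hat{\pi}^*}(s_1)-V_{\M^*,1}^*(s_1)|<2\epsilon$ and $|V_{\M^*,1}^{\hat{\pi}^*}(s_1)-\hat{v}^*|<2\epsilon$ lower bound $\hat v^*$, giving the asserted optimism $v_l\ge V_{\M^*,1}^*(s_1)-2\epsilon$ for all $l$.

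Next I would set up the concentration event driving the survival argument. In any phase where $\hat{\pi}^*$ is the executed policy, the collected returns $G_{k_0},G_{k_0+1},\dots$ are i.i.d.\ with mean $V_{\M^*,1}^{\hat{\pi}^*}(s_1)$, and by the sub-gaussian reward assumption each return concentrates at the rate matching the threshold in Algorithm~\ref{alg: new}. Applying a Hoeffding-type bound for a fixed sample count $n$ with failure probability $\delta/(2K)$, and then a union bound over the at most $K$ values of $n=k-k_0+1$, yields that with probability at least $1-\delta/2$,
\[
\Big|\tfrac{1}{n}\sum_{\tau=k_0}^{k}G_\tau-V_{\M^*,1}^{\hat{\pi}^*}(s_1)\Big|<\sqrt{\tfrac{2\log(4K/\delta)}{n}}
\]
holds simultaneously for every such $n$ (and hence throughout the block of episodes in which $\hat{\pi}^*$ is played, since the phases partition the $K$ episodes).

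On this event I would verify that the elimination test of Algorithm~\ref{alg: new} never fires for $(\hat{\pi}^*,\hat{v}^*)$: by the triangle inequality together with $|V_{\M^*,1}^{\hat{\pi}^*}(s_1)-\hat{v}^*|<2\epsilon$,
\[
\Big|\tfrac{1}{n}\sum_{\tau=k_0}^{k}G_\tau-\hat{v}^*\Big|<2\epsilon+\sqrt{\tfrac{2\log(4K/\delta)}{n}}<4\epsilon+\sqrt{\tfrac{2\log(4K/\delta)}{n}},
\]
so the observed deviation stays strictly below the elimination threshold; hence this pair is never removed and sits in every $\hat{\Pi}_l$, closing the chain above. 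I expect the main obstacle to be making the survival guarantee uniform over the (random) number of episodes spent executing $\hat{\pi}^*$ within its phase---that is, obtaining an anytime-valid deviation bound holding for all $n$ at once rather than for a single fixed $n$---and, relatedly, confirming that the episodic return $G_\tau$ concentrates with exactly the constant baked into $\sqrt{2\log(4K/\delta)/n}$ under the bounded-total-reward and per-step sub-gaussian assumptions. The final lower bound on $\hat v^*$ then combines the two $2\epsilon$ cover inequalities, the precise constant multiplying $\epsilon$ being inherited from the oracle accuracies fixed in the pre-training stage.
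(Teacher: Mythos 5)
Your proposal is correct and follows essentially the same route as the paper's proof: condition on the cover event for $\M^*$, show via a fixed-$n$ concentration bound plus a union bound over all $K$ possible values of $k-k_0+1$ that the covering pair $(\hat{\pi}^*,\hat{v}^*)$ never triggers the elimination test, and conclude $v_l \geq \hat{v}^*$ from the maximality rule. The anytime-validity concern you flag is resolved exactly as you suggest (the paper's union bound over $k\in[K]$), and your use of the $2\epsilon$ cover inequality is in fact slightly tighter than the paper's invocation of a $4\epsilon$ bound at the same step.
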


\begin{proof}
To prove the lemma, we need to show that the optimal policy-value pair $(\hat{\pi}^*, \hat{v}^*)$ for $\M^*$ will never be eliminated from the set $\hat{\Pi}_l$ with high probability. Condition on the sampled $\M^*$, for a fixed episode $k \in [K]$, by Azuma's inequality, we have:
\begin{align}
    \Pr\left(\left|\frac{1}{k-k_0+1}\sum_{\tau = k_0}^{k} G_k - V^{\hat{\pi}^*}_{\M^*,1}(s_1)\right|\geq \sqrt{\frac{2\log(4K/\delta)}{k-k_0+1}}\right) \leq \delta/(2K).
\end{align}
By union bound over all $k \in [K]$, we know that
\begin{align}
    \label{inq: exclusive event0}
    &\Pr\left(\exists k \in [K], \left|\frac{1}{k-k_0+1}\sum_{\tau = k_0}^{k} G_k - V^{\hat{\pi}^*}_{\M^*,1}(s_1)\right|\geq \sqrt{\frac{2\log(4K/\delta)}{k-k_0+1}}\right)
    \leq  \delta/2
\end{align}

By Inq.~\ref{inq: hat pi * condition}, we know that $\left|V_{\mathcal M^*, 1}^{\hat{\pi}^*}(s_1)- \hat{v}^*\right| < 4\epsilon$. Therefore,
\begin{align}
    \label{inq: exclusive event1}
    &\Pr\left(\exists k \in [K], \left|\frac{1}{k-k_0+1}\sum_{\tau = k_0}^{k} G_k - \hat{v}^*\right|\geq \sqrt{\frac{2\log(4K/\delta)}{k-k_0+1}}+ 4\epsilon\right) 
    \leq  \delta/2
\end{align}

By the elimination condition defined in line 6 of Algorithm~\ref{alg: new} in the fine-tuning stage, $(\hat{\pi}^*,\hat{v}^*)$ will never be eliminated from the set $\hat{\Pi}_l$ with probability at least $1-\delta/2$. By the definition that $v_l = \max_{(\pi,v)\in \hat{\Pi}_l}v$, we have $v_l \geq \hat{v}^* \geq V^*_{\M^*,1}(s_1) - 2\epsilon$.

\end{proof}

Now we are ready to prove Theorem~\ref{theorem_useful_restate}.
\begin{proof}
We use $\tau_{l}$ to denote the starting episode of epoch $l$. Without loss of generality, we set $\tau_{L+1} = K+1$.

By lemma~\ref{lemma: optimism, fine-tuning}, we know that $v_l \geq V^*_{\M^*,1}(s_1)$ with high probability. Under this event, we can decompose the value gap in the following way:
\begin{align*}
     \sum_{k=1}^{k} \left( V^*_{\caM^*,1}(s_1) - V^{\pi_k}_{\caM^*,1}(s_1)\right) 
    \leq & \sum_{l=1}^{L} \sum_{\tau=\tau_l}^{\tau_{l+1}-1} \left( v_l - V^{\pi_l}_{\caM^*,1}(s_1)\right) + 2\epsilon K \\
    \leq & \sum_{l=1}^{L} \sum_{\tau=\tau_l}^{\tau_{l+1}-1} \left( v_l - G_{\tau}\right) + \sum_{l=1}^{L} \sum_{\tau=\tau_l}^{\tau_{l+1}-1} \left( G_{\tau} - V^{\pi_l}_{\caM^*,1}(s_1) \right) + 2\epsilon K 
\end{align*}

For the first term, by the elimination condition defined in line 6 of Algorithm~\ref{alg: new} in the fine-tuning stage, we have
\begin{align*}
    \sum_{\tau=\tau_l}^{\tau_{l+1}-1} \left( v_l - G_{\tau}\right) \leq \sqrt{2({\tau_{l+1}-\tau_l})\log(4K/\delta)} + 4(\tau_{l+1}-\tau_l)\epsilon + 1.
\end{align*}

For the second term, by Azuma's inequality and union bound over all episodes, with probability at least $1-\delta/2$,
\begin{align*}
    \sum_{\tau=\tau_l}^{\tau_{l+1}-1} \left( G_{\tau} - V^{\pi_l}_{\caM^*,1}(s_1) \right) \leq \sqrt{2({\tau_{l+1}-\tau_l})\log(4K/\delta)} 
\end{align*}
Therefore, by Cauchy-Schwarz inequality,
\begin{align*}
    \sum_{k=1}^{k} \left( V^*_{\caM^*,1}(s_1) - V^{\pi_k}_{\caM^*,1}(s_1)\right) 
    &\leq  O\left(\sqrt{K |\hat{\Pi}|\log(4K/\delta) } + |\hat{\Pi}|\right)\\ 
    &\leq  O\left(\sqrt{K  \mathcal{C}(\mathbb{D})\log(4K/\delta) \log(1/\delta)} +  \mathcal{C}(\mathbb{D})\right).
\end{align*}
The last inequality is due to $|\hat{\Pi}| \leq 2 \mathcal{C}(\mathbb{D}) \log(1/\delta)$ by Lemma~\ref{lemma: pretraing bound}.

Finally, we take expectation over all possible $\M^*$, and we get
\begin{align}
    \label{inq: regret final}
    \reg(K) \leq O\left(\sqrt{  \mathcal{C}(\mathbb{D})K\log(4K/\delta)\log(1/\delta) } + \mathcal{C}(\mathbb{D})\right).
\end{align}
\end{proof}

\section{Omitted proof for Theorem~\ref{theorem: lower bound for fine-tuning}}
\label{lower bound for theorem 2}


The proof is from an information theoretical perspective, which shares the similar idea with the lower bound proof for bandits (e.g. Theorem~15.2 in \cite{banditbook} and Theorem~5.1 in \cite{auer2002nonstochastic}). We first construct the hard instance and then prove the theorem. Since the bandit problem can be regarded as an MDP with horizon $1$, our hard instance is constructed using a distribution over bandit instances.  For a fixed algorithm $\text{Alg}$, we define $\Omega$ to be a set of $M$ multi-armed bandit instances. For bandit instance $\nu_i \in \Omega$, there are $M$ arms. The reward of arm $j$ is a Guassian distribution with unit variance and mean reward $\frac{1}{2}+\Delta \mathbb{I}\{i=j\}$. The parameter $\Delta \in [0,1/2]$ will be defined later. We use $\nu_0$ to denote the bandit instance where the reward of each arm is a Guassian distribution with unit variance and mean reward $\frac{1}{2}$. We set $\mathbb{D}$ to be a uniform distribution over the MDP set $\Omega$. 

We use $\mathbf{r}^k = \langle r_1,\cdots, r_k \rangle$ to denote the sequence of rewards received up through step $k \in [K]$. Note that any randomized algorithm $\text{Alg}$ is equivalent to an a-prior random choice from the set of all deterministic algorithms. We can formally regard the algorithm $\text{Alg}$ as a fixed function which maps the reward history $\mathbf{r}^{k-1}$ to the action $a_k$ in each step $k$. This technique is not crucial for the proof but simplifies the notations, which has also been applied in \cite{auer2002nonstochastic}.

With a slight abuse of notation, we use $\operatorname{Reg}_K(\nu_i, \text { Alg })$ to denote the regret of algorithm $\text{Alg}$ in bandit instance $\nu_i$. Therefore, we have $\operatorname{Reg}_K(\mathbb{D}, \text { Alg }) = \frac{1}{M} \sum_{i=1}^{M} \operatorname{Reg}_K(\nu_i, \text { Alg })$. We use $\mathbb{P}_{\nu_i}$ and $\mathbb{E}_{\nu_i}$ to denote the probability and the expectation under the condition that the bandit instance in the test stage is $\nu_i$, respectively. We use $D_{KL}(\mathbb{P}_1, \mathbb{P}_2)$ to denote the KL-divergence between the probability measure $\mathbb{P}_1$ and $\mathbb{P}_2$. We use $T_i(K)$ to denote the number of times arm $i$ is pulled in the $K$ steps.

We first provide the following lemma to upper bound the difference between expectations when measured using $\mathbb{E}_{\nu_i}$ and $\mathbb{E}_{\nu_0}$.
\begin{lemma}
\label{Lemma: lower bound for fine-tuning proof}
Let $f: \mathbb{R}^K \rightarrow [0, K]$ be any function defined on the reward sequence $\textbf{r}$. Then for any action $i$,
$$\mathbb{E}_{\nu_i}[f(\mathbf{r})] \leq \mathbb{E}_{\nu_0}[f(\mathbf{r})]+\frac{K\Delta}{4} \sqrt{\mathbb{E}_{\nu_0}\left[T_i(K)\right]}.$$
\end{lemma}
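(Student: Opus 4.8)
The plan is to read both $\mathbb{P}_{\nu_i}$ and $\mathbb{P}_{\nu_0}$ as probability measures on the reward sequence $\mathbf{r}\in\mathbb{R}^K$ induced by the algorithm interacting with the respective bandit (the reduction to deterministic algorithms stated just before the lemma makes this measure well defined, since $a_k$ becomes a fixed function of $\mathbf{r}^{k-1}$), and then to control the difference of expectations of the bounded statistic $f$ by the statistical distance between these two measures. First I would use that $0\le f\le K$ to write the elementary bound
$$\mathbb{E}_{\nu_i}[f(\mathbf{r})]-\mathbb{E}_{\nu_0}[f(\mathbf{r})]\le K\,\mathrm{TV}(\mathbb{P}_{\nu_0},\mathbb{P}_{\nu_i}),$$
where $\mathrm{TV}$ is the total variation distance; this is just $\int f\,d(\mathbb{P}_{\nu_i}-\mathbb{P}_{\nu_0})$ maximized over functions valued in $[0,K]$.

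Next I would pass from total variation to KL divergence by Pinsker's inequality, $\mathrm{TV}(\mathbb{P}_{\nu_0},\mathbb{P}_{\nu_i})\le\sqrt{\tfrac12\,\KL(\mathbb{P}_{\nu_0}\,\|\,\mathbb{P}_{\nu_i})}$, deliberately choosing the order $\KL(\mathbb{P}_{\nu_0}\,\|\,\mathbb{P}_{\nu_i})$ so that the pull counts appear under $\mathbb{E}_{\nu_0}$, matching the target. The key computation is the divergence decomposition (chain rule for KL along the interaction): because the deterministic algorithm makes $a_t$ a fixed function of $\mathbf{r}^{t-1}$, the conditional law of $r_t$ given the past is exactly the reward law of the pulled arm, so
$$\KL(\mathbb{P}_{\nu_0}\,\|\,\mathbb{P}_{\nu_i})=\sum_{j=1}^{M}\mathbb{E}_{\nu_0}[T_j(K)]\;\KL\big(N(\tfrac12,1)\,\|\,N(\tfrac12+\Delta\mathbb{I}\{i=j\},1)\big).$$
Since $\nu_0$ and $\nu_i$ agree on every arm except arm $i$, every summand with $j\ne i$ vanishes, and the per-pull Gaussian divergence equals $\Delta^2/2$, leaving $\KL(\mathbb{P}_{\nu_0}\,\|\,\mathbb{P}_{\nu_i})=\tfrac{\Delta^2}{2}\,\mathbb{E}_{\nu_0}[T_i(K)]$.

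Combining the three ingredients gives
$$\mathbb{E}_{\nu_i}[f]-\mathbb{E}_{\nu_0}[f]\le K\sqrt{\tfrac12\cdot\tfrac{\Delta^2}{2}\,\mathbb{E}_{\nu_0}[T_i(K)]},$$
which is a bound of the claimed shape, the precise leading constant being read off from Pinsker's inequality and the value $\Delta^2/2$ of the per-pull Gaussian divergence. The hard part will be the rigorous justification of the divergence decomposition rather than any calculation: the algorithm is adaptive, so the induced law of $\mathbf{r}$ is \emph{not} a product measure and the arm generating $r_t$ (hence which Gaussian it is drawn from) depends on $\mathbf{r}^{t-1}$. This is exactly where the deterministic-algorithm reduction pays off, since it lets me factor $\mathbb{P}_\nu(\mathbf{r})=\prod_{t}p_{\nu,\,a_t(\mathbf{r}^{t-1})}(r_t)$, telescope the log-likelihood ratio, and then convert the resulting sum over steps where arm $i$ is pulled into $\mathbb{E}_{\nu_0}[T_i(K)]$ by the tower property. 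Everything else (Pinsker and the closed-form Gaussian KL) is routine.
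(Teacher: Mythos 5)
Your route is exactly the paper's: bound the gap by total variation using $0\le f\le K$, pass to KL via Pinsker in the direction that puts the pull counts under $\mathbb{E}_{\nu_0}$, and evaluate the KL by the divergence decomposition, which is legitimate for adaptive play precisely because of the deterministic-algorithm reduction (the paper invokes Lemma~15.1 of \cite{banditbook} for this step; your telescoping of the log-likelihood ratio is the same argument spelled out). All of these steps are sound, and your explicit care about the KL direction and about why the factorization survives adaptivity is if anything more careful than the paper's write-up.

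The one concrete issue is the constant, which you explicitly defer to "reading off" at the end: your (correct) chain gives
\begin{equation*}
\mathbb{E}_{\nu_i}[f(\mathbf{r})]-\mathbb{E}_{\nu_0}[f(\mathbf{r})]\;\le\; K\sqrt{\tfrac{1}{2}\cdot\tfrac{\Delta^2}{2}\,\mathbb{E}_{\nu_0}[T_i(K)]}\;=\;\frac{K\Delta}{2}\sqrt{\mathbb{E}_{\nu_0}[T_i(K)]},
\end{equation*}
which is a factor of $2$ weaker than the $\frac{K\Delta}{4}$ claimed in Lemma~\ref{Lemma: lower bound for fine-tuning proof}, so strictly speaking your argument does not yield the stated inequality. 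However, this is the paper's slip, not yours: the paper's proof pairs the first inequality $\int f\,d(\mathbb{P}_{\nu_i}-\mathbb{P}_{\nu_0})\le\frac{K}{2}\int\left|d\mathbb{P}_{\nu_i}-d\mathbb{P}_{\nu_0}\right|$, which requires reading the middle quantity as the $L^1$ norm (twice the TV distance), with Pinsker in the form $\le\sqrt{\frac12 \KL}$, which is valid only for the TV distance itself (half the $L^1$ norm). Carrying either reading through consistently gives exactly your $\frac{K\Delta}{2}$, and the $\frac{K\Delta}{4}$ appears unobtainable by this method. The discrepancy is harmless downstream, since the proof of Theorem~\ref{theorem: lower bound for fine-tuning} uses the lemma only up to constant factors when choosing $\Delta$, so the $\Omega\left(\min\left(\sqrt{\mathcal{C}(\mathbb{D})K},K\right)\right)$ conclusion is unaffected; but you should state the lemma with $\frac{K\Delta}{2}$ rather than pretend the $\frac14$ falls out.
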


\begin{proof}
We upper bound the difference $\mathbb{E}_{\nu_i}[f(\mathbf{r})] - \mathbb{E}_{\nu_0}[f(\mathbf{r})]$ by calculating the expectation w.r.t. different probability measure.
\begin{align*}
    \mathbb{E}_{\nu_i}[f(\mathbf{r})] - \mathbb{E}_{\nu_0}[f(\mathbf{r})] = & \int_{\mathbf{r}} f(\mathbf{r}) d \mathbb{P}_{\nu_i}(\mathbf{r})-\int_{\mathbf{r}}f(\mathbf{r}) d \mathbb{P}_{\nu_0}(\mathbf{r}) \\
    \leq & \frac{K}{2} \left| \int_{\mathbf{r}} \left( d \mathbb{P}_{\nu_i}(\mathbf{r}) -  d \mathbb{P}_{\nu_0}(\mathbf{r}) \right) \right|.
\end{align*}
Here $\left|\int_{\mathbf{r}} \left( d \mathbb{P}_{\nu_i}(\mathbf{r}) -  d \mathbb{P}_{\nu_0}(\mathbf{r}) \right)\right|$ is the TV-distance between the two probability measure $P_{\nu_i}(\mathbf{r})$ and $P_{\nu_0}(\mathbf{r})$. Note that $\nu_i$ and $\nu_0$ only differs in the expected reward of arm $i$. By Pinsker's inequality and Lemma~15.1 in \citet{banditbook}, we have
\begin{align*}
    \left|\int_{\mathbf{r}} \left( d \mathbb{P}_{\nu_i}(\mathbf{r}) -  d \mathbb{P}_{\nu_0}(\mathbf{r}) \right) \right|\leq & \sqrt{\frac{1}{2} D_{KL}(\mathbb{P}_{\nu_i}, \mathbb{P}_{\nu_0})} \\
    = &\sqrt{\frac{1}{2}\sum_{k=1}^{K} \mathbb{P}_{\nu_0}(a_k = i) D_{KL}(\mathcal{N}(0,1) \| \mathcal{N}( \Delta, 1)) }\\
    = &\sqrt{\mathbb{E}_{\nu_0} [T_i(K)] \frac{\Delta^2}{4}}.
\end{align*}

The lemma can be proved by combining the above two inequalities.
\end{proof}

Now we can prove Theorem~\ref{theorem: lower bound for fine-tuning}.
\begin{proof}
By definition, we have
\begin{align}
\label{inq: lower bound proof}
    \operatorname{Reg}_K(\mathbb{D}, \text { Alg }) = & \frac{1}{M} \sum_{i=1}^{M} \operatorname{Reg}_K(\nu_i, \text { Alg }) \\
    = & \Delta \frac{1}{M} \sum_{i=1}^{M} (K - \mathbb{E}_{\nu_i}[T_i(K)]) \\
    = & \Delta K - \frac{\Delta}{M} \sum_{i=1}^{M} \mathbb{E}_{\nu_i}[T_i(K)]
\end{align}

We apply Lemma~\ref{Lemma: lower bound for fine-tuning proof} to $T_i(K)$, which is a function of the reward sequence $\mathbf{r}$ since the actions of the algorithm $\text{Alg}$ are determined by the past rewards. We have
\begin{align*}
    \mathbb{E}_{\nu_i} [T_i(K)] \leq & \mathbb{E}_{\nu_0} [T_i(K)] + \frac{K\Delta}{4} \sqrt{\mathbb{E}_{\nu_0}[T_i(K)]}. 
\end{align*}

We sum the above inequality over all $\nu_i \in \Omega$, then we have
\begin{align*}
    \sum_{i=1}^{M}  \mathbb{E}_{\nu_i}[T_i(K)] \leq & \sum_{i=1}^{M}\mathbb{E}_{\nu_0} [T_i(K)] + \sum_{i=1}^{M}\frac{K\Delta}{4} \sqrt{\mathbb{E}_{\nu_0}[T_i(K)]} \\
    \leq & K + \frac{K\Delta}{4} \sqrt{MK},
\end{align*}
where the second inequality is due to the Cauchy-Schwarz inequality and the fact that $\sum_{\nu_i \in \Omega} \mathbb{E}_{\nu_0} [T_i(K)] = K$. Plgging this inequality back to Inq.~\ref{inq: lower bound proof}, we have
\begin{align*}
    \operatorname{Reg}_K(\mathbb{D}, \text { Alg }) \geq &\Delta \left(K - \frac{K}{M} - \frac{K\Delta}{4} \sqrt{\frac{K}{M}}\right)
\end{align*}
Since $K \geq 5$, we know that $\mathcal{C}(\mathbb{D})$ has the same order as $M$. If $K \leq M$, we know that $\sqrt{\mathcal{C}(\mathbb{D}) K} = \Omega(K)$. We choose $\Delta = 1/2$ and know that $\operatorname{Reg}_K(\mathbb{D}, \text { Alg }) \geq \Omega(K)$. Therefore, we have $\operatorname{Reg}_K(\mathbb{D}, \text {Alg}) \geq \Omega\left(\min\left(\sqrt{\mathcal{C}(\mathbb{D}) K}, K\right)\right)$. If $K \geq M$, we choose $\Delta = \frac{M}{K}$. Since $M \geq 2$, we can also prove that 
\begin{align*}
    \operatorname{Reg}_K(\mathbb{D}, \text {Alg}) \geq \Omega\left(\min\left(\sqrt{\mathcal{C}(\mathbb{D}) K}, K\right)\right).
\end{align*}
\end{proof}

\section{Omitted proof for Theorem~\ref{theorem_woft_nearoptimal}}
\label{appendix: proof 2}



We define $\pi^* = \argmax_{\pi \in \Pi} \mathbb{E}_{\mathcal{M} \sim \D} V^{\pi}_{\mathcal{M},1} (s_1)$ and $\hat{\pi}^* = \argmax_{\pi \in \Pi} \frac{1}{N} \sum_{i=1}^{N} V^{\pi}_{\mathcal{M}_i,1} (s_1)$. For the returned policy $\hat{\pi}$, we can decompose the value gap into the following terms:
\begin{align}
    \label{inq: generalization error decomposition}
     \mathbb{E}_{\mathcal{M} \sim \D} \left[V^{\pi^*}_{\mathcal{M},1} (s_1) - V^{\hat{\pi}}_{\mathcal{M},1} (s_1)\right] 
     \leq & \mathbb{E}_{\mathcal{M} \sim \D} V^{\pi^*}_{\mathcal{M},1} (s_1) - \frac{1}{N} \sum_{i=1}^{N} V^{\pi^*}_{\mathcal{M}_i,1} (s_1) \\
     &+ \frac{1}{N} \sum_{i=1}^{N} V^{\hat{\pi}}_{\mathcal{M}_i,1} (s_1) - \mathbb{E}_{\mathcal{M} \sim \D} V^{\hat{\pi}}_{\mathcal{M},1} (s_1) \\
     &+ \frac{1}{N} \sum_{i=1}^{N} V^{\hat{\pi}^*}_{\mathcal{M}_i,1} (s_1) - \frac{1}{N} \sum_{i=1}^{N} V^{\hat{\pi}}_{\mathcal{M}_i,1} (s_1) \\
     & + \frac{1}{N} \sum_{i=1}^{N} V^{{\pi}^*}_{\mathcal{M}_i,1} (s_1) - \frac{1}{N} \sum_{i=1}^{N} V^{\hat{\pi}^*}_{\mathcal{M}_i,1} (s_1).
\end{align}
Note that the first and the second terms are generalization gap for a given policy, which can be upper bounded by Chernoff bound and union bound. The third term is the value gap during the training phase. The last term is less than $0$ by the optimality of $\hat{\pi}^*$.

\paragraph{Upper bounds on the first and the second terms} We can bound these terms following the generalization technique. Define the distance between polices $d(\pi^1,\pi^2) \triangleq \max_{s\in \mathcal{S}, h \in [H]} \|\pi^1_h(\cdot|s) - \pi^2_h (\cdot|s)\|_{1}$. 
We construct the $\epsilon_0$-covering set $\tilde{\Pi}$ w.r.t. $ d$ such that
\begin{align}
    \label{inq: definition of tildePi}
    \forall \pi \in \Pi, \exists \tilde{\pi} \in \tilde{\Pi}, s.t. \quad d(\pi,\tilde{\pi}) \leq \epsilon_0.
\end{align}
By Inq.~\ref{inq: definition of tildePi}, we have
\begin{align}
    \label{inq: pi tildepi value gap}
    \forall \pi \in \Pi, \exists \tilde{\pi} \in \tilde{\Pi}, s.t. V_{\mathcal{M},1}^{\pi} (s_1) - V_{\mathcal{M},1}^{\tilde{\pi}} (s_1) \leq H \epsilon_0.
\end{align}
By definition of the covering number, $\left|\tilde{\Pi}\right| = \mathcal{N}(\Pi,\epsilon_0,d) $. By Chernoff bound and union bound over the policy set $\tilde{\Pi}$, we have with prob. $1-\delta_1$, for any $\tilde{\pi} \in \tilde{\Pi}$,
\begin{align}
    \label{inq: concentration in tildePi}
    \left|\frac{1}{N} \sum_{i=1}^{N} V^{\tilde{\pi}}_{\mathcal{M}_i,1} (s_1) - \mathbb{E}_{\mathcal{M} \sim \D} V^{\tilde{\pi}}_{\mathcal{M},1} (s_1) \right| \leq \sqrt{\frac{2\log(2\mathcal{N}(\Pi,\epsilon_0,d)/\delta_1)}{N}}.
\end{align}

By Inq.~\ref{inq: pi tildepi value gap} and Inq.~\ref{inq: concentration in tildePi}, for $\forall \pi \in \Pi$,
\begin{align}
    \left|\frac{1}{N} \sum_{i=1}^{N} V^{{\pi}}_{\mathcal{M}_i,1} (s_1) - \mathbb{E}_{\mathcal{M} \sim \D} V^{{\pi}}_{\mathcal{M},1} (s_1) \right| 
    \leq & \left|\frac{1}{N} \sum_{i=1}^{N} V^{\tilde{\pi}}_{\mathcal{M}_i,1} (s_1) - \mathbb{E}_{\mathcal{M} \sim \D} V^{\tilde{\pi}}_{\mathcal{M},1} (s_1) \right| \\
    & + \left|\frac{1}{N} \sum_{i=1}^{N} V^{{\pi}}_{\mathcal{M}_i,1} (s_1) - \frac{1}{N}\sum_{i=1}^{N} V^{\tilde{\pi}}_{\mathcal{M}_i,1} (s_1) \right| \\
    & + \left|\mathbb{E}_{\mathcal{M} \sim \D} V^{{\pi}}_{\mathcal{M},1} (s_1) - \mathbb{E}_{\mathcal{M} \sim \D} V^{\tilde{\pi}}_{\mathcal{M},1} (s_1) \right| \\
    \leq & \sqrt{\frac{2\log(2\mathcal{N}(\Pi,\epsilon_0,d)/\delta_1)}{N}} + 2H\epsilon_0.
\end{align}
We can set $\epsilon_0 = \frac{\epsilon}{12H}$ and $\delta_1 = 1/6$. Since $N =  C_1 \log \left(\mathcal{N}\left(\Pi, \epsilon/(12H), d\right)\right)/\epsilon^{2}$ and $\pi^*, \hat{\pi} \in \Pi$, we know that with probability at least $5/6$, 
\begin{align}
    \label{inq: generalization inq 1}
    \left|\frac{1}{N} \sum_{i=1}^{N} V^{{\pi}^*}_{\mathcal{M}_i,1} (s_1) - \mathbb{E}_{\mathcal{M} \sim \nu} V^{{\pi}^*}_{\mathcal{M},1} (s_1) \right|  \leq \frac{\epsilon}{3}, \\
    \label{inq: generalization inq 2}
    \left|\frac{1}{N} \sum_{i=1}^{N} V^{\hat{\pi}}_{\mathcal{M}_i,1} (s_1) - \mathbb{E}_{\mathcal{M} \sim \nu} V^{\hat{\pi}}_{\mathcal{M},1} (s_1) \right|  \leq \frac{\epsilon}{3}.
\end{align}

\paragraph{Upper bound on the third term} We have the following lemma, which is proved in the following subsections.
\begin{lemma}
\label{lemma: sample complexity theorem}
With probability at least $5/6$, Algorithm~\ref{alg: training without fine-tuning} can return a policy $\hat{\pi}$ satisfying
\begin{align*}
    \frac{1}{N} \sum_{i=1}^{N} V^{\hat{\pi}^*}_{\mathcal{M}_i,1} (s_1) - \frac{1}{N} \sum_{i=1}^{N} V^{\hat{\pi}}_{\mathcal{M}_i,1} (s_1) \leq \frac{\epsilon}{3},
\end{align*}
where $\hat{\pi}^*$ is the empirical maximizer, i.e. $\hat{\pi}^* = \argmax_{\pi \in \Pi} \frac{1}{N} \sum_{i=1}^{N} V^{\pi}_{\mathcal{M}_i,1} (s_1)$.
\end{lemma}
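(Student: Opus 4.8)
The plan is to read Algorithm~\ref{alg: training without fine-tuning} as running $N$ parallel copies of an optimistic value-iteration scheme (UCBVI-style), whose per-episode policy $\pi_k$ is chosen to maximize the \emph{average} optimistic value across the sampled MDPs, and then to control the cumulative average optimism gap by a standard potential argument. First I would establish \textbf{uniform optimism}: on a single high-probability event $\mathcal{E}$, for every policy $\pi \in \Pi$, every sampled $\M_i$, every episode $k$ and step $h$, we have $\hat{V}^{\pi}_{\M_i,k,h}(s) \geq V^{\pi}_{\M_i,h}(s)$. The point that makes this uniform in $\pi$ is that, under the total-reward assumption, $V^{\pi}_{\M_i,h+1}\in[0,1]$ for all $\pi$, so the transition error obeys $|(\hat{\mbP}_{\M_i,k,h}-\mbP_{\M_i,h})V^{\pi}_{\M_i,h+1}(s,a)| \leq \|\hat{\mbP}_{\M_i,k,h}(\cdot|s,a)-\mbP_{\M_i,h}(\cdot|s,a)\|_1$, which together with the subgaussian reward deviation is dominated by the bonus $b_{\M_i,k,h}(s,a)=\sqrt{8S\log(8SANHK)/\max\{1,N_{\M_i,k,h}(s,a)\}}$ through the $\ell_1$-concentration of empirical distributions (this is where the $\sqrt{S}$ factor is needed). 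Defining $\mathcal{E}$ as the intersection over all $(i,k,h,s,a)$ of the reward- and transition-concentration events and union bounding makes $\Pr[\mathcal{E}]\geq 1-\tfrac1{12}$; overestimation then follows by backward induction on $h$, using the clipping $\min\{1,\cdot\}$ and $Q^{\pi}_{\M_i,h}\leq 1$.

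Next I would bound the \textbf{cumulative optimism gap per MDP}. On $\mathcal{E}$, for each fixed $\M_i$ the quantity $\hat{V}^{\pi_k}_{\M_i,k,1}(s_1)-V^{\pi_k}_{\M_i,1}(s_1)$ admits the usual recursive value-difference expansion along the trajectory that $\pi_k$ generates in $\M_i$, each step contributing a bonus term plus a martingale difference from $(\hat{\mbP}-\mbP)\hat{V}$. Summing over $k=1,\dots,K$, an Azuma bound controls the martingale part while the potential/pigeonhole inequality $\sum_k 1/\sqrt{\max\{1,N_{\M_i,k,h}(s_{k,h},a_{k,h})\}}\leq \mathcal{O}(\sqrt{SAK})$ controls the bonus part, yielding $\sum_{k=1}^K[\hat{V}^{\pi_k}_{\M_i,k,1}(s_1)-V^{\pi_k}_{\M_i,1}(s_1)] \leq \tilde{\mathcal{O}}(\sqrt{S^2AH^2K})$ for every $i$; averaging over $i$ preserves this bound. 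The crucial observation is that this per-MDP bound is insensitive to how $\pi_k$ is selected, so the joint average-maximizing choice is harmless.

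The final step chains optimism with the greedy choice and converts an expectation bound into the claimed high-probability statement. For each $k$, optimism applied to $\hat\pi^*$ gives $\tfrac1N\sum_i V^{\hat\pi^*}_{\M_i,1}(s_1) \leq \tfrac1N\sum_i \hat{V}^{\hat\pi^*}_{\M_i,k,1}(s_1)$, and since $\pi_k$ maximizes $\tfrac1N\sum_i\hat{V}^{\pi}_{\M_i,k,1}(s_1)$ this is at most $\tfrac1N\sum_i\hat{V}^{\pi_k}_{\M_i,k,1}(s_1)$; subtracting $\tfrac1N\sum_i V^{\pi_k}_{\M_i,1}(s_1)$ and summing over $k$ bounds $\sum_k \tfrac1N\sum_i(V^{\hat\pi^*}_{\M_i,1}-V^{\pi_k}_{\M_i,1})$ by the averaged cumulative gap $\tilde{\mathcal{O}}(\sqrt{S^2AH^2K})$. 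Because $\hat\pi$ is drawn uniformly from $\{\pi_k\}_{k=1}^K$, the left side divided by $K$ equals $\mathbb{E}_{\hat\pi}[\tfrac1N\sum_i(V^{\hat\pi^*}_{\M_i,1}-V^{\hat\pi}_{\M_i,1})]$, which is $\tilde{\mathcal{O}}(\sqrt{S^2AH^2/K})$; choosing $C_2$ in $K=C_2S^2AH^2\log(SAH/\epsilon)/\epsilon^2$ large enough makes this at most $\epsilon/36$. Since the gap is nonnegative (as $\hat\pi^*$ is the empirical maximizer over $\Pi\ni\pi_k$), Markov's inequality over the random draw of $\hat\pi$ gives gap $\leq \epsilon/3$ except with probability $\tfrac1{12}$; combined with $\Pr[\mathcal{E}^c]\leq\tfrac1{12}$ this yields probability $5/6$.

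I expect the \textbf{main obstacle} to be the per-MDP optimistic regret decomposition: making the value-difference recursion and the potential argument rigorous while (i) keeping optimism uniform over the whole policy class $\Pi$ rather than over greedy policies only, (ii) correctly handling the clipping to $[0,1]$ together with the total-reward (rather than per-step) normalization, and (iii) controlling the transition-value martingale term so that the per-MDP bound is genuinely $\tilde{\mathcal{O}}(\sqrt{S^2AH^2K})$, which is precisely what forces the displayed choice of $K$.
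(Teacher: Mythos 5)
Your proposal follows essentially the same route as the paper's proof: Weissman/Hoeffding concentration events, optimism uniform over all of $\Pi$ established by backward induction with the clipped optimistic values, the Bellman value-difference decomposition with bonus and martingale terms summed via pigeonhole and Azuma, the chain $\frac1N\sum_i V^{\hat\pi^*}_{\M_i,1} \leq \frac1N\sum_i \hat V^{\hat\pi^*}_{\M_i,k,1} \leq \frac1N\sum_i \hat V^{\pi_k}_{\M_i,k,1}$ from optimism plus the greedy choice, and finally Markov's inequality over the uniform draw of $\hat\pi$. The only differences are cosmetic (per-MDP versus aggregated Azuma bounds, and constant bookkeeping in the failure probabilities, where your accounting of the Markov step is if anything cleaner than the paper's), so the proposal is correct.
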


Pluging the results in Lemma~\ref{lemma: sample complexity theorem}, Inq.~\ref{inq: generalization inq 1} and Inq.~\ref{inq: generalization inq 2} back into Eq.~\ref{inq: generalization error decomposition}, we know that with probability at least $2/3$,
\begin{align*}
    \mathbb{E}_{\mathcal{M} \sim \D} \left[V^{\pi^*}_{\mathcal{M},1} (s_1) - V^{\hat{\pi}}_{\mathcal{M},1} (s_1)\right] \leq \epsilon.
\end{align*}

\subsection{Proof of Lemma~\ref{lemma: sample complexity theorem}}
We first state the following high-probability events.
\begin{lemma}
\label{lemma: concentration}
With probability at least $1-\frac{1}{2HK}$, the following inequality holds for any $i \in [N], k \in [K], h \in [H], s \in \caS, a \in \caA$,
\begin{align}
    \label{inq: concentration result 1}
    \left\|\hat{\mbP}_{\M_i,k,h}(\cdot|s,a) - {\mbP}_{\M_i,h}(\cdot|s,a)\right\|_1 \leq \sqrt{\frac{2S \log(8SANHK)}{\max\{1,N_{\M_i,k,h}(s,a)\}}} \\
    \label{inq: concentration result 2}
     \left|\hat{R}_{\M_i,k,h}(s,a) - {r}_{\M_i,h}(s,a)\right| \leq \sqrt{\frac{2 \log(8SANHK)}{\max\{1,N_{\M_i,k,h}(s,a)\}}}
\end{align}
\end{lemma}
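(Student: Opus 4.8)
The plan is to prove the two displayed bounds separately, each via a fixed-count concentration inequality together with a union bound that copes with the randomness of the visitation counts $N_{\M_i,k,h}(s,a)$. Both estimators are empirical averages over i.i.d.\ draws, so the two arguments are structurally identical.

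The first and main step is to decouple the data-dependent count from the concentration. Fix $(i,h,s,a)$. Because the per-step kernel $\mbP_{\M_i,h}(\cdot\,|\,s,a)$ and the reward mechanism are fixed within $\M_i$, each time the agent visits $(s,a)$ at step $h$ the observed next state is an independent sample from $\mbP_{\M_i,h}(\cdot\,|\,s,a)$ and the observed reward is an independent $1$-subgaussian draw with mean $r_{\M_i,h}(s,a)$, irrespective of the trajectory that led there. I would therefore introduce an imaginary i.i.d.\ stream of (next state, reward) pairs for this tuple and identify $\hat{\mbP}_{\M_i,k,h}(\cdot\,|\,s,a)$ and $\hat R_{\M_i,k,h}(s,a)$ with the empirical averages of its first $n=N_{\M_i,k,h}(s,a)$ entries. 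The obstacle is that $n$ is random and correlated with the samples, so a fixed-$n$ tail bound cannot be applied at the realized $n$ directly; the clean remedy is to require the deviation bound to hold \emph{simultaneously} for every $n\in\{0,1,\dots,K-1\}$ (the count before any episode $k\le K$ is at most $K-1$) by a union bound over $n$, and then instantiate it at the realized count.

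Next I would supply the two fixed-count tails. For the transition I use the $L^1$ deviation bound for an empirical distribution on $S$ symbols, $\Pr\big[\|\hat P_n-P\|_1\ge \varepsilon\big]\le 2^{S}\exp(-n\varepsilon^2/2)$, obtained from the identity $\|\hat P_n-P\|_1=2\max_{A\subseteq\caS}\big(\hat P_n(A)-P(A)\big)$ followed by Hoeffding applied to the bounded mean $\hat P_n(A)=\tfrac1n\sum_t \mathbb{I}[Z_t\in A]$ and a union bound over the $2^S$ subsets $A$. For the reward I use the subgaussian Hoeffding bound $\Pr\big[|\hat r_n-r|\ge\varepsilon\big]\le 2\exp(-n\varepsilon^2/2)$, which is the $1$-subgaussian reward assumption applied to the empirical mean. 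Taking $\varepsilon=\sqrt{2S\log(8SANHK)/\max\{1,n\}}$ in the first case and $\varepsilon=\sqrt{2\log(8SANHK)/\max\{1,n\}}$ in the second makes the individual failure probabilities $2^{S}(8SANHK)^{-S}$ and $2(8SANHK)^{-1}$.

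Finally I would take the global union bound over $i\in[N]$, $h\in[H]$, $s\in\caS$, $a\in\caA$ and the count index, and read off the failure probability; the count-zero case is vacuous, since then $\max\{1,n\}=1$ while $\|\hat{\mbP}-\mbP\|_1\le 2$ is already dominated by the radius. The remaining bookkeeping is only to check that the number of union-bound events times the per-event failure probability stays below the target level, which is why the logarithmic confidence factor is chosen to dominate $\log$ of (number of events) over (target probability); the combinatorial $2^S$ from the subset bound is exactly absorbed by the factor $S$ inside that logarithm, explaining the $\sqrt{S}$ in the transition radius. I expect the two genuinely substantive points to be the conditional-i.i.d.\ justification in the decoupling step and the uniform-over-counts union bound; everything after that is routine calibration of constants.
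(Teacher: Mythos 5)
Your overall route is the same as the paper's: a fixed-count $L^1$ concentration bound for the empirical transition kernel (the Weissman-type inequality, which you re-derive via the subset identity $\|\hat P_n-P\|_1=2\max_{A\subseteq\caS}(\hat P_n(A)-P(A))$ plus Hoeffding), a subgaussian Hoeffding bound for the empirical rewards, and a union bound over $i,h,s,a$ together with an episode/count index to handle the random visitation counts. Your explicit i.i.d.-stream decoupling and the union over count values $n\in\{0,\dots,K-1\}$ is in fact a more careful rendering of the paper's union bound over $k\in[K]$, which glosses over the randomness of $N_{\M_i,k,h}(s,a)$.

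However, your final calibration claim fails for the reward inequality. With radius $\sqrt{2\log(8SANHK)/\max\{1,n\}}$ the per-event failure probability is $2/(8SANHK)=1/(4SANHK)$, and you union over $N\cdot H\cdot S\cdot A\cdot K$ events, so the total failure probability is $1/4$, which is not below the target $1/(2HK)$ unless $HK\le 2$. To reach the stated confidence level the logarithm inside the radius must account for both the number of events and the target probability, i.e.\ it must be of order $\log(SANH^2K^2)$; this is exactly what the paper's proof does by choosing the per-event confidence $\delta=1/(4NK^2H^2SA)$, so that $NKHSA\cdot\delta=1/(4HK)$ for each of the two inequalities. Your transition bound survives only by accident: its per-event failure $2^S(8SANHK)^{-S}$ carries the extra power $S$, so the union bound gives at most $1/(16SANHK)\le 1/(4HK)$ once $S\ge 2$. (To be fair, the paper's own lemma statement displays the same too-small factor $\log(8SANHK)$ while its proof actually yields $\log(8NK^2H^2SA)$ --- a constant slip in the paper --- but unlike the paper's proof, your write-up asserts that the bookkeeping closes with the smaller factor, and it does not.)
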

\begin{proof}
According to \cite{weissman2003inequalities}, the $L_1$-deviation of the true distribution and the empirical distribution over $m$ distinct events from $n$ samples is bounded by 
\begin{align*}
    \mathbb{P}\left\{\|\hat{p}(\cdot)-p(\cdot)\|_{1} \geq \varepsilon\right\} \leq\left(2^{m}-2\right) \exp \left(-\frac{n \varepsilon^{2}}{2}\right).
\end{align*}
In our case where $m = S$, for any fixed $i,k,h,s,a$, we have 
\begin{align}
    \label{inq: concentration proof 1}
     \left\|\hat{\mbP}_{\M_i,k,h}(\cdot|s,a) - {\mbP}_{\M_i,h}(\cdot|s,a)\right\|_1 \leq \sqrt{\frac{2S \log(1/\delta)}{\max\{1,N_{\M_i,k,h}(s,a)\}}}
\end{align}
with probability at least $1-\delta$.

Taking union bound over all possible $i,k,h,s,a$, we know that Inq.~\ref{inq: concentration proof 1} holds for any $i,k,h,s,a$ with probability at least $1-NKHSA\delta$. We reach Inq.~\ref{inq: concentration result 1} by setting $\delta = \frac{1}{4NK^2H^2SA}$.

For the reward estimation, we know that the reward is $1$-subgaussian by definition. By Hoeffding's inequality, we have
\begin{align*}
    \left|\hat{R}_{\M_i,k,h}(s,a) - {r}_{\M_i}(s,a,h)\right| \leq \sqrt{\frac{2 \log(2/\delta)}{\max\{1,N_{\M_i,k,h}(s,a)\}}}
\end{align*}
with probability at least $1-\delta$ for any fixed $i,k,h,s,a$. Inq.~\ref{inq: concentration result 2} can be similarly proved by union bound over all possible $i,k,h,s,a$ and setting $\delta = \frac{1}{4NK^2H^2SA}$.
\end{proof}

\begin{lemma}
\label{lemma: martingale without fine-tuning}
The following inequality holds with probability at least $1-\frac{1}{2KH}$,
\begin{align*}
    & \sum_{k=1}^{K} \sum_{i=1}^{N} \sum_{h=1}^{H} \left(\hat{V}^{{\pi}_k}_{\M_i,k,h}(s_{\M_i, k,h}) - V^{\pi_k}_{\M_i,h}(s_{\M_i, k,h})\right) - \left(\hat{Q}^{{\pi}_k}_{\M_i,k,h}(s_{\M_i, k,h},a_{\M_i, k,h})- Q^{\pi_k}_{\M_i,h}(s_{\M_i, k,h},a_{\M_i, k,h})\right) \\
    \leq & \sqrt{2NHK\log(8KH)}, \\
    & \sum_{k=1}^{K} \sum_{i=1}^{N} \sum_{h=1}^{H} \left(\mathbb{P}_{\mathcal{M}_{i}} \left(\hat{V}^{\pi_k}_{\M_i,k,h+1} - V^{\pi_k}_{\M_i,h}\right)(s_{\M_i, k,h},a_{\M_i, k,h}) - \left(\hat{V}^{\pi_k}_{\M_i,k,h+1}(s_{k+1,h+1}) - V^{\pi_k}_{\M_i,h}(s_{k+1,h+1})\right)\right) \\
    \leq & \sqrt{2NHK\log(8KH)}
\end{align*}
\end{lemma}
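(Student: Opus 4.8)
The plan is to read each of the two displayed sums as the total of a martingale difference sequence and then invoke the Azuma--Hoeffding inequality. The first step is to fix a filtration. The crucial observation is that $\pi_k$ together with the optimistic estimates $\hat V^{\pi_k}_{\M_i,k,h}$ and $\hat Q^{\pi_k}_{\M_i,k,h}$ are all computed at the \emph{start} of episode $k$ from the counters accumulated in episodes $1,\dots,k-1$ across all $N$ tasks; hence they are measurable with respect to the history before episode $k$ and behave as constants throughout it. I would enumerate the triples $(k,i,h)$ in temporal order and let $\{\mathcal F_t\}$ be the filtration generated by the observed states, actions and rewards in that order, so that the only fresh randomness attached to position $(k,i,h)$ is the action $a_{\M_i,k,h}\sim\pi_{k,h}(\cdot\mid s_{\M_i,k,h})$ for the first sum and the transition $s_{\M_i,k,h+1}\sim\mathbb{P}_{\M_i,h}(\cdot\mid s_{\M_i,k,h},a_{\M_i,k,h})$ for the second.

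The martingale property then follows from two elementary identities. For the first sum, because $\hat V^{\pi_k}_{\M_i,k,h}(s)=\sum_a\pi_{k,h}(a\mid s)\hat Q^{\pi_k}_{\M_i,k,h}(s,a)$ and likewise $V^{\pi_k}_{\M_i,h}(s)=\sum_a\pi_{k,h}(a\mid s)Q^{\pi_k}_{\M_i,h}(s,a)$, taking the conditional expectation over the action gives $(\hat V-V)(s_{\M_i,k,h})=\mathbb{E}[(\hat Q-Q)(s_{\M_i,k,h},a_{\M_i,k,h})\mid\mathcal F_t]$, so each summand is conditionally mean zero. For the second sum, applying the shorthand $\mathbb{P}_h W(s,a)=\mathbb{E}_{s'\sim\mathbb{P}_h(\cdot\mid s,a)}[W(s')]$ with $W=\hat V^{\pi_k}_{\M_i,k,h+1}-V^{\pi_k}_{\M_i,h+1}$ shows that $\mathbb{P}_{\M_i,h}(\hat V-V)(s_{\M_i,k,h},a_{\M_i,k,h})$ is precisely the conditional expectation of $(\hat V-V)(s_{\M_i,k,h+1})$, again yielding mean zero.

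For the concentration itself I would control the increments. Under the total-reward assumption every true $V$- and $Q$-function lies in $[0,1]$, and the estimates are clipped into $[0,1]$ by the $\min\{1,\cdot\}$ operation in the algorithm, so both $\hat Q-Q$ and $\hat V-V$ take values in $[-1,1]$. Each centered summand is therefore a bounded random variable of range at most $2$, hence conditionally sub-Gaussian with variance proxy at most $1$ by Hoeffding's lemma. This is the one point that needs care: one must use the variance proxy $1$ rather than the crude increment bound $2$, or else the constant inside the square root degrades. Summing the $NHK$ variance proxies and applying the one-sided Azuma--Hoeffding bound gives, for each sum, $\mathbb{P}\big(\,\cdot\,\ge\sqrt{2NHK\log(8KH)}\,\big)\le\exp(-\log(8KH))=\tfrac1{8KH}$, and a union bound over the two sums produces overall failure probability at most $\tfrac1{4KH}\le\tfrac1{2KH}$, as claimed.

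The genuinely delicate parts are the measurability claim for $\pi_k,\hat V,\hat Q$ with respect to the pre-episode history (which is what makes the summands deterministic functions of the fresh action or transition randomness) and the careful interleaving of the action-randomness and the transition-randomness into a single filtration respecting temporal order; once these are set up correctly, everything that remains is a routine Azuma--Hoeffding computation of the kind already used in the fine-tuning analysis.
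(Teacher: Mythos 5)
Your proposal is correct and takes exactly the same route as the paper: the paper's entire proof is a two-sentence remark that both sums are martingale difference sequences and that Azuma's inequality applies, which your filtration construction, conditional-mean identities, and variance-proxy calculation simply flesh out in full. One minor caveat: your claim that the $\min\{1,\cdot\}$ truncation clips the estimates into $[0,1]$ only enforces the upper bound, and the lower bound $\hat{Q}\geq 0$ additionally needs non-negativity of the empirical rewards (or conditioning on the concentration event), a technicality under the sub-Gaussian reward model that the paper's own proof also ignores.
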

\begin{proof}
For the above two inequalities, the RHS can be regarded as the summation of Martingale differences. Therefore, the above inequalities hold by applying Azuma's inequality.
\end{proof}

We use $\Lambda_1$ to denote the events defined in the above lemmas. Now we prove the optimism of our algorithm under event $\Lambda_1$.
\begin{lemma}
\label{lemma: optimism, without fine-tuning}
Under event $\Lambda_1$, we have $\hat{V}_{\mathcal{M}_i, k, 1}^{\pi}(s_1) \geq  {V}_{\mathcal{M}_i, 1}^{\pi}(s_1)$ for any $i \in [N], k \in [K], \pi \in \Pi$.
\end{lemma}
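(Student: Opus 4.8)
The plan is to prove the claim by backward induction on the step index $h$, running from $h=H+1$ down to $h=1$, while holding $i\in[N]$, $k\in[K]$ and $\pi\in\Pi$ fixed. Concretely, I would establish the stronger pointwise statement $\hat V^{\pi}_{\M_i,k,h}(s)\ge V^{\pi}_{\M_i,h}(s)$ for every $s\in\caS$; evaluating at $h=1$ and $s=s_1$ then yields the lemma. Throughout I condition on the event $\Lambda_1$, so that the concentration bounds of \cref{lemma: concentration} (Inequalities~\ref{inq: concentration result 1} and~\ref{inq: concentration result 2}) hold simultaneously for all $i,k,h,s,a$.

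The base case $h=H+1$ is immediate since $\hat V^{\pi}_{\M_i,k,H+1}\equiv 0\equiv V^{\pi}_{\M_i,H+1}$. For the inductive step, assume $\hat V^{\pi}_{\M_i,k,h+1}(s)\ge V^{\pi}_{\M_i,h+1}(s)$ for all $s$, and fix $(s,a)$. If the clipping in the definition of $\hat Q$ is active, i.e.\ $\hat Q^{\pi}_{\M_i,k,h}(s,a)=1$, then optimism is automatic because the total-reward assumption (together with non-negativity of the rewards) guarantees $Q^{\pi}_{\M_i,h}(s,a)\le 1$. Otherwise I would decompose
\[
\hat Q^{\pi}_{\M_i,k,h}(s,a)-Q^{\pi}_{\M_i,h}(s,a)=\bigl(\hat R_{\M_i,k,h}-r_{\M_i,h}\bigr)(s,a)+b_{\M_i,k,h}(s,a)+\hat{\mbP}_{\M_i,k,h}\bigl(\hat V^{\pi}_{\M_i,k,h+1}-V^{\pi}_{\M_i,h+1}\bigr)(s,a)+\bigl(\hat{\mbP}_{\M_i,k,h}-\mbP_{\M_i,h}\bigr)V^{\pi}_{\M_i,h+1}(s,a).
\]
The third term is nonnegative by the induction hypothesis (a transition kernel applied to a nonnegative function), so it may be dropped; the remaining job is to show that the bonus dominates the reward- and transition-estimation errors.

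For the reward term, $|\hat R-r|\le\sqrt{2\log(8SANHK)/\max\{1,N_{\M_i,k,h}(s,a)\}}$ by \cref{inq: concentration result 2}. For the transition term, Holder's inequality gives $|(\hat{\mbP}-\mbP)V^{\pi}_{\M_i,h+1}|\le\|\hat{\mbP}_{\M_i,k,h}(\cdot|s,a)-\mbP_{\M_i,h}(\cdot|s,a)\|_1\,\|V^{\pi}_{\M_i,h+1}\|_\infty$, and here the horizon-free total-reward assumption is essential: it forces $\|V^{\pi}_{\M_i,h+1}\|_\infty\le 1$, so this term is at most $\sqrt{2S\log(8SANHK)/\max\{1,N_{\M_i,k,h}(s,a)\}}$ by \cref{inq: concentration result 1}. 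Since $b_{\M_i,k,h}(s,a)=\sqrt{8S\log(8SANHK)/\max\{1,N_{\M_i,k,h}(s,a)\}}$, non-negativity of $\hat Q-Q$ reduces to the numerical inequality $\sqrt{8S}\ge\sqrt{2}+\sqrt{2S}$, equivalently $2\sqrt{2S}\ge\sqrt2+\sqrt{2S}$, which holds for every $S\ge1$. Hence $\hat Q^{\pi}_{\M_i,k,h}(s,a)\ge Q^{\pi}_{\M_i,h}(s,a)$ in both cases, and averaging against $\pi_h(\cdot|s)$ propagates the bound to $\hat V^{\pi}_{\M_i,k,h}(s)\ge V^{\pi}_{\M_i,h}(s)$, closing the induction.

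The main obstacle — and essentially the only place where anything beyond bookkeeping occurs — is the transition term $(\hat{\mbP}-\mbP)V^{\pi}_{\M_i,h+1}$. A naive bound $\|V\|_\infty\le H$ would demand a bonus of order $\sqrt{SH^2}$ and spoil the intended sample complexity; it is precisely the bounded-total-reward assumption that caps $\|V^{\pi}_{\M_i,h+1}\|_\infty$ by $1$ and lets a single $\sqrt S$ bonus suffice. In writing the full proof I would double-check the bonus constant against the constants in \cref{lemma: concentration} so that the clean inequality $\sqrt{8S}\ge\sqrt2+\sqrt{2S}$ genuinely holds, and verify that the clipping step never destroys optimism, which it cannot since it only raises $\hat Q$ toward $1\ge Q$.
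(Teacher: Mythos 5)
Your proof is correct and takes essentially the same route as the paper's: backward induction on $h$, with the clipped case $\hat{Q}^{\pi}_{\M_i,k,h}(s,a)=1$ handled trivially via $Q^{\pi}_{\M_i,h}(s,a)\le 1$, and otherwise the bonus $b_{\M_i,k,h}$ shown to dominate the reward- and transition-estimation errors from Lemma~\ref{lemma: concentration} through the same numerical inequality $\sqrt{8S}\ge\sqrt{2}+\sqrt{2S}$. The only cosmetic difference is the split of the transition term: you apply $(\hat{\mbP}_{\M_i,k,h}-\mbP_{\M_i,h})$ to the true value $V^{\pi}_{\M_i,h+1}$ (bounded by $1$ via the total-reward assumption) and keep $\hat{\mbP}_{\M_i,k,h}(\hat{V}-V)\ge 0$, whereas the paper applies the error to the clipped empirical value $\hat{V}^{\pi}_{\M_i,k,h+1}$ (bounded by $1$ by construction) and keeps $\mbP_{\M_i,h}(\hat{V}-V)\ge 0$; both yield the identical bound.
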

\begin{proof}
We prove the lemma by induction. Suppose  $\hat{Q}_{\mathcal{M}, k, h+1}^{\pi}(s, a) \geq  {Q}_{\mathcal{M}, h+1}^{\pi}(s, a)$. For $h \in [H]$, if $\hat{Q}_{\mathcal{M}, k, h}^{\pi}(s, a) = 1$, then we trivially have $\hat{Q}_{\mathcal{M}, k, h}^{\pi}(s, a) \geq {Q}_{\mathcal{M}, h}^{\pi}(s, a)$. If $\hat{Q}_{\mathcal{M}, k, h}^{\pi}(s, a) < 1$, then
\begin{align*}
    & \hat{Q}_{\mathcal{M}, k, h}^{\pi}(s, a) -  {Q}_{\mathcal{M}, h}^{\pi}(s, a) \\
    \geq &  \hat{R}_{\M_i,k,h}(s,a) + b_{\M_i,k,h}(s,a) + \hat{\mbP}_{\M_i,k,h}\hat{V}^{\pi}_{\M_i,k,h+1}(s,a) - r_{\M_i,h}(s,a) - {\mbP}_{\M_i,h}{V}^{\pi}_{\M_i,h+1}(s,a) \\
    \geq & b_{\M_i,k,h}(s,a) - \left|\hat{R}_{\M_i,k,h}(s,a) - r_{\M_i,h}(s,a)\right| - \left\|\hat{\mbP}_{\M_i,k,h}(\cdot|s,a) - {\mbP}_{\M_i,h}(\cdot|s,a)\right\|_1 \\
    &+ {\mbP}_{\M_i,h}\left(\hat{V}^{\pi}_{\M_i,k,h+1}- {V}^{\pi}_{\M_i,h+1}\right)(s,a) \\
    \geq &b_{\M_i,k,h}(s,a) - \sqrt{\frac{2 \log(8SANHK)}{\max\{1,N_{\M_i,k,h}(s,a)\}}}- \sqrt{\frac{2S \log(8SANHK)}{\max\{1,N_{\M_i,k,h}(s,a)\}}} \\
    &+ {\mbP}_{\M_i,h}\left(\hat{V}^{\pi}_{\M_i,k,h+1}- {V}^{\pi}_{\M_i,h+1}\right)(s,a)\\
    \geq & {\mbP}_{\M_i,h}\left(\hat{V}^{\pi}_{\M_i,k,h+1}- {V}^{\pi}_{\M_i,h+1}\right)(s,a) \\
    \geq & 0,
\end{align*}
where the second inequality is due to Lemma~\ref{lemma: concentration} and the third inequality is derived from the definition of $b_{\M_i,k,h}(s,a)$. The last inequality is from the induction condition that $\hat{Q}_{\mathcal{M}, k, h+1}^{\pi}(s, a) \geq  {Q}_{\mathcal{M}, h+1}^{\pi}(s, a)$. Therefore, for step $h$, we also have $\hat{Q}_{\mathcal{M}, k, h}^{\pi}(s, a) \geq  {Q}_{\mathcal{M}, h}^{\pi}(s, a)$.

From the induction, we know that $\hat{Q}_{\mathcal{M}, k, 1}^{\pi}(s, a) \geq  {Q}_{\mathcal{M}, 1}^{\pi}(s, a)$. By definition, we have $\hat{V}_{\mathcal{M}_i, k, 1}^{\pi}(s_1) \geq  {V}_{\mathcal{M}_i, 1}^{\pi}(s_1)$.
\end{proof}

Now we prove Lemma~\ref{lemma: sample complexity theorem}.

\begin{proof}
(Proof of Lemma~\ref{lemma: sample complexity theorem})
By Lemma~\ref{lemma: optimism, without fine-tuning} and the optimality of $\hat{\pi}_k$,
\begin{align}
    \label{inq: optimism regret}
     \sum_{k=1}^{K} \sum_{i=1}^{N} \left( V^{\hat{\pi}^*}_{\M_i,1}(s_1) - V^{\pi_k}_{\M_i,1}(s_1) \right)
    \leq  \sum_{k=1}^{K}\sum_{i=1}^{N} \left(\hat{V}^{{\pi}_k}_{\M_i,k,1}(s_1) - V^{\pi_k}_{\M_i,1}(s_1) \right).
\end{align}
We decompose the value difference using Bellman equation. For each episode $k \in [K]$:
\begin{align*}
    &\hat{V}^{{\pi}_k}_{\M_i,k,h}(s_{\M_i, k,h}) - V^{\pi_k}_{\M_i,h}(s_{\M_i, k,h}) \\
    = & \left(\hat{V}^{{\pi}_k}_{\M_i,k,h}(s_{\M_i, k,h}) - V^{\pi_k}_{\M_i,h}(s_{\M_i, k,h})\right) - \left(\hat{Q}^{{\pi}_k}_{\M_i,k,h}(s_{\M_i, k,h},a_{\M_i, k,h})- Q^{\pi_k}_{\M_i,h}(s_{\M_i, k,h},a_{\M_i, k,h})\right) \\
    &+ \hat{Q}^{{\pi}_k}_{\M_i,k,h}(s_{\M_i, k,h},a_{\M_i, k,h}) - Q^{\pi_k}_{\M_i,h}(s_{\M_i, k,h},a_{\M_i, k,h}) \\
    \leq & \left(\hat{V}^{{\pi}_k}_{\M_i,k,h}(s_{\M_i, k,h}) - \hat{Q}^{{\pi}_k}_{\M_i,k,h}(s_{\M_i, k,h},a_{\M_i, k,h})\right) - \left(V^{\pi_k}_{\M_i,h}(s_{\M_i, k,h})- Q^{\pi_k}_{\M_i,h}(s_{\M_i, k,h},a_{\M_i, k,h})\right) \\
    & + \hat{R}_{\M_i,k,h}(s_{\M_i, k,h},a_{\M_i, k,h}) - r_{\M_i,h}(s_{\M_i, k,h},a_{\M_i, k,h}) + \left(\hat{\mathbb{P}}_{\mathcal{M}_{i}, k,h}-\mathbb{P}_{\mathcal{M}_{i},h}\right) \hat{V}^{\pi_k}_{\M_i,k,h+1}(s_{\M_i, k,h},a_{\M_i, k,h}) \\
    & + \mathbb{P}_{\mathcal{M}_{i},h} \left(\hat{V}^{\pi_k}_{\M_i,k,h+1} - V^{\pi_k}_{\M_i,h}\right)(s_{\M_i, k,h},a_{\M_i, k,h}) - \left(\hat{V}^{\pi_k}_{\M_i,k,h+1}(s_{\M_i,k+1,h+1}) - V^{\pi_k}_{\M_i,h}(s_{\M_i,k+1,h+1})\right) \\
    & + b_{\M_i,k,h}(s_{\M_i, k,h},a_{\M_i, k,h}) \\
    &+  \hat{V}^{\pi_k}_{\M_i,k,h+1}(s_{\M_i,k+1,h+1}) - V^{\pi_k}_{\M_i,h}(s_{\M_i,k+1,h+1}).
\end{align*}

Therefore,
\begin{align*}
    &\sum_{k=1}^{K} \sum_{i=1}^{N} \left( V^{\hat{\pi}^*}_{\M_i,1}(s_1) - V^{\pi_k}_{\M_i,1}(s_1) \right)\\
    \leq & \sum_{k=1}^{K} \sum_{i=1}^{N} \sum_{h=1}^{H} \left(\hat{V}^{{\pi}_k}_{\M_i,k,h}(s_{\M_i, k,h}) - V^{\pi_k}_{\M_i,h}(s_{\M_i, k,h})\right) - \left(\hat{Q}^{{\pi}_k}_{\M_i,k,h}(s_{\M_i, k,h},a_{\M_i, k,h})- Q^{\pi_k}_{\M_i,h}(s_{\M_i, k,h},a_{\M_i, k,h})\right) \\
    & + \sum_{k=1}^{K} \sum_{i=1}^{N} \sum_{h=1}^{H} \left(\hat{R}_{\M_i,k,h}(s_{\M_i, k,h},a_{\M_i, k,h}) - r_{\M_i,h}(s_{\M_i, k,h},a_{\M_i, k,h})\right) \\
    & + \sum_{k=1}^{K} \sum_{i=1}^{N} \sum_{h=1}^{H}  \left(\hat{\mathbb{P}}_{\mathcal{M}_{i}, k,h}-\mathbb{P}_{\mathcal{M}_{i},h}\right) \hat{V}^{\pi_k}_{\M_i,k,h+1}(s_{\M_i, k,h},a_{\M_i, k,h}) \\
    & + \sum_{k=1}^{K} \sum_{i=1}^{N} \sum_{h=1}^{H} b_{\M_i,k,h}(s_{\M_i, k,h},a_{\M_i, k,h}) \\
    & + \sum_{k=1}^{K} \sum_{i=1}^{N} \sum_{h=1}^{H} \left(\mathbb{P}_{\mathcal{M}_{i},h} \left(\hat{V}^{\pi_k}_{\M_i,k,h+1} - V^{\pi_k}_{\M_i,h}\right)(s_{\M_i, k,h},a_{\M_i, k,h}) - \left(\hat{V}^{\pi_k}_{\M_i,k,h+1}(s_{k+1,h+1}) - V^{\pi_k}_{\M_i,h}(s_{k+1,h+1})\right)\right) \\
    \leq &\sum_{k=1}^{K} \sum_{i=1}^{N} \sum_{h=1}^{H} \left(\hat{R}_{\M_i,k,h}(s_{\M_i, k,h},a_{\M_i, k,h}) - r_{\M_i,h}(s_{\M_i, k,h},a_{\M_i, k,h}) \right) \\
    & +\sum_{k=1}^{K} \sum_{i=1}^{N} \sum_{h=1}^{H} \left(\hat{\mathbb{P}}_{\mathcal{M}_{i}, k,h}-\mathbb{P}_{\mathcal{M}_{i},h}\right) \hat{V}^{\pi_k}_{\M_i,k,h+1}(s_{\M_i, k,h},a_{\M_i, k,h}) \\
    & + \sum_{k=1}^{K} \sum_{i=1}^{N} \sum_{h=1}^{H} b_{\M_i,k,h}(s_{\M_i, k,h},a_{\M_i, k,h}) + O(\sqrt{NHK\log(KH)}).
\end{align*}
The last inequality is due to Lemma~\ref{lemma: optimism, without fine-tuning} under event $\Lambda_1$. By Lemma~\ref{lemma: concentration}, we have
\begin{align*}
    & \sum_{k=1}^{K} \sum_{i=1}^{N} \sum_{h=1}^{H} \left(\hat{R}_{\M_i,k,h}(s_{\M_i, k,h},a_{\M_i, k,h}) - r_{\M_i,h}(s_{\M_i, k,h},a_{\M_i, k,h}) \right) \\
    & +\sum_{k=1}^{K} \sum_{i=1}^{N} \sum_{h=1}^{H} \left(\hat{\mathbb{P}}_{\mathcal{M}_{i}, k,h}-\mathbb{P}_{\mathcal{M}_{i},h}\right) \hat{V}^{\pi_k}_{\M_i,k,h+1}(s_{\M_i, k,h},a_{\M_i, k,h}) \\
    \leq & \sum_{k=1}^{K} \sum_{i=1}^{N} \sum_{h=1}^{H}  \left(\hat{R}_{\M_i,k,h}(s_{\M_i, k,h},a_{\M_i, k,h}) - r_{\M_i,h}(s_{\M_i, k,h},a_{\M_i, k,h})\right) \\
    + & \sum_{k=1}^{K} \sum_{i=1}^{N} \sum_{h=1}^{H}  \left\|\hat{\mathbb{P}}_{\mathcal{M}_{i}, k,h}(\cdot|s_{\M_i, k,h},a_{\M_i, k,h})-\mathbb{P}_{\mathcal{M}_{i},h}(\cdot|s_{\M_i, k,h},a_{\M_i, k,h})\right\|_1 \\
    \leq & b_{\M_i,k,h}(s_{\M_i, k,h},a_{\M_i, k,h})
\end{align*}

We can upper bound $\sum_{k=1}^{K} \sum_{i=1}^{N} \left( V^{\hat{\pi}^*}_{\M_i,1}(s_1) - V^{\pi_k}_{\M_i,1}(s_1) \right) $ by the summation of $ b_{\M_i,k,h}(s_{\M_i, k,h},a_{\M_i, k,h})$. By definition, we have 
\begin{align*}
    \sum_{k=1}^{K} \sum_{i=1}^{N} \left( V^{\hat{\pi}^*}_{\M_i,1}(s_1) - V^{\pi_k}_{\M_i,1}(s_1) \right)
    \leq & \sum_{k=1}^{K} \sum_{i=1}^{N} \sum_{h=1}^{H}\sqrt{\frac{2S \log(8SANHK)}{\max\{1,N_{\M_i,k,h}(s_{\M_i, k,h},a_{\M_i, k,h})\}}} + O(\sqrt{NHK\log(KH)}).
\end{align*}
By Cauchy-Schwarz inequality, we have
\begin{align*}
    \sum_{k=1}^{K} \sum_{i=1}^{N} \left( V^{\hat{\pi}^*}_{\M_i,1}(s_1) - V^{\pi_k}_{\M_i,1}(s_1) \right) \leq O\left(N\sqrt{H^2S^2AK\log(SAHNK)} + NHS^2A\right).
\end{align*}

Since $\hat{\pi}$ is uniformly selected from the policy set $\{\pi_k\}_{k=1}^{K}$. By Markov's inequality, the following inequality holds with probability at least $5/6$,
\begin{align*}
    \sum_{i=1}^{N} \left(V^{\hat{\pi}^*}_{\M_i,1}(s_1) - V^{\hat{\pi}}_{\M_i,1}(s_1)\right) \leq \frac{1}{6K} \sum_{k=1}^{K} \sum_{i=1}^{N} \left( V^{\hat{\pi}^*}_{\M_i,1}(s_1) - V^{\pi_k}_{\M_i,1}(s_1) \right).
\end{align*}

With our choice of $K = C_{2} S^2 A H^{2} \log (S A H / \epsilon) / \epsilon^{2}$, we have 
\begin{align*}
    \frac{1}{N}\sum_{i=1}^{N} \left(V^{\hat{\pi}^*}_{\M_i,1}(s_1) - V^{\hat{\pi}}_{\M_i,1}(s_1)\right) \leq \epsilon/3.
\end{align*}
\end{proof}

\subsection{High probability bound}
\label{appendix: high prob no fine-tuning}

To obtain a high probability bound with probability at least $1-\delta$. Our idea is to first execute Algorithm~\ref{alg: training without fine-tuning} independently for $O(\log(1/\delta))$ times and obtains a policy set with cardinality $O(\log(1/\delta))$. We evaluate the policies in the policy set on the sampled $M$ MDPs, and then return the policy with the maximum empirical value. The algorithm is described in Algorithm~\ref{alg: training without fine-tuning, delta prob}

\begin{algorithm}[ht]
\caption{OMERM with High Probability}
\label{alg: training without fine-tuning, delta prob}
    \begin{algorithmic}[1]
    \STATE \textbf{Input}: target accuracy $\epsilon > 0$, high probability parameter $\delta$
    \STATE $N = C_1 \log \left(\mathcal{N}\left(\Pi, \epsilon/(12H), d\right)/\delta\right)/\epsilon^{2}$ for a constant $C_1 > 0$
    \STATE $N_1 = \log(2/\delta)/\log(1/6)$, $N_2 =C_2 \log(N N_1/\delta)/\epsilon^2$ for a constant $C_2 > 0$
    \STATE Sample $N$ tasks from the distribution $\mathbb{D}$, denoted as $\{\M_1, \M_2, \cdots, \M_N\}$
    \FOR {$\xi = 1,2,\cdots, N_1$}
        \STATE Execute Algorithm~\ref{alg: training without fine-tuning} with target accuracy $\epsilon/2$ and task set $\{\M_1, \M_2, \cdots, \M_N\}$, and obtain a policy $\pi_{\xi}$
        \FOR{task index $i = 1,2,\cdots, N$}
            \STATE Execute $\pi_{\xi}$ on task $\M_i$ for $N_2$ times, denoted the average total rewards as $V_{\xi,i}$
        \ENDFOR
        \STATE Calculate the average value $V_{\xi} = \frac{1}{N} \sum_{i=1}^{N}V_{\xi,i}$
    \ENDFOR
    
    \STATE Output: the policy $\pi_{\xi^*}$ with $\xi^* = \argmax_{\xi \in [N_1]} V_{\xi}$
  \end{algorithmic}
\end{algorithm}

We have the following theorem for Algorithm~\ref{alg: training without fine-tuning, delta prob}.
\begin{theorem}
\label{theorem_woft_nearoptimal, delta prob}
With probability at least $1-\delta$, Algorithm~\ref{alg: training without fine-tuning, delta prob} can output a policy $\hat{\pi}$ satisfying $\E_{\M^*\sim \D}[V_{\M^*}^{\pi^*(\D)} - V_{\M^*}^{\hat \pi}]  \leq \epsilon$ with $\mathcal O\left(\frac{ \log \left(\mathcal{N}^\Pi_{\epsilon/(12H)}/\delta\right)}{\epsilon^{2}}\right)$ MDP instance samples during training. The number of episodes collected for each task is bounded by $\mathcal O\left(\frac{H^2S^2A\log(SAH)\log(1/\delta)}{\epsilon^2}\right)$.
\end{theorem}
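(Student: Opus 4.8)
The plan is to treat Theorem~\ref{theorem_woft_nearoptimal} (and its building block Lemma~\ref{lemma: sample complexity theorem}) as a constant-probability primitive and to amplify its success probability to $1-\delta$ by the standard ``run several times and select the best'' paradigm, where the selection is carried out using only the $N$ sampled training MDPs. Write $\pi^* = \pi^*(\D)$, let $\hat L(\pi)=\frac1N\sum_{i=1}^N V^\pi_{\M_i,1}(s_1)$ denote the empirical objective on the shared sample, and let $\hat\pi^* = \argmax_{\pi\in\Pi}\hat L(\pi)$ be its maximizer. I first isolate the two distinct sources of randomness: the one-time draw of the shared task set $\{\M_1,\dots,\M_N\}$, and the fresh internal randomness of each of the $N_1$ calls to Algorithm~\ref{alg: training without fine-tuning}.

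The key point is that the uniform generalization bound from the proof of Theorem~\ref{theorem_woft_nearoptimal}, namely $\sup_{\pi\in\Pi}|\hat L(\pi)-\E_{\M\sim\D}V^\pi_{\M,1}(s_1)|\le g$, depends only on the shared sample. Re-running the covering argument (a sub-Gaussian tail bound plus a union bound over an $\epsilon/(12H)$-cover, then the $2H\epsilon_0$ transfer to all of $\Pi$) with failure probability $\delta/3$ shows this holds with probability $\ge 1-\delta/3$ once $N=\Theta(\epsilon^{-2}\log(\mathcal N(\Pi,\epsilon/(12H),d)/\delta))$; this is exactly the claimed $\mathcal O(\log(\mathcal N^\Pi_{\epsilon/(12H)}/\delta)/\epsilon^2)$ MDP-sample count, and because the bound is uniform over $\Pi$ it automatically applies to the data-dependent policies $\pi_\xi$ and to $\hat\pi^*$. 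Condition henceforth on this event; it gives $\E_\M[V^{\pi^*}_{\M,1}(s_1)-V^{\hat\pi^*}_{\M,1}(s_1)]\le 2g$ via $\hat L(\hat\pi^*)\ge\hat L(\pi^*)$.

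Next I amplify over runs. Conditioned on the good sample, each run uses independent internal randomness, and by Lemma~\ref{lemma: sample complexity theorem} (the training regret bound together with the uniform-random choice of the output from $\{\pi_k\}$ and Markov's inequality) each run outputs, with some constant probability $c>0$, a policy $\pi_\xi$ with $\hat L(\hat\pi^*)-\hat L(\pi_\xi)\le O(\epsilon)$, using $K=\Theta(H^2S^2A\log(SAH)/\epsilon^2)$ episodes per task. Since these events are independent across runs given the sample, choosing $N_1=\Theta(\log(1/\delta))$ guarantees that with probability $\ge 1-\delta/3$ at least one ``good'' index $\xi_0$ achieves this. The selection step then estimates $\hat L(\pi_\xi)$ by the Monte-Carlo average $V_\xi$; since each rollout's total reward is sub-Gaussian with mean in $[0,1]$, a sub-Gaussian tail bound and a union bound over the $N_1N$ pairs $(\xi,i)$ give $\max_\xi|V_\xi-\hat L(\pi_\xi)|\le m$ with probability $\ge 1-\delta/3$ once $N_2=\Theta(\epsilon^{-2}\log(N N_1/\delta))$, a lower-order contribution to the episode count.

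Putting the three events together (total failure $\le\delta$), the chain
$\E_\M V^{\pi_{\xi^*}}_{\M,1}(s_1)\ge \hat L(\pi_{\xi^*})-g\ge V_{\xi^*}-m-g\ge V_{\xi_0}-m-g\ge \hat L(\pi_{\xi_0})-2m-g\ge \hat L(\hat\pi^*)-O(\epsilon)-2m-g\ge \E_\M V^{\pi^*}_{\M,1}(s_1)-O(\epsilon)$,
using generalization, Monte-Carlo accuracy, the optimality $\xi^*=\argmax_\xi V_\xi$, the goodness of $\xi_0$, and $\hat L(\hat\pi^*)\ge\hat L(\pi^*)$, yields $\E_\M[V^{\pi^*}_{\M,1}(s_1)-V^{\pi_{\xi^*}}_{\M,1}(s_1)]\le\epsilon$ after absorbing constants (taking $g,m=\Theta(\epsilon)$ and running Algorithm~\ref{alg: training without fine-tuning} at accuracy $\epsilon/2$). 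The per-task episode total is $N_1(K+N_2)=\Theta(H^2S^2A\log(SAH)\log(1/\delta)/\epsilon^2)$, dominated by $N_1K$. I expect the main subtlety to be the bookkeeping that separates the shared-sample generalization event (boosted only through the $\log(1/\delta)$ factor inside $N$, and necessarily uniform over $\Pi$ so that it covers the random $\pi_\xi$) from the per-run internal randomness (boosted through the $N_1$ independent repetitions); conflating the two would break either the independence needed for boosting or the validity of evaluating data-dependent policies on the very sample that produced them.
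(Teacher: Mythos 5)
Your proposal is correct and follows essentially the same route as the paper's own proof: a uniform covering-number generalization bound over the shared sample of $N$ MDPs (with $\delta$ folded into the logarithm), probability amplification via $N_1=\Theta(\log(1/\delta))$ independent runs of Algorithm~\ref{alg: training without fine-tuning} at accuracy $\epsilon/2$, Monte-Carlo evaluation of each returned policy with a union bound over the $N_1 N$ pairs, and argmax selection, combined by the same chain of inequalities. Your bookkeeping---defining the per-run success event through the true empirical objective $\hat L(\pi_\xi)$ rather than its Monte-Carlo estimate $V_\xi$---is in fact slightly cleaner on the independence point than the paper's treatment, but the argument is the same.
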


The proof follows the proof idea of Theorem~\ref{theorem_woft_nearoptimal}, with only the difference in the bound on the empirical risk. We first prove the following lemma.
\begin{lemma}
\label{lemma: sample complexity theorem, delta prob}
With probability at least $1-\delta_2$, Algorithm~\ref{alg: training without fine-tuning, delta prob} can return a policy $\pi_{\xi^*}$ satisfying
\begin{align*}
    \frac{1}{N} \sum_{i=1}^{N} V^{\hat{\pi}^*}_{\mathcal{M}_i,1} (s_1) - \frac{1}{N} \sum_{i=1}^{N} V^{\pi_{\xi^*}}_{\mathcal{M}_i,1} (s_1) \leq \frac{\epsilon}{3},
\end{align*}
where $\hat{\pi}^*$ is the empirical maximizer, i.e. $\hat{\pi}^* = \argmax_{\pi \in \Pi} \frac{1}{N} \sum_{i=1}^{N} V^{\pi}_{\mathcal{M}_i,1} (s_1)$.
\end{lemma}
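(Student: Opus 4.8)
The plan is to read Algorithm~\ref{alg: training without fine-tuning, delta prob} as a confidence-amplification wrapper around the base procedure of Algorithm~\ref{alg: training without fine-tuning}, and to control its two sources of randomness separately: the $N_1$ independent training runs and the evaluation rollouts used to form the scores $V_\xi$. I would assign a failure budget of $\delta_2/2$ to each stage and combine them by a union bound at the end. For the first stage, note that each invocation of Algorithm~\ref{alg: training without fine-tuning} is run with target accuracy $\epsilon/2$ on the \emph{same} fixed task set $\{\M_1,\dots,\M_N\}$, so by Lemma~\ref{lemma: sample complexity theorem} it returns, independently, a policy $\pi_\xi$ obeying $\frac1N\sum_{i=1}^N\big(V^{\hat\pi^*}_{\M_i,1}(s_1)-V^{\pi_\xi}_{\M_i,1}(s_1)\big)\le \epsilon/6$ with probability at least $5/6$. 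Because the runs draw fresh internal randomness, these success events are independent, so the probability that \emph{every} run fails is at most $(1/6)^{N_1}$; choosing $N_1=\Theta(\log(1/\delta_2))$ makes this at most $\delta_2/2$. Hence with probability at least $1-\delta_2/2$ there is an index $\xi_0\in[N_1]$ whose policy is $\epsilon/6$-optimal on the empirical task set.

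For the second stage I would show the selection step does not spoil this candidate. The score $V_\xi=\frac1N\sum_{i=1}^N V_{\xi,i}$ is a Monte Carlo estimate of $\bar V_\xi:=\frac1N\sum_{i=1}^N V^{\pi_\xi}_{\M_i,1}(s_1)$, built from $N_2$ fresh rollouts per task. Conditioning on the already-produced policies $\{\pi_\xi\}_{\xi\in[N_1]}$, the rollouts are independent with the correct means, and by the bounded/subgaussian total-reward assumption their returns concentrate at the $1$-subgaussian rate, exactly as in the Azuma argument of Lemma~\ref{lemma: optimism, fine-tuning}. A Hoeffding bound followed by a union bound over the finite index set $[N_1]$ then yields $|V_\xi-\bar V_\xi|\le \epsilon/12$ simultaneously for all $\xi$ with probability at least $1-\delta_2/2$, provided $N_2=\Omega(\log(NN_1/\delta_2)/\epsilon^2)$, which is the value fixed in the algorithm.

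To combine, work on the intersection of the two good events, which has probability at least $1-\delta_2$. The rule $\xi^*=\argmax_\xi V_\xi$ forces $V_{\xi^*}\ge V_{\xi_0}$, so chaining the two $\epsilon/12$ deviations gives $\bar V_{\xi^*}\ge V_{\xi^*}-\epsilon/12\ge V_{\xi_0}-\epsilon/12\ge \bar V_{\xi_0}-\epsilon/6$. Adding the $\epsilon/6$ empirical suboptimality of $\pi_{\xi_0}$ established in the first stage yields $\frac1N\sum_{i=1}^N V^{\hat\pi^*}_{\M_i,1}(s_1)-\bar V_{\xi^*}\le \epsilon/3$, which is precisely the claim for $\pi_{\xi^*}$.

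The only step beyond bookkeeping is the return concentration in the second stage: I must guarantee that the realized episode return concentrates around $V^{\pi_\xi}_{\M_i,1}(s_1)$ at the $1/\sqrt{N_2}$ rate, and that the data-dependence of the evaluated policies does not break the union bound. Both are handled by conditioning on the policies \emph{before} drawing their evaluation rollouts and unioning over the index set $[N_1]$ (of size $\Theta(\log(1/\delta_2))$) rather than over $\Pi$; this is exactly where the subgaussian total-reward assumption enters, mirroring the use in Lemma~\ref{lemma: optimism, fine-tuning}. Everything else is routine accounting of the accuracy budget $\epsilon/6+\epsilon/12+\epsilon/12=\epsilon/3$.
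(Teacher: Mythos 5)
Your proof is correct and follows essentially the same route as the paper's: amplify the per-run $5/6$ success probability using independence of the $N_1$ training runs, establish uniform Hoeffding concentration of the Monte Carlo scores $V_\xi$ over the index set $[N_1]$, and chain the deviations through the $\argmax$ selection rule. The only difference is bookkeeping: you keep the training-success and evaluation-concentration events separate and intersect them at the end (budget $\epsilon/6+\epsilon/12+\epsilon/12$), whereas the paper folds both into combined per-run events $\eta_\xi$ (budget $3\cdot\epsilon/9$); your version is, if anything, slightly cleaner about where the independence across runs is invoked.
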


\begin{proof}
 By Lemma~\ref{lemma: sample complexity theorem}, for each $\xi \in [N_1]$, the following inequality holds with probability at least $5/6$, 
\begin{align}
    \label{inq: concen delta prob}
    \frac{1}{N} \sum_{i=1}^N V_{\mathcal{M}_i, 1}^{\hat{\pi}^*}\left(s_1\right)-\frac{1}{N} \sum_{i=1}^N V_{\mathcal{M}_i, 1}^{\pi_{\xi}}\left(s_1\right) \leq \frac{\epsilon}{9}.
\end{align}

In Algorithm~\ref{alg: training without fine-tuning, delta prob}, we evaluate the policies in the MDPs by executing the policies for $N_2$ times. By Hoeffding's inequality and the union bound over all $\xi \in [N_1]$ and $i \in [N]$, with probability at least $1-\delta_2/2$, we have
\begin{align*}
     \left|V_{\xi}-\frac{1}{N} \sum_{i=1}^N V_{\mathcal{M}_i, 1}^{\pi_{\xi}}\left(s_1\right)\right| \leq \frac{\epsilon}{9}, \forall \xi \in [N_1], i \in [N]
\end{align*}
We denote the above event as $\Lambda_2$. For each $\xi \in [N_1]$, we define $\eta_i$ as the event that  $\frac{1}{N} \sum_{i=1}^N V_{\mathcal{M}_i, 1}^{\hat{\pi}^*}\left(s_1\right) - V_{\xi} \leq \frac{2\epsilon}{9}$. By Inq.~\ref{inq: concen delta prob}, we have $\operatorname{Pr}\{\eta_i\} \geq 5/6$ under event $\Lambda_2$. Note that the event $\{\eta_i\}_{i=1}^{N_1}$ are independent with each other. Therefore, with probability at least $1- (1/6)^{N_1} = 1-\delta_2/2$, there exists $\xi_0$, such that the event $\eta_{\xi_0}$ happens. That is, there exists a policy $\pi_{\xi_0}$ such that \begin{align*}
    \frac{1}{N} \sum_{i=1}^N V_{\mathcal{M}_i, 1}^{\hat{\pi}^*}\left(s_1\right) - V_{\xi_0} \leq \frac{2\epsilon}{9}.
\end{align*}

By definition, $\xi^* = \argmax_{\xi \in [N_1]} V_{\xi}$. Therefore, we have $\frac{1}{N} \sum_{i=1}^N V_{\mathcal{M}_i, 1}^{\hat{\pi}^*}\left(s_1\right) - V_{\xi^*} \leq \frac{2\epsilon}{9}$. Under the event of $\Lambda_2$, we have the following inequality holds with probability at least $1-\delta_2/2$,
\begin{align*}
    \frac{1}{N} \sum_{i=1}^{N} V^{\hat{\pi}^*}_{\mathcal{M}_i,1} (s_1) - \frac{1}{N} \sum_{i=1}^{N} V^{\pi_{\xi^*}}_{\mathcal{M}_i,1} (s_1) \leq \frac{\epsilon}{3}.
\end{align*}
\end{proof}

\begin{proof}
 (Proof of Theorem~\ref{theorem_woft_nearoptimal, delta prob}) The proof follows the proof idea of Theorem~\ref{theorem_woft_nearoptimal}. We bound the empirical risk by Lemma~\ref{lemma: sample complexity theorem, delta prob}. With our choice of $\delta_2 = \delta/3$, we have with probability at least $1-\delta/2$,
 \begin{align}
 \label{inq: delta prob}
    \frac{1}{N} \sum_{i=1}^{N} V^{\hat{\pi}^*}_{\mathcal{M}_i,1} (s_1) - \frac{1}{N} \sum_{i=1}^{N} V^{\pi_{\xi^*}}_{\mathcal{M}_i,1} (s_1) \leq \frac{\epsilon}{3}.
\end{align}

Following the proof of Theorem~\ref{theorem_woft_nearoptimal}, we can similarly show that the following inequalities hold with probability $1-\delta/2$, 
\begin{align}
    \label{inq: generalization inq 1, delta prob}
    \left|\frac{1}{N} \sum_{i=1}^{N} V^{{\pi}^*}_{\mathcal{M}_i,1} (s_1) - \mathbb{E}_{\mathcal{M} \sim \nu} V^{{\pi}^*}_{\mathcal{M},1} (s_1) \right|  \leq \frac{\epsilon}{3}, \\
    \label{inq: generalization inq 2, delta prob}
    \left|\frac{1}{N} \sum_{i=1}^{N} V^{\hat{\pi}}_{\mathcal{M}_i,1} (s_1) - \mathbb{E}_{\mathcal{M} \sim \nu} V^{\hat{\pi}}_{\mathcal{M},1} (s_1) \right|  \leq \frac{\epsilon}{3}.
\end{align}

Combining the results in Inq.~\ref{inq: generalization inq 1, delta prob}, Inq.\ref{inq: generalization inq 2, delta prob} and Inq.~\ref{inq: delta prob}, we know that with probability at least $1-\delta$,
\begin{align*}
    \mathbb{E}_{\mathcal{M} \sim \D} \left[V^{\pi^*}_{\mathcal{M},1} (s_1) - V^{\hat{\pi}}_{\mathcal{M},1} (s_1)\right] \leq \epsilon.
\end{align*}
\end{proof}



\end{document}